\documentclass{article}
\PassOptionsToPackage{dvipsnames}{xcolor}

\usepackage{iclr2027_conference,times}
\usepackage[T1]{fontenc}
\usepackage[utf8]{inputenc}
\DeclareUnicodeCharacter{266B}{\textmusicalnote} % musical-note glyph in the Google Scholar title of MuSiQue
\usepackage{microtype}
\usepackage{xspace}

\usepackage{amsmath, amssymb, amsthm}
\usepackage{mathtools}
\usepackage{bm}

\usepackage{booktabs}
\usepackage{enumitem}
\usepackage{xcolor}
\definecolor{darkgreen}{rgb}{0,0.5,0}
\definecolor{gaingreen}{rgb}{0.00,0.62,0.22}  % 增益标注：亮绿
\definecolor{evopink}{HTML}{D6337F}  % gain labels in Table 1, EvoSteer pink
\usepackage{multirow}
\usepackage{colortbl}    % \cellcolor for the Part I / Part II band rows
\usepackage{adjustbox}   % max-width scaling of the main results table
\usepackage{graphicx}
\usepackage{wrapfig}     % Figure 1 is a half-width wrapfigure, as in FlowSteer
\newcommand{\std}[1]{{\scriptsize$_{\pm#1}$}}
\newsavebox{\objtabbox}   % Figure 5: the heatmap takes the height of the objectives table

\newif\ifdraftnums
\draftnumstrue

\usepackage{url}
\usepackage[hidelinks]{hyperref}
\usepackage{cleveref}
\crefname{assumption}{Assumption}{Assumptions}
\usepackage{needspace}   % keep a section heading with its first lines

\newcommand{\chg}[1]{}

\newcommand{\E}{\mathbb{E}}
\newcommand{\Prob}{\mathbb{P}}

\newcommand{\Rt}{R_{\eta}}
\newcommand{\Gph}[1]{\mathcal{G}^{(#1)}}
\newcommand{\Slib}[1]{\mathcal{S}^{(#1)}}
\newcommand{\PB}{P_{B}}
\newcommand{\PBz}{P_{B}^{0}}
\newcommand{\PF}{P_{F}}
\newcommand{\Fpsi}{F_{\psi}}
\newcommand{\dbres}{\delta}
\newcommand{\ex}{\mathrm{exec}}
\newcommand{\eps}{\epsilon}
\newcommand{\LTB}{\mathcal{L}_{\mathrm{flow}}}
\newcommand{\RtwoFlow}{R\textsuperscript{2} Flow\xspace}
\newcommand{\RtwoTB}{R\textsuperscript{2}TB\xspace}
\newcommand{\Var}{\operatorname{Var}}
\DeclareMathOperator{\Pa}{Pa}
\DeclareMathOperator{\Inn}{In}
\DeclareMathOperator{\Out}{Out}

\theoremstyle{definition}
\newtheorem{definition}{Definition}
\newtheorem{assumption}{Assumption}
\theoremstyle{plain}
\newtheorem{proposition}{Proposition}
\newtheorem{corollary}[proposition]{Corollary}
\newtheorem{lemma}[proposition]{Lemma}
\theoremstyle{remark}
\newtheorem{remark}{Remark}

\title{\RtwoFlow: Recursive Self-Improvement via\\
       Recursive Skill Evolution\vspace{-8pt}}

\author{Mingda Zhang$^{1}$,\ \ Qiang Huang$^{2}$,\ \ Yanjin Li$^{3}$,\ \ Zijia Wang$^{4}$, \\
\bfseries Qika Lin$^{5}$,\ \ Xiaoying Tang$^{1}$,\ \ Tiesunlong Shen$^{5}$ \\
{\small\mdseries $^{1}$The Chinese University of Hong Kong, Shenzhen\quad $^{2}$Fudan University} \\
{\small\mdseries $^{3}$University of Illinois at Urbana-Champaign\quad $^{4}$University of Oxford\quad $^{5}$National University of Singapore}
}

\iclrfinalcopy
\makeatletter
\def\@maketitle{\vbox{\hsize\textwidth\centering
{\LARGE\sc \@title\par}
\lhead{Under review as a conference paper at ICLR 2027}
\begin{tabular}[t]{c}\bf\rule{\z@}{24pt}\@author\end{tabular}%
\vskip 0.3in minus 0.1in}}
\makeatother

\begin{document}
\maketitle
\lhead{Under review as a conference paper at ICLR 2027}
\vspace{-14.4pt}% keeps the abstract and all later lines where they are in the anonymous version

% ============================================================
% ABSTRACT -- procedure-level recursive self-improvement framing.
% The method and all technical quantities remain unchanged; this paragraph
% makes the phase-wise loop the organising idea and avoids unsupported outcome
% claims while the experimental runs are still pending.
% ============================================================
\begin{abstract}
LLM-based agents can improve themselves across tasks by reusing and revising the skills they orchestrate into executable procedures. Flow-based training fits this loop: it samples procedures in proportion to reward, and the flow through each skill credits it for the next library revision. Three obstacles stand in the way of making this self-improvement reliable: flow training suffers strategy collapse over tree-structured histories; nonnegative flow-based credit rewards frequent use as if it were benefit; and library edits rest on the task reward the policy optimizes. We introduce \textbf{\RtwoFlow}, a recursive self-improvement framework that alternates policy learning, independent verification, and
versioned skill-library updates on a shared-state orchestration graph. The graph
merges histories that differ only in the order of independent steps, allowing
flow training to pool evidence across equivalent executions. A flow-share readout of the trained flow, invariant to the backward policy, and a separate signed utility rank which skills to change, verifier evidence decides
whether an edit is warranted, and a residual-variance plateau sets when to update. Committed edits reshape the graph the next policy learns on, realizing recursive skill evolution. Across question answering, mathematical reasoning,
interactive decision making, and code generation, \RtwoFlow improves task
accuracy and library-edit precision over heuristic orchestration, reinforcement
learning, and skill-evolution baselines, and transfers across executors. Code is available at \url{https://github.com/beita6969/r2flow}.
\end{abstract}

% ============================================================
% INTRODUCTION -- five paragraphs, following SkillFlow's layout:
%   P1 "In recent years"        (era background)
%   P2 "In this process"        (orchestration: static -> dynamic -> trainable),
%                                ending on the "However" pivot
%   P3 "To address these issues"(three method families)
%   P4 "However, existing methods still face three challenges."
%   P5 "To address these challenges"(name, three designs, experiments)
% Author revisions of 2026-08 merged into P5 and the closing summary.
% Citations use natbib \citep{}; keys resolve against refs.bib.
% All arXiv entries verified against the arXiv API on 2026-09-11.
% ============================================================
\section{Introduction}

% Five paragraphs, with the final contribution paragraph recast around the
% phase-wise recursive self-improvement loop.

% ---------- Figure 1: half-width wrapfigure, FlowSteer layout
\begin{wrapfigure}[16]{r}{0.54\textwidth}
\setlength{\abovecaptionskip}{0pt}
\vspace{-10mm}
\centering
\includegraphics[width=7.45cm]{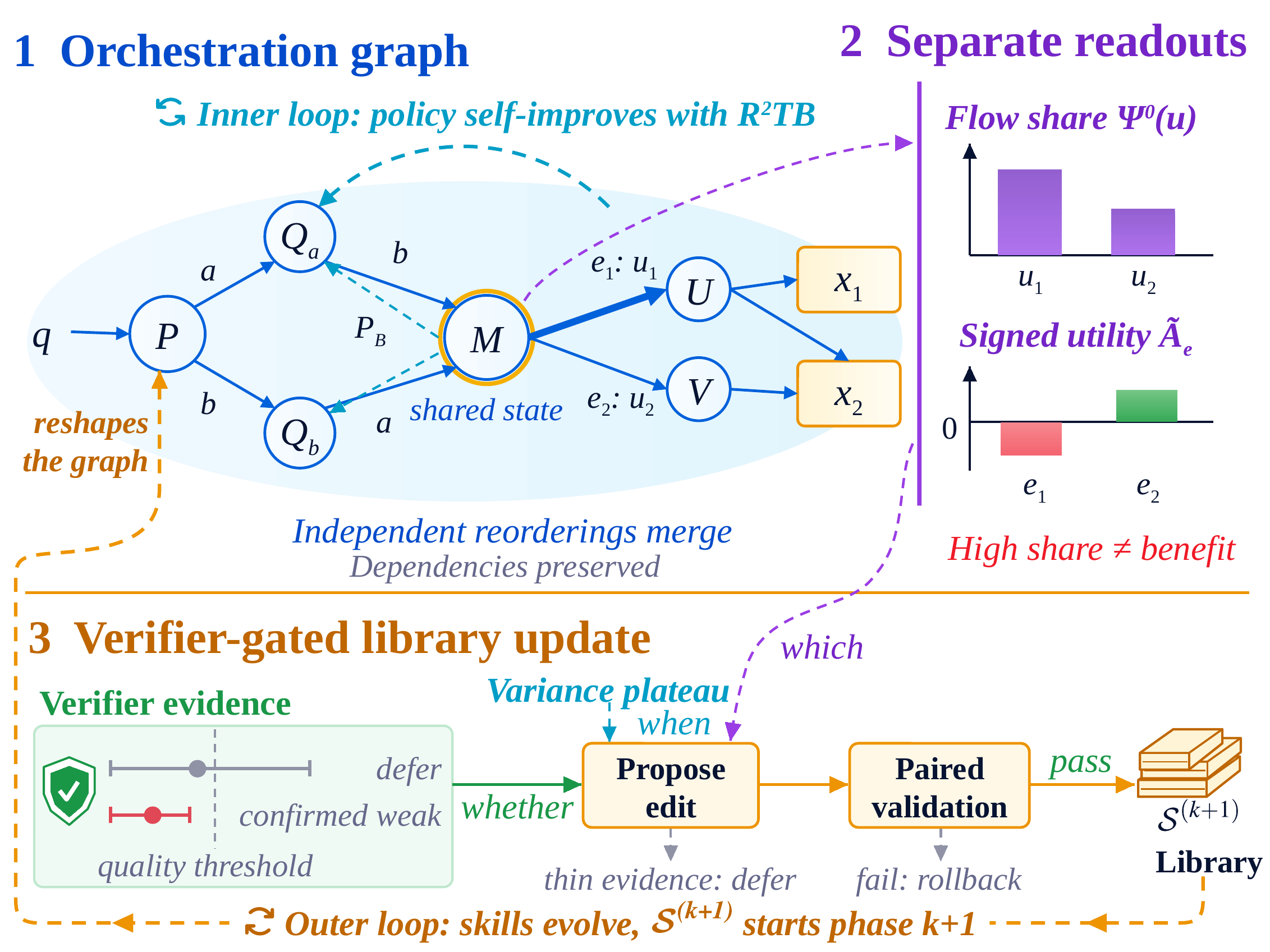}
\caption{\RtwoFlow trains a flow on shared states (inner loop), reads share and signed utility separately, and gates library edits on verified evidence (outer loop).}
\label{fig:overview}
\vspace{-8mm}
\end{wrapfigure}
% ---------- P1
In recent years, LLM-based agentic systems have come to orchestrate reusable
skills into executable procedures
\citep{yao2022react,wang2023voyager,yue2026static}, shifting the unit of
automation from a single answer to an entire procedure.
A central goal is therefore not only to complete a task but to improve the
procedure used later
\citep{fang2025comprehensive,ren2026self,chen2026recursive}.
This makes self-improvement a recursive loop: each phase trains the policy on a versioned library and updates that library from what it learned, and the next phase repeats this on the new library. Without human review, such a loop must admit each update on evidence outside the training reward.

% ---------- P2
Task orchestration organizes primitive actions and reusable skills into
structured trajectories on an orchestration graph (\Cref{fig:overview},
panel~1), where each node is an execution state and each edge is one
orchestration event, which makes complex tasks more controllable and composable
\citep{zhuge2024language,zhang2025aflow,yun2026graph,tieu2026inference}.
Orchestration has accordingly evolved from static human-authored pipelines to
graphs generated or edited at run time
\citep{zhang2025aflow,zhang2026flowsteer,yue2026static}, and further to
policies trained directly from execution feedback
\citep{zhang2026skillflow,zhang2026reinforcement}.
However, these policies are still trained on tree-structured histories, where the
same steps taken in a different order are never recognised as one procedure.

% ============================================================
% Figure 2: four orchestration paradigms (figures/fig2.pdf).
% Placed in the source right after P2 so that it is encountered on page 2 and
% lands at the top of that page.
\begin{figure}[t]
\centering
\includegraphics[width=\linewidth]{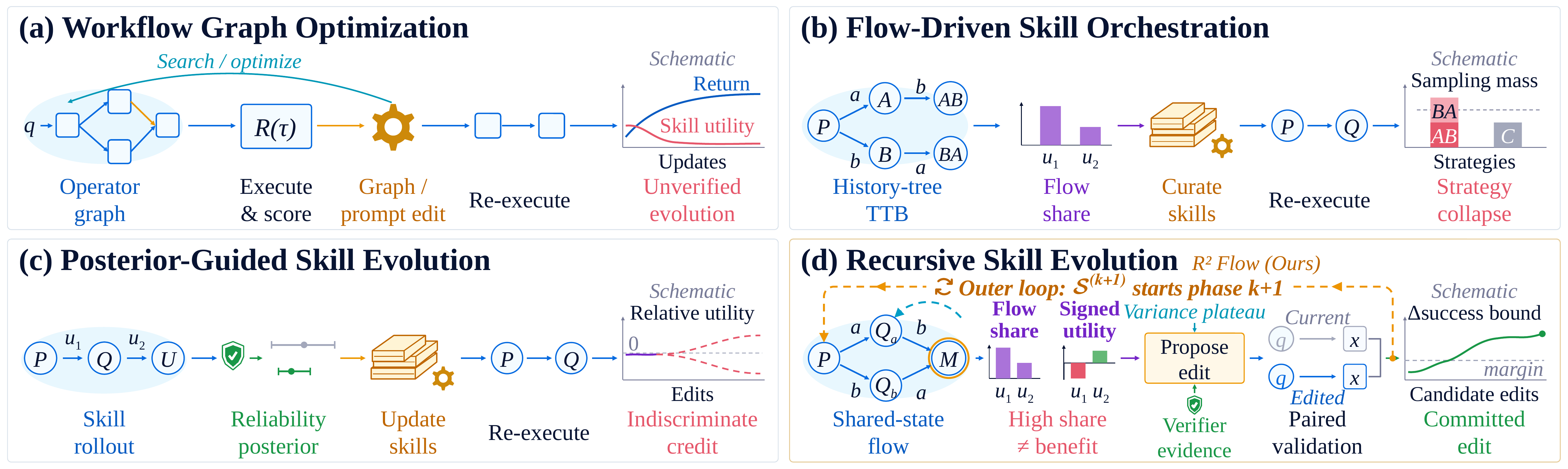}
\caption{Four orchestration paradigms. (a) Workflow graph optimization hides a
harmful skill in the return. (b) Flow-driven orchestration learns $AB$ and $BA$
twice on a history tree, so evidence never pools. (c) Posterior-guided evolution scores reliability, not signed utility, so success reads as benefit. (d) \RtwoFlow pools evidence on shared states, separates share from utility,
and commits verified edits.}
\label{fig:paradigms}
\vspace{-3pt}
\end{figure}

% ---------- P3 : three method families, idea + representative work + contribution
To address these issues, \emph{graph-based orchestration} (\Cref{fig:paradigms}a) retrieves prebuilt
skills or searches and edits workflow graphs at run time
\citep{zhang2025aflow,zhuge2024language,yun2026graph,li2026graphflow,tieu2026inference}.
\emph{Step-level credit assignment} (\Cref{fig:paradigms}b) converts a terminal reward into finer-grained
signals through process reward models, hindsight advantage estimation, and GFlowNet balance objectives that sample in proportion to reward \citep{bengio2021flow,malkin2022trajectory,zhang2026skillflow,tao2026trace,tan2026hindsight}.
\emph{Dynamic skill libraries} (\Cref{fig:paradigms}c) grow capability by curating skills with
LLM-as-judge prompting, execution feedback, or posterior beliefs over skill
reliability
\citep{wang2023voyager,wu2026bayesian,tang2026wikiskill,ouyang2026skillos}.

% ---------- P4 : three challenges, same names as abstract S2
However, joining these lines of work into one loop faces three challenges.
\textbf{(i)~Strategy collapse.} When a state is the full interaction history,
every state has exactly one parent, so the orchestration graph is a tree and the backward policy is trivially one \citep{malkin2022trajectory,fawkes2026f,wang2026rooted}.
Executions that differ only in the order of independent steps sit on different
branches and never share evidence; the flow then spreads probability over reorderings of one procedure rather than across genuinely different strategies, so a procedure with more admissible orderings receives proportionally more sampling mass, and more weight in the next library edit \citep{bengio2021flow,zhang2026skillflow}.
\textbf{(ii)~Indiscriminate credit assignment.} Flow shares and usage counts are non-negative by construction and reliability posteriors score whether a call succeeds, so none
says whether a skill helps, and a frequently invoked harmful skill looks critical and survives into the next library \citep{zhang2026skillflow,wu2026bayesian,li2026dynamic}.
\textbf{(iii)~Unverified skill evolution.} Edits are admitted on the
same task reward whose attribution is in question, and on point estimates that
cannot separate insufficient evidence from confirmed failure, so a skill can be pruned before its evidence accumulates \citep{zhang2025aflow,wu2026bayesian,zhang2026library,zhang2026credit,zhang2026reasoning}.

% ---------- P5 : recursive self-improvement loop and its implementation
We introduce \textbf{\RtwoFlow} (\Cref{fig:paradigms}d), running this loop through three designs, each serving the library update.
% Design 1 <- abstract S4
The first, the \textbf{Orchestration Graph}, replaces the history tree of trained orchestration policies: it merges histories differing only in the order of independent steps into one state, so their evidence pools at that state; when a merged state has
several legal predecessors, the backward policy is a genuine choice among
them.
% Design 2 <- abstract S5
Built on the graph, we further propose \textbf{flow share}: a flow network is trained on the shared states with the balance objective \RtwoTB, and each skill's share of the flow is read from the trained policy's rollouts weighted by their residuals, a readout invariant to the backward policy. In contrast to credit that reads use as benefit, it is kept separate from a signed utility that says whether the skill helps.
% Design 3 <- abstract S6
\RtwoFlow then performs \textbf{verifier-gated skill-library updates}, replacing admission on the task reward:
independent verifiers label skill invocations, edge flow picks which ones to verify, and an edit refines or prunes a skill only when verified evidence
supports it, waiting otherwise. The \RtwoTB residual also times the loop: a plateau in its variance ends a phase and triggers the next library update. Together the designs close one recursive loop: each phase trains on the library the previous phase committed, and its readouts, posterior and residual decide the next commit.

% Experiments <- abstract S7
We evaluate on twelve datasets across four task domains, six held out, reporting task accuracy, executor
transfer, and library-edit precision against heuristic orchestration,
reinforcement learning \citep{zhang2026flowsteer}, and skill-evolution baselines \citep{ni2026trace2skill,yang2026skillopt}.

% ============================================================
\section{Related Work}

\paragraph{Self-Improving Agents.}
Self-improving agents rewrite scaffolding, meta-agent designs, or their own
logic and code
\citep{zelikman2023self,hu2025automated,yin2025godel,zhang2026darwin}, in loops
framed as bounded or open-ended self-modification
\citep{chen2026recursive}.
Skill evolution co-trains the library with the policy \citep{xia2026skillrl,zhang2026skillflow}, Bayesian curation adds finite-sample uncertainty, and verification screens edits \citep{guo2026skill,wu2026bayesian,chen2026skillcat,zhang2026coevoskills}.
Most admit an edit on task-level outcomes, so the edit is judged
by the signal whose attribution is in question, where an optimiser exploits its
evaluator \citep{thaman2026reward}.
\RtwoFlow instead admits library edits on verifier labels outside the reward.

\paragraph{Agent Task Orchestration.}
LLM agents orchestrate tasks in loops that plan, act, and revise on execution
feedback \citep{yao2022react,yue2026static}.
Existing work spans operator graphs generated by search or edited under
structured feedback
\citep{zhang2025aflow,zhuge2024language,zhang2026flowsteer,li2026graphflow,tieu2026inference},
reusable skill memory, and self-evolving architectures
\citep{wang2023voyager,tang2026wikiskill,xu2026hyperskill,bai2026skilldag,lin2026muse}.
Trained orchestration policies define states over raw histories, so a procedure and its reorderings never share statistics.
\RtwoFlow instead trains its flow on a dependency-aware shared-state graph of executable histories, where reorderings meet at one state \citep[as in reasoning DAGs;][]{yu2024flow}.

% ============================================================
\section{Preliminaries}
% ============================================================

% --- Block 1 : the graph.  [SkillFlow Def.1: 43w, no equation]
\noindent\textbf{Definition 1: Orchestration State Graph.}
Task orchestration is a directed graph $\mathcal G$ whose vertices are interaction histories $H_t$ and whose edges are orchestration events.
The update $H_t=H_{t-1}\oplus(r_t,e_t,o_t^{\ex})$ appends one record, so $\mathcal G$ is acyclic and every non-root history has one parent.

% --- Block 2 : the trajectory.  [SkillFlow Def.2: 49w, one equation]
\noindent\textbf{Definition 2: Orchestration Trajectory.}
A complete path from the initial history $H_0$ to a terminal state defines an orchestration trajectory:
\begin{equation}
  \tau = \bigl\{(r_t,\ e_t,\ o_t^{\ex})\bigr\}_{t=1}^{T}\ \Rightarrow\ y_q ,
\end{equation}
where $r_t$ is the reasoning text, $e_t=(u_t,\mathrm{args}_t)$ the
event, and $o_t^{\ex}$ the execution feedback. It ends
on acceptance or budget exhaustion; its terminal outcome $x$ earns a reward
$R(x)\in[0,1]$.

% --- Block 3 : problem statement.  [SkillFlow 102w: flow network in words, one target equation;
%     the full flow-network conditions are in Appendix A/B]
\noindent\textbf{Problem Statement.}
Given a task $q$ and an environment $\mathcal{E}$ with a frozen executor
$\mathcal{M}_{\ex}$, we augment the graph with a non-negative edge flow
\citep{bengio2021flow,bengio2023gflownet} that is conserved at every non-terminal
state, with terminal condition $F(x)=\Rt(x)$ for the tempered reward
$\Rt(x)=(R(x)+\epsilon)^{\eta}$.
The forward and backward policies are the normalised edge flows,
$\PF(e\mid s)=F(s\!\to\! e)/F(s)$ and $\PB((s,e)\mid s')=F(s\!\to\! e)/F(s')$, and a
flow meeting the terminal condition is called \emph{reward-matching}.
We seek a state map $\sigma$ that merges reorderings of independent steps, preserves
legal events, successors and terminal reward, and induces a graph admitting a reward-matching
flow with terminal marginal proportional to the tempered reward:
\begin{equation}
  \PF(x\mid q) \;=\; \frac{\Rt(x)}{Z^\star(q)},
  \qquad
  Z^\star(q)=\sum_{x\in\mathcal X}\Rt(x),
\end{equation}
where $Z^\star(q)$ is the true partition function.
The flow must also give a per-skill attribution well defined on that
graph, to rank the edits turning the phase-$k$ library $\Slib{k}$ into $\Slib{k+1}$ (Appendices~A and~D).

% ============================================================
% SECTION 4 -- Methodology.
% House rules copied from SkillFlow Sec. 4:
%   (a) bold run-in paragraph heads, never \subsubsection
%   (b) every block = lead sentence ending in a colon -> numbered display
%       equation -> a single "where ..." gloss sentence
%   (c) exactly one Proposition per subsection, stated in one plain sentence
%       with NO mathematics, proof deferred to experiments + appendix
%   (d) one interpretation sentence per key quantity, using an em-dash aside
%   (e) mean sentence length 15-20 words (half the Introduction's)
% Author revisions of 2026-08 merged throughout.
% ============================================================
\section{Methodology: \RtwoFlow}
\label{sec:method-anchor}
% ============================================================
% Figure 3: framework detail (figures/fig3.pdf).
\begin{figure}[t]
\centering
\includegraphics[width=\linewidth]{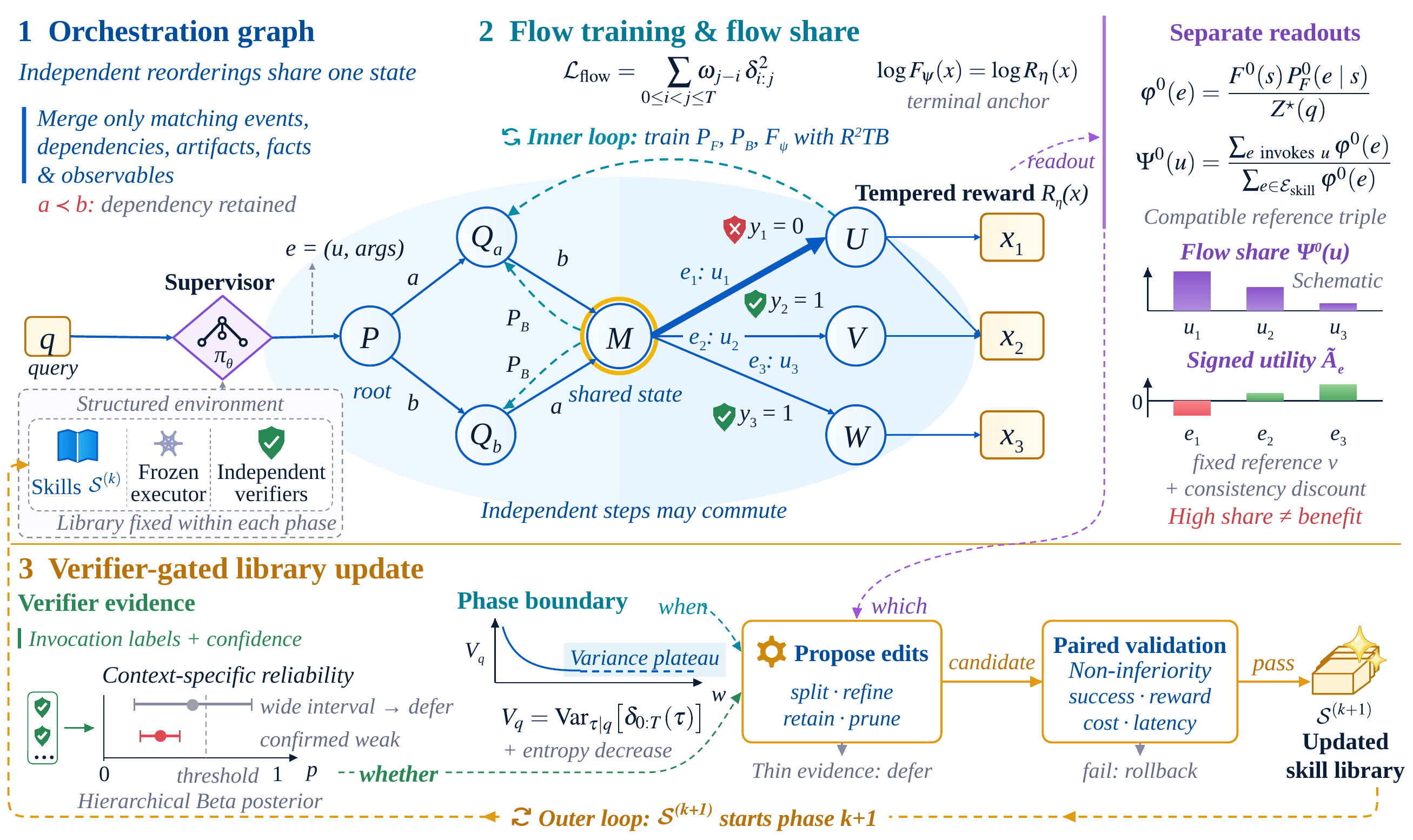}
\caption{The \RtwoFlow framework. (1) The Supervisor $\pi_\theta$ acts in a
structured environment where histories differing only in the order of
independent steps share one state. (2) $\PF$, $\PB$ and $\Fpsi$ are
trained with \RtwoTB; flow share $\Psi^0(u)$ and signed utility
$\widetilde{A}_e$ are read separately. (3) Verifier evidence, the two readouts
and the variance plateau decide whether, which and when to edit.}
\label{fig:framework}
\end{figure}

% Opener: roadmap sentence with figure reference.
In \Cref{fig:framework}, \RtwoFlow nests two self-improvement loops: within a phase (inner loop), the policy improves on its own rollouts with \RtwoTB until the residual variance plateaus; across phases (outer loop), verified edits update the library, and with it the next phase's graph. The graph (Section~4.1) carries a flow trained with \RtwoTB (Section~4.2) whose readouts and residual steer the edits (Section~4.3).

% ============================================================
\subsection{Orchestration Graph and Task Modeling}
% ============================================================
\RtwoFlow follows a Supervisor--Executor
paradigm \citep{yao2022react}: a trainable Supervisor $\pi_\theta$ interacts with
a structured environment $\mathcal{E}$ whose states are \textbf{shared states} rather than raw histories.

\noindent\textbf{Structured Environment} $\mathcal{E}$.
It holds the skill library, a verifier suite, and the executor:
\begin{equation}
  \mathcal{E} = (\mathcal{S},\ \mathcal{Y},\ \mathcal{M}_{\ex}),
\end{equation}
where $\mathcal{S}$ is the dynamic skill library, $\mathcal{Y}$ a suite of
independent verifiers labelling single skill invocations, and
$\mathcal{M}_{\ex}$ the pluggable executor. The library changes at phase boundaries, so the action space is fixed within a phase and every trajectory in
that phase is sampled against one library version.

\noindent\textbf{Shared States and the Merge Rule.}
A raw history $H_t$ (Definition~1) maps to its shared state by $\sigma$:
\begin{equation}
  s_t = \sigma(H_t) = \bigl(E_t,\ \prec_t,\ \mathcal{A}_t,\ \mathcal{V}_t,\ \Omega_t\bigr),
\end{equation}
where $E_t$ is the set of committed events, $\prec_t$ their dependency order,
$\mathcal{A}_t$ the canonicalised artifacts, $\mathcal{V}_t$ the verified facts,
and $\Omega_t$ the environment observables.
For example, independent checks $A$ and $B$ can run in either order, so
$A\!\rightarrow\!B$ and $B\!\rightarrow\!A$ map to one state; if $B$ consumes
$A$'s output, the dependency $A\prec B$ is retained and the orders stay
distinct (legality conditions on $\sigma$ in Appendix~A).

\noindent\textbf{In-Edges and Rank.}
A merged state has several in-edges, each from removing a $\prec$-maximal event:
\begin{equation}
  \Inn(s') = \bigl\{(s,e) : s' = s\oplus e\bigr\},
  \qquad
  \rho(s) = |E_s|,
\end{equation}
where $\rho$ is a state's rank; each edge commits an event, so $\rho$ increases
along paths and, with the step budget, the graph is a finite DAG. Each predecessor of $s'$
is an in-edge for the backward policy.

\noindent\textbf{Terminal States and Tempered Reward.}
The terminal state $x$
carries the tempered reward $\Rt(x)$ of Section~3, whose
exponent $\eta$ sets how sharply sampling favours high reward and whose
$\epsilon>0$ keeps it positive; $R$ is a function of the terminal state, so histories merged into one state share one reward.

\medskip\noindent\textbf{Proposition 1.} \emph{Merging reorderings yields a shared-state DAG; whenever a reachable state has multiple legal predecessors, the backward policy is a genuine choice among them.} (Empirically, the shared-states and backward-policy ablations and strategy counts, RQ4 in \S\ref{sec:setup}; formally, Appendix~A.)

% ============================================================
\subsection{Flow Training and Flow Share}
% ============================================================
As shown in \Cref{fig:framework}, panel~2, over the shared-state graph from Section~4.1,
\RtwoFlow trains a flow network $(P_{F,\theta}, P_{B,\phi}, F_\psi)$ targeting the reward-matching flow of Section~3, whose forward policy $P_{F,\theta}$ is the Supervisor $\pi_\theta$; throughout, $\PF(e\mid s)$ denotes the one-step edge policy.
Beyond mode coverage, the flow serves the edit step: $F(s)$ is shared by every path through $s$, so where paths merge, the reference flow $F^0$ gives the readout of Section~4.3, alongside the signed utility below.

\noindent\textbf{Forward Policy $\PF$ and Backward Policy $\PB$.}
An event is a skill identifier with structured arguments, so the forward policy
factorises, while the backward policy is normalised over in-edges:
\begin{equation}
  \PF(e\mid s) = \PF(u\mid s)\,\PF(\mathrm{args}\mid u,s),
  \qquad
  \PB\bigl((s,e)\mid s'\bigr)
  = \frac{\exp f_\phi(s,e,s')}{\sum_{(\bar s,\bar e)\in\Inn(s')}\exp f_\phi(\bar s,\bar e,s')},
\end{equation}
where $f_\phi$ scores an in-edge from the hindsight state. Each in-edge removes a different $\prec$-maximal event, so $\PB$ spreads training credit over the skills that reach $s'$. On a history tree every state has one in-edge, so $\PB\equiv1$ and this credit is never split (Appendix~A).

\noindent\textbf{Recursive-Residual Trajectory Balance (\RtwoTB).}
Given $\PF$, $\PB$, a log-flow head $\log\Fpsi$ and the reasoning text $r_t$, a sub-trajectory from $s_i$ to $s_j$ has the balance residual (SubTB;
\citealp{madan2023learning}):
\begin{equation}
  \delta_{i:j} = \ell(s_i) + \sum_{t=i+1}^{j}\log\PF(e_t\mid s_{t-1},r_t)
               - \ell(s_j) - \sum_{t=i+1}^{j}\log\PB\bigl((s_{t-1},e_t)\mid s_t\bigr).
\end{equation}
The training objective is the weighted squared residual, with the terminal flow
anchored to the reward:
\begin{equation}
  \mathcal{L}_{\mathrm{flow}} = \sum_{0\le i<j\le T}\omega_{j-i}\,\delta_{i:j}^{2},
  \qquad
  \ell(s)=\log\Fpsi(s)+b_d\;\,(s\notin\mathcal X),\quad \ell(x)=\log\Rt(x),
\end{equation}
where $\Fpsi(s_0)=Z_\theta(q)$ is the learned partition function, $\omega_{j-i}$ decays geometrically in $j-i$ (as in SubTB) and is normalised per trajectory, and the
per-domain bias $b_d$ absorbs a shared offset of the terminal residuals (Appendix~B). Here $j=i+1$ recovers detailed balance and $(i,j)=(0,T)$ trajectory balance, so
one loss interpolates between local bootstrapping and trajectory variance, and
the target terminal law holds at zero loss (\cref{prop:subtb-exact,cor:terminal-law}). \RtwoTB is sub-trajectory balance on the shared-state graph with the per-domain bias $b_d$ and the reasoning-conditioned forward term, and its residual variance sets the phase boundary of Section~4.3.

\noindent\textbf{Flow Share and Signed Utility.}
Flow share measures how much of the reward-matching flow passes through each skill: at exact balance, a compatible reference triple $(F^0,\PF^0,\PBz)$ makes the normalised edge flow the probability of traversing that edge, and a skill's share sums them over its edges:
\begin{equation}
  \varphi^0(e) = \frac{F^0(s)\,\PF^0(e\mid s)}{Z^\star(q)},
  \qquad
  \Psi^0(u) = \frac{\sum_{e\ \text{invokes}\ u}\varphi^0(e)}
                   {\sum_{e\in\mathcal E_{\mathrm{skill}}}\varphi^0(e)},
\end{equation}
where $F^0$ and $\PF^0$ are defined by the uniform reference backward
policy $\PBz$ (Appendix~D), which also fixes the edge share $\varphi^0(e)$ used to allocate verification.
The denominator sums over skill edges, so the shares of all skills sum to one. Every path to a terminal commits the same events, so $\Psi^0(u)$ is the share of $u$ among skill calls at terminals drawn in proportion to $\Rt$, for any backward policy (\Cref{cor:share-estimator}). It is estimated per query from trained-policy rollouts, each weighted by $e^{-\delta_{0:T}}$ normalised within the query, and averaged over queries; the normalisation cancels $\log Z_\theta(q)$ and $b_d$. Flow share is non-negative by construction and measures how much flow a skill carries; to measure whether it helps, following step-level and counterfactual credit \citep{zhang2026trca,lu2026agents,li2026counterfactual}, we add a signed utility $\widetilde{A}_e$ contrasted with a fixed reference policy $\nu$:
\begin{equation}
  \widetilde{A}_e = \Bigl(\widehat{\E}[\Rt\mid s,e] - \E_{e'\sim\nu}\bigl[\widehat{\E}[\Rt\mid s,e']\bigr]\Bigr)
                    \exp\!\bigl(-|\delta_e|/\tau_c\bigr),
\end{equation}
where $\widehat{\E}[\Rt\mid s,e]$ estimates the expected tempered reward after taking $e$ at $s$ under a fixed continuation policy (\cref{eq:util}), $\nu(\cdot\mid s)$ is uniform over the legal events at $s$, $\delta_e = \delta_{t-1:t}$ is the single-edge balance violation, and $\tau_c > 0$ scales a discount that weights each estimate by how well the flow balances on that edge. A positive $\widetilde{A}_e$ means the event beats an average legal alternative at $s$. In \Cref{fig:framework}, skill $u_1$ carries the most flow while its edge $e_1$ has negative signed utility.

\medskip\noindent\textbf{Proposition 2.} \emph{At zero \RtwoTB loss the Supervisor samples
terminals in proportion to the tempered reward, and flow share is conserved,
independent of the backward policy, consistently estimated from the rollouts of any trained flow, and separate from whether a skill helps.} (Empirically, the reference-readout and utility ablations, RQ4 in \S\ref{sec:setup}; formally, Appendices~A, B and D.)

% ============================================================
\subsection{Recursive Skill Evolution with Verifier-Gated Updates}
% ============================================================
Section~4.2 provides two readouts, the share $\Psi^0(u)$ and
the signed utility $\widetilde{A}_e$; both are functions of the reward the
library is judged by, so an edit's evidence must come from the verifiers of
Section~4.1 rather than that reward. This section (\Cref{fig:framework}, panel~3) combines them with
verifier evidence once per phase: verifier labels answer
\emph{whether}, the readouts answer \emph{which}, and the residual
answers \emph{when}.

Each phase applies one fixed update, repeated until the compute budget is spent. The phase state $X_k$ holds the library,
graph, policy and flow parameters and verifier posterior counts, and a log
$\mathcal{H}^{\mathrm{imp}}_k$ records each phase's edit decisions and
validation outcomes. Under fixed configuration $\lambda$,
the loop is the recursion $\Xi_{k+1}=\mathfrak{R}_{\lambda}(\Xi_k;\zeta_k)$ on
$\Xi_k=(X_k,\mathcal{H}^{\mathrm{imp}}_k)$, where $\mathfrak{R}_{\lambda}$ runs
training, diagnostics, verification and the library operator $\Phi$, and
$\zeta_k$ is the phase randomness (\Cref{prop:recursion}).

\noindent\textbf{Whether: Verified Evidence and Credible Bounds.}
Each invocation carries a context $z$, and each verified one an independent-verifier
outcome $y_e$ with confidence $c_e$. The verified invocations of skill $u$ in context
$z$, $\mathcal E^{\mathrm{ver}}_{u,z}$, update a hierarchical Beta
posterior over its reliability in $z$:
\begin{equation}
  \alpha^{\mathrm{ver}}_{u,z} = \kappa_u\mu_u + \sum_{e\in\mathcal E^{\mathrm{ver}}_{u,z}}c_e y_e,
  \qquad
  \beta^{\mathrm{ver}}_{u,z}  = \kappa_u(1-\mu_u) +
      \sum_{e\in\mathcal E^{\mathrm{ver}}_{u,z}}c_e(1-y_e),
\end{equation}
where $\mu_u$ is the skill-level reliability and $\kappa_u$ the shrinkage
strength, pooling sparse contexts toward the skill.
Verifier calls are budgeted and go to the invocations with the largest reference edge flow $\varphi^0(e)$ (\Cref{rem:acq}). Decisions then act on
the posterior's lower and upper credible bounds, $\mathrm{LCB}_{u,z}$ and $\mathrm{UCB}_{u,z}$, rather than point estimates.
The interval separates thin evidence from confirmed weakness: a wide interval
from too few verified records triggers \emph{defer} to collect more
observations, whereas $\mathrm{UCB}_{u,z}$ below a quality threshold
triggers \emph{refine} or \emph{prune}.

\noindent\textbf{Which: Flow Share and Signed Utility.}
Both readouts of Section~4.2 enter the library operator together with the
credible bounds and the invocation counts:
\begin{equation}
  \Slib{k+1} = \Phi\bigl(\Slib{k};\
  \{\Psi^0(u), n_u^{\mathrm{call}}, \mathrm{LCB}_{u,z}, \mathrm{UCB}_{u,z}\}_{u\in\Slib{k}},\
  \{\widetilde{A}_e\}_e\bigr),
\end{equation}
where $n_u^{\mathrm{call}}$ is the empirical invocation count of skill $u$. The operator applies
\emph{defer}, \emph{split}, \emph{refine}, \emph{retain} and \emph{prune} to
single skills, \emph{consolidate} to skill pairs, and \emph{generate} to context
clusters, in that order (rules in Appendix~C). Flow share and signed utility rank the candidate edits. A skill is refined when it is sound overall and confirmed weak in some context, split when its contexts diverge, and pruned when it is confirmed weak in every context, has non-positive signed utility, and every context it covers is covered by another skill.
Every edit is versioned and is accepted once paired rollouts are non-inferior on success, tempered reward, token cost, and latency.
$\Slib{k+1}$ changes the graph's edge set (Section~4.1), so the next phase retrains the flow on it and recomputes the readouts and $V_q$.

% Table 1 is declared here (late on page 6, too little room left there) so that [t] places it at the top of page 7.
\begin{table}[t]
\centering\footnotesize
\begin{adjustbox}{max width=\linewidth}\tabcolsep=2.6pt
\renewcommand{\arraystretch}{1.12}
\begin{tabular}{ll|cc|c|c|c|c|cc|c}
\toprule
\textbf{} & \textbf{} & \multicolumn{2}{c}{\textbf{Baseline}} & \textbf{SFT} & \textbf{GRPO$^\dagger$} & \textbf{AFlow} & \textbf{Agent+RL} & \multicolumn{2}{c}{\textbf{Skill evolution}} & \textbf{Ours} \\
\cmidrule(lr){3-4}\cmidrule(lr){5-5}\cmidrule(lr){6-6}\cmidrule(lr){7-7}\cmidrule(lr){8-8}\cmidrule(lr){9-10}\cmidrule(lr){11-11}
\textbf{Dataset} & \textbf{Metric} & \textbf{Qwen3.5} & \textbf{v4-flash} & \textbf{Qwen3.5} & \textbf{Qwen3.5} & \textbf{Qwen3.5} & \textbf{FlowSteer} & \textbf{Trace2Skill} & \textbf{SkillOpt} & \textbf{\RtwoFlow} ($\Delta\uparrow$) \\
\midrule
\multicolumn{11}{@{}l}{\textit{(a) In-Distribution (IID) benchmarks}} \\
\midrule
\multirow{2}{*}{\textbf{HotpotQA}} & Ans EM & 57.97\,\std{0.86} & 71.41\,\std{1.05} & 63.59\,\std{0.43} & 69.53\,\std{0.96} & 86.41\,\std{0.70} & 88.91\,\std{0.35} & 88.12\,\std{0.35} & 89.53\,\std{0.43} & \textbf{91.09}\,\std{0.70}~{\scriptsize\color{evopink}(+33.1)} \\
 & Ans F1 & 73.58\,\std{0.82} & 81.92\,\std{0.43} & 76.86\,\std{0.84} & 81.22\,\std{0.29} & 88.97\,\std{0.18} & 90.30\,\std{0.13} & 90.65\,\std{0.89} & 92.47\,\std{0.10} & \textbf{93.02}\,\std{0.29}~{\scriptsize\color{evopink}(+19.4)} \\
\multirow{2}{*}{\textbf{TriviaQA}} & Ans EM & 44.84\,\std{0.43} & 70.62\,\std{1.18} & 46.25\,\std{0.86} & 47.81\,\std{0.86} & 89.53\,\std{0.70} & 90.31\,\std{0.43} & 89.53\,\std{0.43} & 92.50\,\std{1.18} & \textbf{95.00}\,\std{0.70}~{\scriptsize\color{evopink}(+50.2)} \\
 & Ans F1 & 53.19\,\std{0.14} & 80.75\,\std{0.99} & 59.28\,\std{0.21} & 58.52\,\std{0.21} & 92.50\,\std{0.86} & 93.02\,\std{1.37} & 91.78\,\std{0.73} & 93.71\,\std{1.25} & \textbf{96.97}\,\std{1.01}~{\scriptsize\color{evopink}(+43.8)} \\
\textbf{AIME 2026} & Acc. & 51.33\,\std{1.83} & 48.67\,\std{1.83} & 32.00\,\std{2.98} & 36.00\,\std{2.79} & 53.33\,\std{2.36} & 64.00\,\std{1.49} & 66.67\,\std{0.00} & 67.33\,\std{2.79} & \textbf{76.00}\,\std{1.49}~{\scriptsize\color{evopink}(+24.7)} \\
\textbf{HealthBench} & Rubric & 33.84\,\std{0.68} & 53.43\,\std{0.89} & 31.61\,\std{1.42} & 36.83\,\std{0.39} & 49.32\,\std{0.57} & 52.44\,\std{0.49} & 53.39\,\std{0.06} & 54.53\,\std{0.73} & \textbf{61.89}\,\std{0.44}~{\scriptsize\color{evopink}(+28.0)} \\
\textbf{MBPP+} & Pass@1 & 78.75\,\std{0.35} & 84.69\,\std{0.43} & 77.97\,\std{0.35} & 81.88\,\std{0.86} & 85.00\,\std{1.16} & 87.50\,\std{1.24} & 89.53\,\std{0.70} & 89.22\,\std{0.35} & \textbf{91.25}\,\std{0.86}~{\scriptsize\color{evopink}(+12.5)} \\
\textbf{ALFWorld} & SR & 46.56\,\std{0.70} & 60.47\,\std{0.70} & 39.53\,\std{0.70} & 50.47\,\std{0.70} & 74.06\,\std{0.35} & 80.31\,\std{1.02} & 85.00\,\std{0.35} & 84.53\,\std{0.86} & \textbf{88.75}\,\std{0.43}~{\scriptsize\color{evopink}(+42.2)} \\
\cmidrule(lr){1-11}
\multirow{3}{*}{\textbf{Avg. (IID)}} & Ans EM & 51.41\,\std{0.48} & 71.02\,\std{0.79} & 54.92\,\std{0.48} & 58.67\,\std{0.64} & 87.97\,\std{0.49} & 89.61\,\std{0.28} & 88.83\,\std{0.28} & 91.02\,\std{0.63} & \textbf{93.05}\,\std{0.49}~{\scriptsize\color{evopink}(+41.6)} \\
 & Ans F1 & 63.38\,\std{0.42} & 81.34\,\std{0.54} & 68.07\,\std{0.43} & 69.87\,\std{0.18} & 90.73\,\std{0.44} & 91.66\,\std{0.69} & 91.22\,\std{0.58} & 93.09\,\std{0.63} & \textbf{95.00}\,\std{0.53}~{\scriptsize\color{evopink}(+31.6)} \\
 & Acc. & 52.62\,\std{0.53} & 61.81\,\std{0.55} & 45.28\,\std{0.85} & 51.29\,\std{0.76} & 65.43\,\std{0.68} & 71.06\,\std{0.56} & 73.65\,\std{0.20} & 73.90\,\std{0.76} & \textbf{79.47}\,\std{0.46}~{\scriptsize\color{evopink}(+26.9)} \\
\midrule
\multicolumn{11}{@{}l}{\textit{(b) Out-of-Distribution (OOD) benchmarks}} \\
\midrule
\multirow{2}{*}{\textbf{MuSiQue-Ans}} & Ans EM & 39.06\,\std{0.55} & 49.69\,\std{0.43} & 38.91\,\std{0.65} & 40.62\,\std{0.55} & 77.66\,\std{0.70} & 79.53\,\std{1.28} & 80.94\,\std{0.70} & 81.41\,\std{0.86} & \textbf{85.31}\,\std{0.35}~{\scriptsize\color{evopink}(+46.2)} \\
 & Ans F1 & 47.07\,\std{0.53} & 58.88\,\std{0.89} & 50.59\,\std{1.16} & 50.35\,\std{0.60} & 83.16\,\std{0.72} & 85.63\,\std{0.34} & 85.02\,\std{0.61} & 84.49\,\std{0.63} & \textbf{88.33}\,\std{0.97}~{\scriptsize\color{evopink}(+41.3)} \\
\multirow{2}{*}{\textbf{NQ-Open}} & Ans EM & 23.28\,\std{0.65} & 38.12\,\std{1.16} & 25.94\,\std{0.86} & 23.91\,\std{1.05} & 76.88\,\std{0.89} & 79.84\,\std{0.35} & 81.88\,\std{0.35} & 82.50\,\std{0.43} & \textbf{86.25}\,\std{0.70}~{\scriptsize\color{evopink}(+63.0)} \\
 & Ans F1 & 33.29\,\std{0.81} & 49.91\,\std{0.43} & 37.79\,\std{0.86} & 32.59\,\std{0.98} & 80.62\,\std{0.84} & 84.21\,\std{0.09} & 85.28\,\std{0.35} & 86.87\,\std{1.08} & \textbf{90.16}\,\std{1.25}~{\scriptsize\color{evopink}(+56.9)} \\
\textbf{MATH-Hard} & Acc. & 88.44\,\std{1.16} & 91.88\,\std{0.70} & 88.12\,\std{0.35} & 87.50\,\std{0.00} & 91.41\,\std{0.00} & 93.59\,\std{0.65} & 94.84\,\std{0.70} & 92.19\,\std{0.00} & \textbf{95.94}\,\std{0.86}~{\scriptsize\color{evopink}(+7.5)} \\
\textbf{GPQA} & Acc. & 61.09\,\std{0.35} & 74.06\,\std{1.28} & 68.75\,\std{1.10} & 66.88\,\std{0.70} & 74.06\,\std{1.02} & 83.91\,\std{0.43} & 81.88\,\std{0.65} & 81.88\,\std{0.86} & \textbf{84.84}\,\std{1.18}~{\scriptsize\color{evopink}(+23.8)} \\
\textbf{SWE-bench} & Resolved & 16.41\,\std{0.78} & 34.84\,\std{0.70} & 15.78\,\std{0.35} & 18.91\,\std{0.86} & 25.16\,\std{0.86} & 41.25\,\std{0.86} & 40.62\,\std{1.10} & 42.97\,\std{0.78} & \textbf{44.22}\,\std{1.18}~{\scriptsize\color{evopink}(+27.8)} \\
\textbf{WebShop} & SR & 33.28\,\std{0.43} & 65.31\,\std{1.05} & 32.97\,\std{1.02} & 35.62\,\std{0.43} & 60.16\,\std{0.00} & 79.06\,\std{1.02} & 85.78\,\std{0.86} & 82.50\,\std{0.43} & \textbf{89.53}\,\std{0.89}~{\scriptsize\color{evopink}(+56.2)} \\
\cmidrule(lr){1-11}
\multirow{3}{*}{\textbf{Avg. (OOD)}} & Ans EM & 31.17\,\std{0.43} & 43.91\,\std{0.62} & 32.42\,\std{0.54} & 32.27\,\std{0.59} & 77.27\,\std{0.57} & 79.69\,\std{0.66} & 81.41\,\std{0.39} & 81.95\,\std{0.48} & \textbf{85.78}\,\std{0.39}~{\scriptsize\color{evopink}(+54.6)} \\
 & Ans F1 & 40.18\,\std{0.48} & 54.39\,\std{0.49} & 44.19\,\std{0.72} & 41.47\,\std{0.57} & 81.89\,\std{0.55} & 84.92\,\std{0.18} & 85.15\,\std{0.35} & 85.68\,\std{0.63} & \textbf{89.25}\,\std{0.79}~{\scriptsize\color{evopink}(+49.1)} \\
 & Acc. & 49.80\,\std{0.38} & 66.52\,\std{0.48} & 51.41\,\std{0.39} & 52.23\,\std{0.30} & 62.70\,\std{0.33} & 74.45\,\std{0.39} & 75.78\,\std{0.42} & 74.88\,\std{0.31} & \textbf{78.63}\,\std{0.52}~{\scriptsize\color{evopink}(+28.8)} \\
\bottomrule
\end{tabular}
\end{adjustbox}
\vspace{5pt}
\caption{Main results, five runs, mean\,$\pm$\,standard deviation. All methods share one
frozen Qwen3.5-9B executor; $^\dagger$GRPO trains the Supervisor backbone; $\Delta\uparrow$ is the gain
over Qwen3.5. Averages pool exact match and answer F1 over the
question-answering sets and the primary metric over the rest.}
\label{tab:main}
\end{table}

\noindent\textbf{When: Residual Variance Plateau.}
For a fixed query, $\log Z_\theta(q)$ shifts every full-trajectory residual
$\delta_{0:T}$ by the same constant, so their variance is free of it \citep[log-partition variance;][]{zhang2023robust}:
\begin{equation}
  V_q = \Var_{\tau\mid q}\bigl[\delta_{0:T}(\tau)\bigr].
\end{equation}
$V_q$ is therefore comparable across training steps on one library version: it falls while the policy learns on the current library, and when it levels off the phase ends and the next verifier-gated proposals are made.
Writing $\bar V_w$ for $V_q$ aggregated over held-out queries at step $w$, a phase boundary is declared when the fitted slope of $\bar V_w$ is flat within a tolerance, its relative decrease over a recent window is non-negative but small, and the normalised skill entropy has fallen (\cref{eq:trigger}).

\medskip\noindent\textbf{Proposition 3.} \emph{Every committed skill-library edit is backed by independent verifier evidence and passes paired validation, the readouts of Section~4.2 rank the candidates, and one fixed update maps each phase state to the next.} (Empirically, RQ4 and RQ6 in \S\ref{sec:setup}; formally, Appendix~C.)

% ============================================================
% EXPERIMENTS (9-page budget: pages 7-9 carry Table 1, Table 2 + Fig. 4 and
% Fig. 5, so the text here is kept to the findings; the extended analysis
% with every per-row number is in Appendix E, "Extended analysis").
% Every number below is recomputed from Table 1, Table 2 and the figure
% data (scratchpad v101/verify_numbers.py, verify_transfer.py).
% ============================================================
\needspace{6\baselineskip}
\section{Experiments}
\label{sec:setup}

We test whether \RtwoFlow outperforms the baselines of \Cref{tab:main} in distribution (\textbf{RQ1}) and out of distribution (\textbf{RQ2}) and transfers across executors (\textbf{RQ3}), what each component (\textbf{RQ4}) and \RtwoTB (\textbf{RQ5}) contribute, and whether it keeps improving with verified edits (\textbf{RQ6}).

\paragraph{Setup.}
Six in-distribution (IID) datasets train the Supervisor and six out-of-distribution (OOD) datasets are held out (\Cref{tab:main}; \citealp{yang2018hotpotqa,joshi2017triviaqa,arora2025healthbench,liu2023your,shridhar2020alfworld,trivedi2022musique,kwiatkowski2019natural,hendrycks2021measuring,rein2023gpqa,jimenez2024swe,yao2022webshop}), with exact match primary for QA.
Baselines are the Supervisor backbone Qwen3.5-9B \citep{team2026qwen3} and DeepSeek-V4-Flash \citep{xu2026deepseek} answering directly, SFT and GRPO \citep{shao2024deepseekmath} on the backbone, AFlow \citep{zhang2025aflow}, FlowSteer \citep{zhang2026flowsteer}, Trace2Skill \citep{ni2026trace2skill} and SkillOpt \citep{yang2026skillopt}; all share the frozen Qwen3.5-9B executor, tools, retrieval corpus and inference budget.

% Table 2 is declared here (text near the end of page 7 or on page 8) so that [t] places it at the top of page 8.
\begin{table}[p]
\centering
% EvoSteer layout: booktabs rules with zero rule padding so the two vertical
% rules run unbroken; the component groups are separated by full-width \midrule only.
\large
\tabcolsep=1pt
\renewcommand{\arraystretch}{1.41}
\setlength{\aboverulesep}{0pt}\setlength{\belowrulesep}{0pt}
\resizebox{\linewidth}{!}{%
\begin{tabular}{@{}l@{\hspace{3pt}}|cccccc|cccccc@{}}
\toprule
\textbf{Variant} & \multicolumn{6}{c|}{\textbf{IID}} & \multicolumn{6}{c}{\textbf{OOD}} \\
\cmidrule(lr){2-7}\cmidrule(lr){8-13}
 & \textbf{HotpotQA} & \textbf{TriviaQA} & \textbf{AIME} & \textbf{HealthB.} & \textbf{MBPP+} & \textbf{ALFWorld} & \textbf{MuSiQue} & \textbf{NQ-Open} & \textbf{MATH} & \textbf{GPQA} & \textbf{SWE} & \textbf{WebShop} \\
 & Ans EM & Ans EM & Acc. & Rubric & Pass@1 & SR & Ans EM & Ans EM & Acc. & Acc. & Resolved & SR \\
\midrule
Qwen3.5-9B (frozen) & 57.97 & 44.84 & 51.33 & 33.84 & 78.75 & 46.56 & 39.06 & 23.28 & 88.44 & 61.09 & 16.41 & 33.28 \\
\RtwoFlow architecture, untrained & 77.03 & 77.81 & 54.00 & 39.53 & 83.75 & 63.44 & 68.28 & 61.09 & 89.84 & 65.94 & 22.97 & 55.31 \\
\midrule
$-$ Shared states (history tree) & 84.06 & 91.09 & 62.00 & 53.75 & 88.12 & 81.09 & 77.19 & 83.12 & 93.28 & 79.06 & 38.12 & 79.38 \\
\midrule
$-$ Learned $\PB$ (uniform $\PB$) & 86.56 & 91.88 & 67.33 & 57.19 & 88.59 & 85.31 & 81.25 & 84.06 & 94.84 & 83.91 & 40.31 & 84.84 \\
$-$ Reference readout (read off the learned flow) & 88.75 & 93.59 & 69.33 & 59.84 & 89.53 & 87.19 & 83.91 & 84.69 & 95.31 & 84.06 & 42.50 & 85.94 \\
$-$ Signed utility (ranked by share alone) & 87.34 & 91.25 & 70.00 & 58.28 & 90.78 & 84.69 & 80.78 & 82.34 & 94.69 & 82.97 & 40.16 & 82.81 \\
\midrule
$-$ Verifier labels (reward-derived labels) & 87.19 & 91.25 & 73.33 & 57.66 & 90.94 & 83.59 & 78.91 & 80.78 & 95.00 & 82.66 & 38.59 & 80.16 \\
$-$ Credible bounds (posterior mean) & 89.06 & 93.91 & 72.00 & 58.91 & 89.84 & 86.88 & 81.09 & 85.31 & 93.59 & 82.81 & 40.47 & 85.00 \\
$-$ Paired non-inferiority check & 88.12 & 93.28 & 69.33 & 59.69 & 89.69 & 86.25 & 83.12 & 83.44 & 95.62 & 83.28 & 41.56 & 84.53 \\
$-$ Plateau trigger (fixed phase length) & 90.16 & 93.12 & 74.67 & 59.84 & 90.00 & 86.72 & 83.44 & 83.44 & 95.31 & 83.44 & 42.34 & 85.94 \\
$-$ Retained phase state (reset each phase) & 86.56 & 89.06 & 68.00 & 58.12 & 89.38 & 82.97 & 78.59 & 78.91 & 94.69 & 82.19 & 39.69 & 81.41 \\
\midrule
\textbf{\RtwoFlow (Full)} & \textbf{91.09} & \textbf{95.00} & \textbf{76.00} & \textbf{61.89} & \textbf{91.25} & \textbf{88.75} & \textbf{85.31} & \textbf{86.25} & \textbf{95.94} & \textbf{84.84} & \textbf{44.22} & \textbf{89.53} \\
\bottomrule
\end{tabular}}
\vspace{5pt}
\caption{Component ablations, five-run mean, protocol of \S\ref{sec:setup}. Each row removes one component of \RtwoFlow, with everything else fixed. The frozen backbone and \RtwoFlow (Full) rows are reproduced from \Cref{tab:main}. The untrained row runs the \RtwoFlow architecture with an untrained Supervisor. $\PB$ is the backward policy, and the flow share is read from the compatible reference flow; the signed utility says whether a skill helps; verifier labels, credible bounds and the paired non-inferiority check gate each library edit; a plateau in the residual variance ends a phase, and the flow and verifier posterior carry into the next. Row blocks follow \S4.1--4.3: graph, flow training and readouts, skill evolution.}
\label{tab:ablation}
\end{table}
\begin{figure}[p]
\centering
\begin{minipage}[b]{0.677\linewidth}\centering
\includegraphics[width=\linewidth,trim=3 7 4 1,clip]{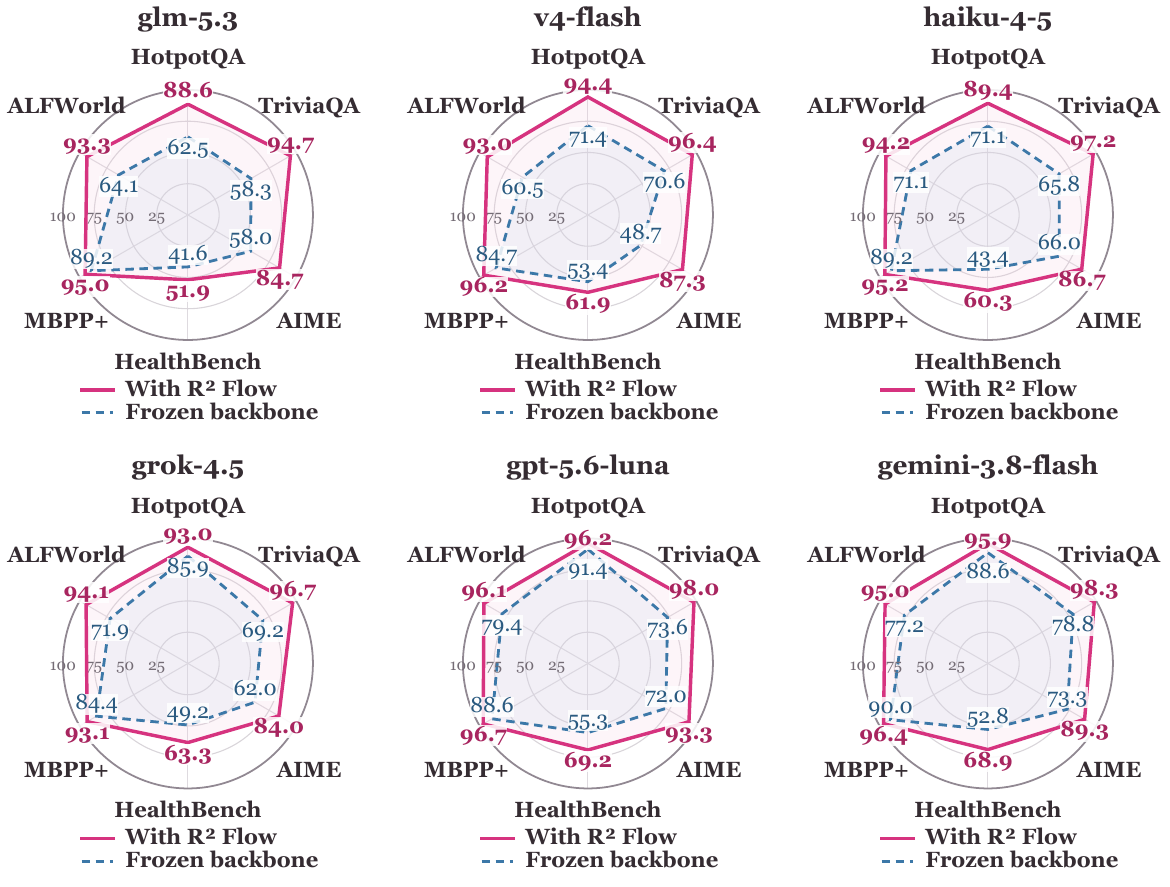}\\[1pt]
{\small (a) IID scores on each frozen executor}
\end{minipage}\hspace{8pt}%
\begin{minipage}[b]{0.300\linewidth}\centering
\includegraphics[width=\linewidth]{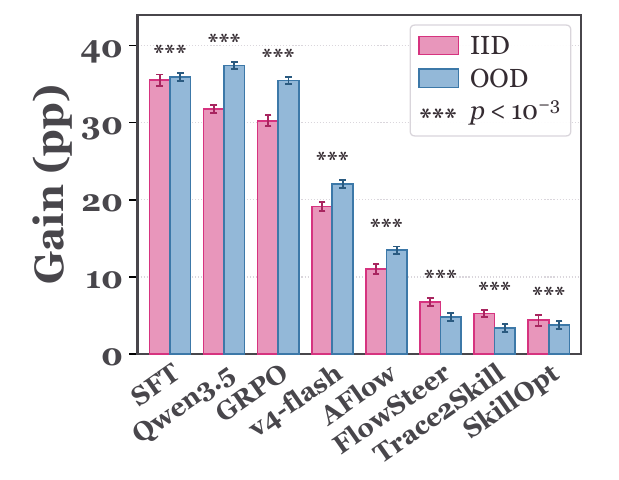}\\[1pt]
{\small (b) Significance of the gains}\\[5pt]
\includegraphics[width=\linewidth]{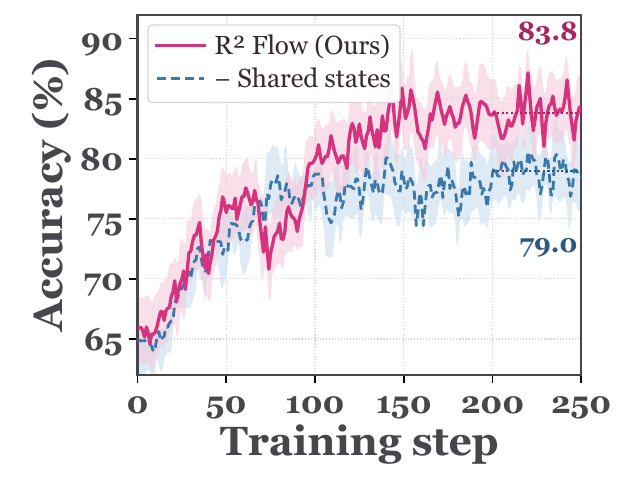}\\[1pt]
{\small (c) Training curve}
\end{minipage}\\[4pt]
\includegraphics[width=1.000\linewidth,trim=1 4 1 4,clip]{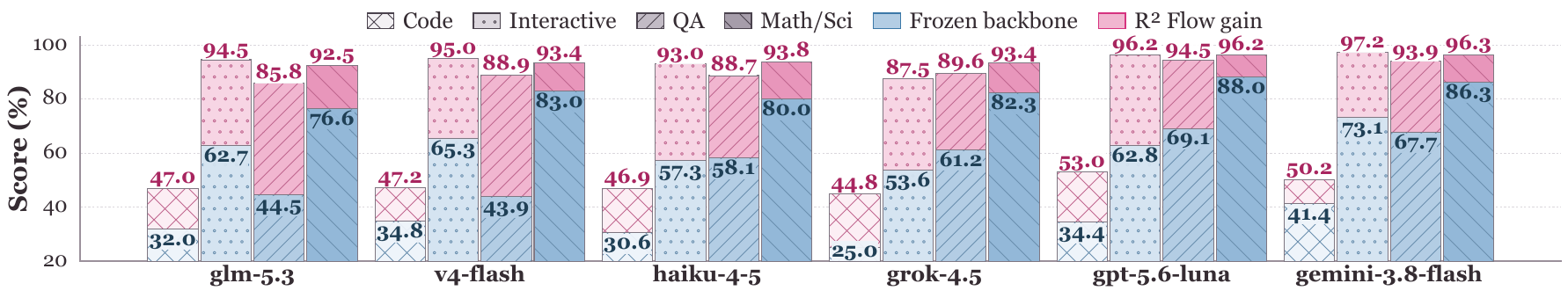}\\[1pt]
{\small (d) OOD scores aggregated by task domain}
\caption{Executor transfer, significance and training dynamics, five-run mean. \textbf{(a)} IID scores of six frozen executors answering directly (dashed) and under \RtwoFlow with the same trained Supervisor (solid); the radius is linear from 0 to 100. \textbf{(b)} Gain of \RtwoFlow over each baseline of \Cref{tab:main}, averaged over the six IID or six OOD datasets on each primary metric, with 95\% confidence intervals; *** marks $p<10^{-3}$ (Welch's $t$-test, five runs). \textbf{(c)} Training accuracy of \RtwoFlow and of $-$~Shared states over 250 steps; shaded: spread across runs; numbers: mean over the last 50 steps. \textbf{(d)} OOD scores averaged per domain: QA (MuSiQue, NQ-Open), Math/Sci (MATH, GPQA), Code (SWE-bench), Interactive (WebShop); each bar stacks the frozen score and the \RtwoFlow gain.}
\label{fig:transfer}
\end{figure}
\begin{figure}[t]
\centering
\sbox{\objtabbox}{\setlength{\tabcolsep}{2.4pt}\renewcommand{\arraystretch}{1.25}%
\resizebox{0.70\linewidth}{!}{%
\begin{tabular}{@{}l|cccccc|@{\hspace{4pt}}cc@{\hspace{4pt}}|@{\hspace{4pt}}cc@{}}
\toprule
\textbf{Objective} & \multicolumn{6}{c|}{\textbf{IID score}} & \multicolumn{2}{c|}{\textbf{Learner, per step}} & \multicolumn{2}{c}{\textbf{Whole step}} \\
\cmidrule(lr){2-7}\cmidrule(lr){8-9}\cmidrule(l){10-11}
 & \textbf{HotpotQA} & \textbf{TriviaQA} & \textbf{AIME} & \textbf{HealthB.} & \textbf{MBPP+} & \textbf{ALFWorld} & \textbf{Tokens} & \textbf{GPU} & \textbf{GPU} & \textbf{Wall-clock} \\
 & Ans EM & Ans EM & Acc. & Rubric & Pass@1 & SR & & s & s & s \\
\midrule
GRPO & 84.22 & 92.66 & 64.67 & 53.93 & 86.56 & 79.06 & 2,510,218 & 252 & 2,406 & 788 \\
\midrule
DB & 87.03 & 93.12 & 65.33 & 56.93 & 88.28 & 82.50 & 3,121,649 & 441 & 2,595 & 834 \\
TB & 88.59 & 93.91 & 70.67 & 57.52 & 89.69 & 85.16 & 3,296,311 & 389 & 2,542 & 822 \\
TTB & 88.12 & 94.06 & 70.00 & 58.47 & 90.16 & 85.31 & 3,296,346 & 390 & 2,544 & 823 \\
SubTB & 89.38 & 93.91 & 69.33 & 59.24 & 90.94 & 87.66 & 3,121,927 & 444 & 2,598 & 837 \\
\midrule
\textbf{\RtwoTB (Ours)} & \textbf{91.09} & \textbf{95.00} & \textbf{76.00} & \textbf{61.89} & \textbf{91.25} & \textbf{88.75} & 3,121,184 & 433 & 2,587 & 832 \\
\bottomrule
\end{tabular}}}%
\begin{minipage}[b]{0.70\linewidth}\centering
\raisebox{\dp\objtabbox}{\usebox{\objtabbox}}\\[\dimexpr 0.0905\ht\objtabbox+0.0905\dp\objtabbox+3pt\relax]
{\small (a) IID scores and training cost per step}
\end{minipage}\hfill
\begin{minipage}[b]{0.29\linewidth}\centering
% cell grid spans the table from its top rule to its bottom rule; column labels hang below (plot_objectives_heatmap.py)
\includegraphics[height=\dimexpr 1.0905\ht\objtabbox+1.0905\dp\objtabbox\relax]{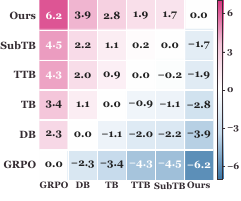}\\[3pt]
{\small (b) Pairwise OOD differences}
\end{minipage}\\[6pt]
\begin{minipage}[t]{0.242\linewidth}\centering
\includegraphics[width=\linewidth]{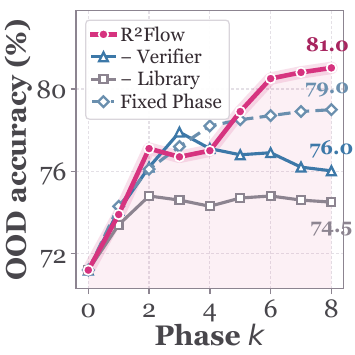}\\[-1pt]
{\small (c) Phase-to-phase gain}
\end{minipage}\hfill
\begin{minipage}[t]{0.242\linewidth}\centering
\includegraphics[width=\linewidth]{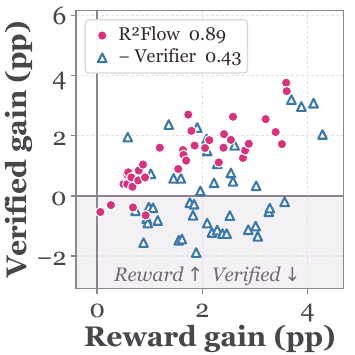}\\[-1pt]
{\small (d) Reward vs.\ verified}
\end{minipage}\hfill
\begin{minipage}[t]{0.242\linewidth}\centering
\includegraphics[width=\linewidth]{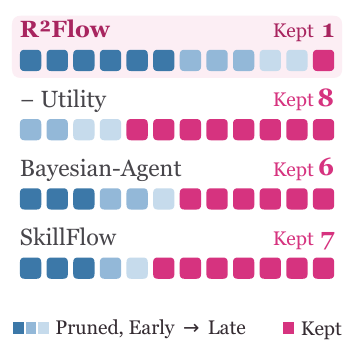}\\[-1pt]
{\small (e) Fate of harmful skills}
\end{minipage}\hfill
\begin{minipage}[t]{0.242\linewidth}\centering
\includegraphics[width=\linewidth]{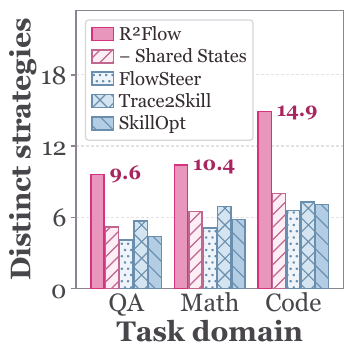}\\[-1pt]
{\small (f) Distinct strategies}
\end{minipage}
\caption{Training objectives and recursive self-improvement, five-run mean. \textbf{(a)} IID scores and per-step cost of each Supervisor objective (TB, \citealp{malkin2022trajectory}; DB, \citealp{bengio2023gflownet}; SubTB without $b_d$, \citealp{madan2023learning}; TTB, \citealp{zhang2026skillflow}). \textbf{(b)} Row minus column objective in mean OOD score. \textbf{(c)} Mean OOD score per phase; $-$~Verifier uses reward-derived labels, $-$~Library freezes the library, and Fixed Phase replaces the plateau trigger. \textbf{(d)} Training-reward gain against verified gain per committed edit; shaded: held-out verified score falls. \textbf{(e)} Phase at which each of twelve injected harmful skills is pruned. \textbf{(f)} Distinct strategies per domain, reorderings counted once.}
\label{fig:rsi-panels}\label{tab:objectives}
\end{figure}

\paragraph{In distribution (RQ1).}
\RtwoFlow leads every IID dataset and metric (\Cref{tab:main}), 2.03 EM and 5.57 accuracy points above SkillOpt, the strongest baseline, with every average gain significant at $p<10^{-3}$ (\Cref{fig:transfer}b).
SFT and GRPO$^\dagger$ on the backbone fall below Qwen3.5 in accuracy, whereas the \RtwoFlow architecture adds 13.71 IID points and training 18.07 more (\Cref{tab:ablation}). Tuning the backbone on task reward also trades one task for another: GRPO$^\dagger$ raises the IID exact-match average by 7.26 points but lowers AIME 2026 from 51.33 to 36.00, while \RtwoFlow raises both.

\paragraph{Out of distribution (RQ2).}
\RtwoFlow again leads every OOD row, 3.83 EM and 2.85 accuracy points above the strongest baselines, with larger gains over Qwen3.5 than in distribution and its narrowest margins, about one point, on GPQA and MATH-Hard.
Verification matters more here: edits admitted on reward-derived labels cost 5.00 OOD points against 3.34 IID (\Cref{tab:ablation}), as most of them raise the training reward while lowering the held-out verified score (RQ6). With a 9B Supervisor and executor, \RtwoFlow also beats DeepSeek-V4-Flash answering directly on all twelve datasets.

\paragraph{Across executors (RQ3).}
With the trained Supervisor fixed and six other frozen executors (\Cref{fig:transfer}a,d), \RtwoFlow raises every backbone on every IID dataset and OOD domain, by 15.70 to 24.64 points on average, and narrows the IID spread across executors from 14.5 to 6.9 points.
Weaker executors gain more (Pearson $r=-0.97$) and all gain more out of distribution, consistent with the orchestration and verified skills compensating for the weaker executors.

\paragraph{Component ablations (RQ4).}
All 108 ablated cells of \Cref{tab:ablation} fall below the full model.
Shared states matter most, 7.31 IID and 5.99 OOD points: on a history tree, reorderings of one procedure no longer pool evidence, so the flow spreads over them and distinct strategies per domain fall from 9.6--14.9 to 5.2--8.0, near the baselines (\Cref{fig:rsi-panels}f), and training accuracy falls behind (\Cref{fig:transfer}c).
For credit, a uniform $\PB$, which splits credit evenly over the skills reaching a merged state, costs 4.52 IID points, and ranking by share alone, which reads frequent use as benefit, 3.61.
Replacing credible bounds by posterior means, which cannot tell thin evidence from confirmed weakness, costs 2.23 IID and 2.97 OOD points, and resetting the phase state, which discards the flow and verifier posterior carried across phases, about five points on both splits. Reading the shares off the learned flow, whose internal flows depend on $\PB$, costs 2.63 IID and 1.61 OOD points.

\paragraph{Training objectives (RQ5).}
\RtwoTB leads every IID dataset (\Cref{fig:rsi-panels}a) and beats every other objective on OOD by 1.69--6.17 points (\Cref{fig:rsi-panels}b), for 7.5\% more GPU time than GRPO.
SubTB, which balances the same sub-trajectories without the per-domain bias $b_d$ that absorbs a shared offset of each domain's terminal residuals, comes closest. TB and DB, which balance only whole trajectories or single edges, trail it, and GRPO, without a balance residual, is lowest.

\paragraph{Phase-to-phase improvement (RQ6).}
The mean OOD score rises from 71.2 to 81.02 over eight phases and dips only once (\Cref{fig:rsi-panels}c), as each phase trains on the library the previous one committed and 33 of the 37 committed edits raise the held-out verified score (precision 0.89, \Cref{fig:rsi-panels}d).
With reward-derived labels, 25 of 44 edits raise the training reward but lower the verified score, so accuracy peaks at phase 3 and then declines; a frozen library stalls below 75, and fixed-length phases end at 78.99.
Signed utility lets \RtwoFlow prune eleven of twelve injected harmful skills (\Cref{fig:rsi-panels}e), six by phase 2 and the rest as evidence accumulates, whereas share-only ranking keeps eight, Bayesian-Agent \citep{wu2026bayesian} six and SkillFlow \citep{zhang2026skillflow} seven.

% ============================================================

\section{Conclusion}

We presented \RtwoFlow, a recursive self-improvement framework that admits library edits on verifier evidence outside the task reward. On twelve datasets it beats every baseline in and out of distribution and improves over eight phases without human intervention, at 0.89 held-out edit precision.

% ============================================================
% Required by ICLR 2027 and excluded from the page limit.
\clearpage
\section*{Ethics Statement}
This work follows the ICLR Code of Ethics. We report our results
without fabrication, falsification or intentional misrepresentation, and
Section~\ref{sec:setup} and Appendix~\ref{app:full-results} give the
implementation and evaluation details needed to verify them. All twelve
datasets are public benchmarks, used for their intended research purposes and
under their licenses, and we cite the work and resources we build on.
The study involves no human participants and collects no personal information.
HealthBench serves only as a benchmark, and \RtwoFlow is not intended to give
medical advice. Because \RtwoFlow revises its own skill library, it commits an
edit only on independent verifier evidence and after paired validation.

\section*{Reproducibility Statement}
Section~\ref{sec:setup} and Appendix~\ref{app:full-results} specify the twelve
datasets with their fixed evaluation draws and seed, the metrics, the baselines,
the frozen Qwen3.5-9B executor and the validation query set used for phase
control. Appendix~\ref{app:flow} gives the \RtwoTB objective with its weighting
and tempering constants, Appendix~\ref{app:evolve} gives the verifier-gated
evolution rules, the phase trigger and the paired validation, and
Appendices~\ref{app:graph}--\ref{app:proof-details} state the assumptions and
give the proofs behind Propositions~1--3. Every reported number is a mean over
five runs. The complete training code, the model weights and the datasets are available
at \url{https://github.com/beita6969/r2flow}.

\section*{AI Use Statement}
In this work, we used generative AI tools to improve the readability of the text,
to assist the literature survey and format the references, to refine the formal
statements and proofs in Appendices~\ref{app:graph}--\ref{app:proof-details}, to
draft the text that interprets our experimental results, to give feedback on the
design of the method and the experiments, and to prepare figures. We have not
used generative AI tools to generate data or to produce the reported results. We
take responsibility for the final content of this work, including text, claims
or artifacts produced with the aid of generative AI.

% ============================================================
\bibliography{refs}

% ============================================================
% ICLR requires the appendix to follow the bibliography.
\clearpage
\appendix
\numberwithin{proposition}{section}
\numberwithin{definition}{section}
\numberwithin{assumption}{section}
\numberwithin{remark}{section}
\numberwithin{equation}{section}
\setcounter{proposition}{0}
\setcounter{definition}{0}
\setcounter{assumption}{0}
\setcounter{remark}{0}
\section{Formal Foundations of the Orchestration Graph}
\label{app:graph}
\subsection{Quotient Orchestration Graph}
\label{sec:graph}

\paragraph{Raw histories.}
An execution history evolves by appending atomic records,
\begin{equation}
  H_t \;=\; H_{t-1} \oplus (r_t, e_t, o_t^{\ex}),
  \label{eq:append}
\end{equation}
where $r_t$ is internal reasoning text, $e_t$ the executed orchestration event
(a skill or tool invocation with structured arguments), and $o_t^{\ex}$ the
environment feedback. Given a query $q$ and a retrieved skill subset
$\mathcal{S}_{\mathrm{ret}}\subseteq\Slib{k}$ together with a task-conditioned
prompt state $\omega_q$, the root history is
$H_0 = q \oplus \mathcal{S}_{\mathrm{ret}} \oplus \omega_q$, to which \cref{eq:append} appends the first atomic record.

\begin{remark}[Canonicalisation creates informative backward structure]
\label{rem:tree}
If nodes are identified with histories, every node except the root has exactly
one parent, namely $H_t$ with its last record deleted. The resulting object is a
prefix tree, not a confluent DAG, and the backward transition probability is
degenerate, $\PB(H_{t-1}\mid H_t)\equiv 1$. Canonicalisation merges reorderings
of independent events into shared states, turning the prefix tree of raw
histories into a DAG with multiple valid incoming edges; the resulting backward
transition therefore carries information about how a state was reached.
\end{remark}
\chg{L37: made explicit. The original text called the history graph a DAG; it is
a trie. Stating this openly is stronger than leaving it for a reviewer to find.}

\begin{definition}[Execution trace and dependency order]
\label{def:trace}
Let $E_t$ be the set of atomic event instances committed after $t$ steps.
The direct dependency relation records when an event consumes an artifact, fact,
or environment change produced by another event; $\prec_t$ denotes its
transitive closure, so it is a partial order. Repeated calls with the same skill and arguments are
distinct event instances, while their provenance is retained in the canonical
state. Let $\mathcal{A}_t$ be
the canonicalised artifacts with provenance, $\mathcal{V}_t$ the verified facts,
and $\Omega_t$ the environment observables required for future decisions. The
abstraction map, whose values are the shared states of Section~4.1, is
\begin{equation}
  s_t \;=\; \sigma(H_t) \;=\; \bigl(E_t,\ \prec_t,\ \mathcal{A}_t,\
  \mathcal{V}_t,\ \Omega_t\bigr).
  \label{eq:sigma}
\end{equation}
Two histories are identified iff they are distinct linear extensions of the same
dependency order with identical artifacts, facts, and observables; this is the
Mazurkiewicz trace equivalence of concurrency theory \citep{mazurkiewicz1986trace}
(trace in the sense of a partial-order equivalence class of executions, not an
orchestration trace in the sense used by \citet{zhang2026reinforcement}).
\end{definition}
\chg{New. Replaces the implicit ``node $=$ full history''. This
dependency-aware trace quotient is the object the contribution should be
claimed on.}

\begin{assumption}[Quotient validity]
\label{ass:quotient}
For all $H,H'$ with $\sigma(H)=\sigma(H')=s$:
\begin{enumerate}[label=(\alph*),leftmargin=*]
  \item \emph{action-mask consistency}: $\mathcal{E}(H)=\mathcal{E}(H')$, the
        legal event sets coincide;
  \item \emph{successor consistency}:
        $\Prob\bigl(\sigma(H\oplus e)=s'\mid H,e\bigr)
        =\Prob\bigl(\sigma(H'\oplus e)=s'\mid H',e\bigr)$ for all legal $e$;
  \item \emph{terminal-reward consistency}: if $s$ is terminal then
        $R(H)=R(H')$, i.e.\ the reward is measurable with respect to the
        terminal \emph{node} $s$ shared by $H$ and $H'$.
\end{enumerate}
\end{assumption}
\chg{New, and mandatory. Without (a)--(c) the quotient is not a Markov
abstraction and no flow identity below is well posed. Violations are quantified
in \cref{sec:scope}, not assumed away.}

\begin{assumption}[Markov policy and flow]
\label{ass:markov}
The backward policy and state flow read only the abstract state, and the forward
policy reads it together with the current reasoning text $r_t$: the first two are
functions of $(s,q,k)$ and the forward policy of $(s,q,k,r_t)$, none of the
discarded history (\cref{sec:scope}). Reasoning text
$r_t$ influences future behaviour only through the canonicalised content it
commits to the artifacts $\mathcal{A}_t$ or the verified facts $\mathcal{V}_t$.
\end{assumption}
\chg{New. If the LLM keeps conditioning on the raw transcript, merged nodes emit
different action distributions and the quotient DAG exists only on paper.}

\begin{assumption}[Finite event horizon and branching]
\label{ass:finite}
There is a finite horizon $T<\infty$, and every nonterminal abstract state has a
finite nonempty set of legal events. Terminal states have no outgoing events.
The horizon and event set are fixed within a library version; a phase update may
change them only at a phase boundary declared by the trigger of \cref{eq:trigger}.
\end{assumption}
\chg{The finite-horizon and finite-branching conditions are used explicitly in
the DAG, backward-induction, and finite-sum arguments below. They are not a claim
about unbounded tool use.}

\begin{assumption}[Canonicalised transitions]
\label{ass:det}
After canonicalisation, the successor state is a deterministic function of
$(s,e)$. Tools with stochastic canonicalised output are modelled by the event/environment split $s\to(s,e)\to s'$ into an event step and an environment step.
\end{assumption}
\chg{New. The original $\PF$ omitted both the probability of $r_t$ and the
environment transition kernel; this assumption states the scope in which that
omission is legitimate.}

\begin{definition}[Edges, in-edges, rank]
\label{def:edges}
An edge $s\xrightarrow{e}s'$ exists iff $e$ is legal at $s$ and
$\sigma$-appending $e$ yields $s'$. The in-edge set is
$\Inn(s')=\{(s,e): s\xrightarrow{e}s'\}$, and its predecessor-state
projection is:
\begin{equation}
  \Pa(s') \;=\; \bigl\{\,s:\exists e\text{ with }s\xrightarrow{e}s'\,\bigr\}.
  \label{eq:parents}
\end{equation}
Each edge commits one atomic event, so each in-edge deletes one $\prec$-maximal
event whose removal leaves a legal predecessor.
The rank is $\rho(s)=|E_s|$, the number of committed atomic events.
\end{definition}

\begin{proposition}[Well-posedness]
\label{prop:dag}
Every edge strictly increases $\rho$; hence the quotient graph is a finite DAG
with a single source $s_0=\sigma(H_0)$, and every complete trajectory visits
states of strictly increasing rank $\rho$, the number of committed atomic events.
\end{proposition}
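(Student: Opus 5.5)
The argument runs in four steps: show that ranks go up along every edge, conclude acyclicity, then finiteness, then the single source. The main tool is \cref{def:edges}: an edge $s\xrightarrow{e}s'$ exists only if $e$ is legal at $s$ and $\sigma$-appending $e$ yields $s'$.

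\textbf{Step 1 (rank increases).} By \cref{def:trace}, the event set of $s'$ is $E_{s'}=E_s\cup\{e\}$, where $e$ is a fresh atomic event instance. Repeated calls with the same skill and arguments are distinct instances, so $e\notin E_s$. Hence
\[
\rho(s')=|E_s|+1=\rho(s)+1 .
\]
Under \cref{ass:det} a stochastic tool is split into an event step followed by an environment step. In that case I would attach rank to the intermediate node $(s,e)$ as $\rho(s)+\tfrac12$, or simply count only event edges. Either way $\rho$ remains strictly monotone along paths. \cref{ass:quotient}(b) makes $s'$ independent of which representative history of $s$ is used, so $\rho$ is well defined on shared states.

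\textbf{Step 2 (acyclicity).} A directed cycle $s_1\to\dots\to s_m\to s_1$ would give $\rho(s_1)<\rho(s_1)$, which is impossible. The same monotonicity shows that every complete trajectory visits states of strictly increasing rank, with $\rho(s_t)=t$.

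\textbf{Step 3 (finiteness).} I would argue by induction on rank. Every reachable state is reached from $s_0$ by a path of length $\rho(s)\le T$, by the horizon in \cref{ass:finite}. Each nonterminal state has finitely many legal events, and by \cref{ass:det} each event has a single deterministic successor. So the number of states of rank $r+1$ is at most the number of states of rank $r$ times the maximal branching, and the graph has finitely many states and edges.

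\textbf{Step 4 (single source).} Every execution begins at $H_0$, so every reachable state lies on a path from $s_0=\sigma(H_0)$, and every state other than $s_0$ has an in-edge. Rank alone does not make $s_0$ the only state without in-edges, since $s_0$ carries $\rho=0$ but other rank-$0$ states could in principle exist. The fix is to restrict the graph to states reachable from $s_0$, which is the object used throughout. There, $s_0$ has no in-edges because no state has rank below $0$.

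\textbf{Expected obstacle.} The delicate point is Step 1: showing that $\sigma$-appending really produces $|E_{s'}|=|E_s|+1$. This needs canonicalisation never to collapse a new event into an existing instance, and the environment-step split must not break monotonicity. I would settle it by appealing to the instance-distinctness clause of \cref{def:trace}.
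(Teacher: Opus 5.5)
Your proof is correct and follows the paper's argument: $\rho$ is an integer potential that strictly increases along every edge, which rules out cycles, and the finite horizon and finite branching of Assumption~\ref{ass:finite} make the reachable graph finite. Your extra care fills in points the paper's three-line proof leaves implicit: the exact increment $\rho(s')=\rho(s)+1$ from instance-distinctness, and the explicit restriction to the subgraph reachable from $s_0$ to justify the single source (the paper itself only uses $\rho(s')\ge\rho(s)+1$).
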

\begin{proof}
Each edge commits at least one atomic event, so $\rho(s')\ge\rho(s)+1>\rho(s)$.
A strictly monotone integer potential forbids cycles. By
\cref{ass:finite}, every path has at most $T$ events and every node has finite
branching, so the reachable quotient graph is finite.
\end{proof}
\chg{Replaces the earlier device of putting the step index $t$ into the node
identity. With macro-events $t\neq|E_t|$, and $(t,C)$ would split one outcome
into several terminal nodes --- see \cref{prop:multiplicity}.}

\paragraph{Terminal states and tempered reward.}
Let $\mathcal{X}$ be the set of terminal nodes. By
\cref{ass:quotient}(c) the reward is a function of the terminal node, and we
temper and shift it as
\begin{equation}
  \Rt(x) \;=\; \bigl(R(x)+\eps\bigr)^{\eta},
  \qquad R(x)\in[0,1],\ \eps>0,\ \eta>0.
  \label{eq:reward}
\end{equation}
\chg{L49--52: reward was written $R(\tau)$ and the temperature was $\beta$.
After merging, several trajectories share a terminal node, so a trajectory-level
reward makes the terminal anchor ill-posed. Also $\beta\to\eta$ to free
$\beta$ for the Beta posterior in \cref{sec:bayes}.}

\begin{proposition}[Representation multiplicity]
\label{prop:multiplicity}
Suppose an observable outcome $o$ is represented by $k(o)$ distinct terminal
nodes, each anchored to $\Rt(o)$. Then at any exact balance solution
\begin{equation}
  P_F^{\mathrm{term}}(o\mid q) \;=\; \frac{k(o)\,\Rt(o)}{Z^\star(q)} ,
  \label{eq:multiplicity}
\end{equation}
i.e.\ outcomes are over-sampled in proportion to how many ways they can be
represented.
\end{proposition}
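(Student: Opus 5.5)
The plan is to obtain the terminal marginal from exact balance and then sum over the terminal nodes that represent a given outcome. Exact balance here means zero \RtwoTB loss, equivalently trajectory balance holding on every complete trajectory.

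\textbf{Step 1: terminal node law.} By \cref{prop:dag} the quotient graph is a finite DAG with single source $s_0$. At an exact balance solution, every complete trajectory $\tau=(s_0\to\dots\to x)$ satisfies trajectory balance:
\begin{equation}
  Z(q)\prod_t \PF(e_t\mid s_{t-1}) = \Rt(x)\prod_t \PB\bigl((s_{t-1},e_t)\mid s_t\bigr).
\end{equation}
This is the $(i,j)=(0,T)$ case of the residual, and the same relation follows from detailed balance by telescoping. Fix a terminal node $x$ and sum both sides over all complete trajectories ending at $x$.

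On the right, $\sum_{\tau\to x}\prod_t \PB = 1$. This holds because $\PB$ is normalised over $\Inn(s')$ at every non-root state, and backward walks from $x$ reach $s_0$ in finitely many steps, since rank strictly decreases and $s_0$ is the unique rank-zero source. So the sum is the total mass of a Markov chain absorbed at $s_0$, which is one.

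On the left, the sum is $Z(q)\,\PF^{\mathrm{term}}(x\mid q)$. Summing over $x\in\mathcal X$, and using that $\PF$ is normalised and every forward walk terminates by \cref{ass:finite}, gives $Z(q)=\sum_{x}\Rt(x)=Z^\star(q)$. Hence
\begin{equation}
  \PF^{\mathrm{term}}(x\mid q)=\frac{\Rt(x)}{Z^\star(q)}.
\end{equation}

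\textbf{Step 2: aggregate over representations.} The observable outcome $o$ is a function of the terminal node. Let $x_1,\dots,x_{k(o)}$ be its distinct representing nodes. These are disjoint terminal events, so
\begin{equation}
  \PF^{\mathrm{term}}(o\mid q)=\sum_{i=1}^{k(o)}\PF^{\mathrm{term}}(x_i\mid q).
\end{equation}
Each node is anchored to the same value $\Rt(o)$, so each summand equals $\Rt(o)/Z^\star(q)$, and the sum is \cref{eq:multiplicity}. Here $Z^\star(q)=\sum_{x\in\mathcal X}\Rt(x)$ is the node-level partition function, which also equals $\sum_o k(o)\Rt(o)$. This is the sense in which outcomes are over-sampled relative to $\Rt(o)$.

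\textbf{Main obstacle.} The main obstacle is Step 1's claim that backward path probabilities into $x$ sum to one. Two things are needed. First, normalisation must hold at every intermediate state; this is the role of the softmax over $\Inn(s')$. Second, every backward walk must terminate at $s_0$ and not at some other parentless state. I would secure the second point with \cref{prop:dag}, which gives a single source, together with the rank argument: every non-root state has nonempty $\Inn$ because it was reached from $s_0$, and $\rho$ strictly decreases along backward steps.

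A second subtlety is whether transitions are stochastic. If they are, I would invoke \cref{ass:det} and treat environment steps as edges of the split graph, so that the same telescoping applies unchanged. The rest of the argument is bookkeeping.
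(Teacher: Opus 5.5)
Your proposal is correct and follows the paper's route. The paper cites part (i) of \cref{prop:pb} for the node-level law $P_F^{\mathrm{term}}(x\mid q)=\Rt(x)/Z^\star(q)$, proved exactly as in your Step~1 (sum trajectory balance over the paths into $x$ and use $\sum_{\tau\to x}\PB(\tau\mid x)=1$), and then sums over the $k(o)$ representing nodes as in your Step~2; your derivation of $Z(q)=Z^\star(q)$ and your rank/single-source argument for the backward normalisation spell out details the paper leaves implicit.
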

\begin{proof}
At the exact solution $P_F^{\mathrm{term}}(x\mid q)=\Rt(x)/Z^\star(q)$ for each terminal node $x$
(\cref{prop:pb}). Summing over the $k(o)$ nodes representing $o$ gives
\cref{eq:multiplicity}.
\end{proof}

\paragraph{Library versioning.}
A skill-library update changes the legal event set and therefore the graph
itself. We write $\Gph{k}$ for the graph of library version $k$ and condition all
learned objects on the version index, the policies $\PF(\cdot\mid s,q,k)$ and
$\PB(\cdot\mid s',q,k)$, and the flow $\Fpsi(s,q,k)$.
\chg{L146--158: the original treated $\Phi$ as an in-place edit of a fixed DAG.
Each evolution step produces a new graph.}

% ============================================================
\subsection{Forward and Backward Policies}
\label{sec:policies}

\paragraph{Semantic action factorisation.}
An event is a pair $e=(u,\mathrm{args})$ of a skill/tool identifier
$u\in\Slib{k}$ and structured arguments, so that
\begin{equation}
  \PF(e\mid s,q,k) \;=\; \PF(u\mid s,q,k)\,\PF(\mathrm{args}\mid u,s,q,k).
  \label{eq:pf}
\end{equation}
\chg{L61--63: $\PF$ was $\pi_\theta(a_t\mid r_t,H_{t-1})$, a free-text
distribution conditioned on the raw history. \cref{eq:pf} is a proper
categorical distribution over an enumerable event space and is
$\sigma$-measurable, as \cref{ass:markov} requires.}

\begin{remark}[Token-length normalisation]
\label{rem:lengthnorm}
Because each event is modelled directly as a skill identifier and structured
arguments, \cref{eq:pf} defines a normalised probability over executable events
without token-length correction, and a policy used consistently for sampling
and for balance evaluation satisfies the balance identities encoded by the residual in \cref{eq:residual}.
\end{remark}
\chg{L68--73: equation (6) deleted from the main development and demoted to this
remark.}

\paragraph{Backward policy.}
The backward policy is a distribution over \emph{in-edges}, parameterised with
hindsight features and normalised explicitly:
\begin{equation}
  \PB\bigl((s,e)\mid s'\bigr)
  \;=\;
  \frac{\exp f_\phi(s,e,s')}
       {\displaystyle\sum_{(\bar s,\bar e)\in\Inn(s')}\exp f_\phi(\bar s,\bar e,s')}.
  \label{eq:pb}
\end{equation}
\chg{L65--67: the original $P_\phi(a_t\mid H_{t-1}\oplus o_t^{\ex})$ is a
distribution over \emph{actions}, not over parents, and after token averaging it
is not normalised at all. Normalisation over in-edges (rather than parent
states) keeps two different skills reaching the same child distinguishable,
which skill-level credit requires.}

\begin{definition}[Canonical backward policy]
\label{def:pb0}
$\PBz\bigl((s,e)\mid s'\bigr)=1/|\Inn(s')|$: the valid in-edges are
weighted uniformly \emph{locally}. This uniform weighting is local, so complete linearisations of a trace can have different probabilities under the canonical backward policy $\PBz$.
\end{definition}

\begin{proposition}[Two faces of $\PB$]
\label{prop:pb}
Let $Z^\star(q)=\sum_{x\in\mathcal{X}}\Rt(x)$ and suppose balance holds exactly.
\begin{enumerate}[label=(\roman*),leftmargin=*]
  \item \emph{(Invariance.)} The terminal marginal satisfies
        $P_F^{\mathrm{term}}(x\mid q)=\Rt(x)/Z^\star(q)$ for every choice of normalised backward policy $\PB$ with full
        support.
  \item \emph{(Dependence.)} The internal state flows $F(s)$ do depend on $\PB$.
\end{enumerate}
\end{proposition}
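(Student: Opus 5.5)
Both parts follow from the standard GFlowNet identities on the finite DAG of \cref{prop:dag}. Throughout, "balance holds exactly" means the detailed-balance identity
\[
F(s)\,\PF(e\mid s)=F(s')\,\PB((s,e)\mid s')
\]
holds for every edge $s\xrightarrow{e}s'$, and the terminal anchor $F(x)=\Rt(x)$ holds. Zero \RtwoTB loss on all sub-trajectories implies this, since it includes every $j=i+1$ term. I will also use that $\PB$ is normalised over $\Inn(s')$ with full support, and that $\PF$ is normalised over the legal events at each nonterminal state.

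\emph{Step 1: flow conservation and identification of $F(s_0)$.}
Summing the edge identity over $\Inn(s')$ and using normalisation of $\PB$ gives the in-flow equation
\[
F(s')=\sum_{(s,e)\in\Inn(s')}F(s)\,\PF(e\mid s).
\]
Summing it over the out-edges of a nonterminal $s$ and using normalisation of $\PF$ gives the out-flow equation
\[
F(s)=\sum_{s'}F(s')\,\PB\bigl((s,e)\mid s'\bigr).
\]
Now define $\mu(s)=\Pr_{\PF}[\text{the forward process visits } s]$. It satisfies $\mu(s_0)=1$ and the recursion $\mu(s')=\sum_{(s,e)\in\Inn(s')}\mu(s)\PF(e\mid s)$. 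Because $\rho$ strictly increases along edges, induction on $\rho$ gives $\mu(s)=F(s)/F(s_0)$ for every $s$. In particular, $P_F^{\mathrm{term}}(x\mid q)=\mu(x)=\Rt(x)/F(s_0)$.

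Terminal visits are disjoint events, and every trajectory ends at a terminal within $T$ steps (\cref{ass:finite}). Hence $\sum_{x}\mu(x)=1$, which forces $F(s_0)=\sum_x\Rt(x)=Z^\star(q)$. This proves (i). Only normalisation and full support of $\PB$ were used, never its specific values, so the conclusion holds for every such $\PB$.

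\emph{Step 2: dependence.}
Exhibiting one example suffices. I will use the diamond produced by two independent events $A$ and $B$: states $s_0\to s_A,s_B\to s_{AB}$, with $s_{AB}$ leading to terminals. The merged state $s_{AB}$ has $|\Inn(s_{AB})|=2$, which is exactly the situation \cref{rem:tree} describes.

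First I will show that $\PB$ determines the interior flows. The out-flow equation, solved backwards from the terminals in decreasing rank, uniquely determines every $F(s)$ from $\Rt$ and $\PB$. For the diamond,
\[
F(s_A)=F(s_{AB})\,\PB\bigl((s_A,B)\mid s_{AB}\bigr),
\qquad
F(s_B)=F(s_{AB})\,\PB\bigl((s_B,A)\mid s_{AB}\bigr).
\]
Backward induction from the terminal anchors gives $F(s_{AB})>0$. Therefore changing $\PB(\cdot\mid s_{AB})$ from $(\tfrac12,\tfrac12)$ to $(p,1-p)$ with $p\neq\tfrac12$ changes $F(s_A)$ and $F(s_B)$.

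Next I will check that both choices of $\PB$ are realisable by a valid balanced flow. Given any full-support $\PB$, define $F$ backwards as above and set $\PF(e\mid s)=F(s')\PB((s,e)\mid s')/F(s)$. The out-flow equation guarantees this $\PF$ is normalised, and balance holds by construction. Consequently every full-support $\PB$ admits a balanced triple, and the internal flows differ across choices. This proves (ii).

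\emph{Main obstacle.}
The delicate point is Step 1's identification $F(s_0)=Z^\star(q)$, which needs two facts. The first is that the forward process terminates almost surely, which holds because of the finite horizon. The second is that no flow leaks to nonterminal dead ends; \cref{ass:finite} rules these out, since every nonterminal state has a nonempty legal event set. I would state both explicitly. I would also note that $\ell(x)=\log\Rt(x)$ fixes the terminal anchor even though $\ell(s)$ carries the bias $b_d$ at nonterminal states.
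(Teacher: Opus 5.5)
Your proof is correct, but part (i) takes a different route from the paper's.

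The paper argues at the level of whole trajectories. Exact balance gives $\PF(\tau)=\Rt(x_\tau)\PB(\tau\mid x_\tau)/Z^\star(q)$, and summing over the trajectories ending at $x$ removes $\PB$ because $\sum_{\tau\to x}\PB(\tau\mid x)=1$.

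You argue at the level of edges instead. Detailed balance plus normalisation of $\PB$ gives in-flow conservation. The forward visitation probabilities satisfy the same rank recursion, so $\mu(s)=F(s)/F(s_0)$. This is the occupancy identity the paper proves separately for compatible flows in \cref{lem:occupancy}.

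What each route buys:
\begin{itemize}
\item The paper's route is shorter and needs only trajectory balance on complete trajectories, with no interior state flow at all.
\item It does, however, write the normaliser directly as $Z^\star(q)$. Identifying the learned source flow with $Z^\star(q)$ actually requires summing over all complete trajectories and using that the forward process terminates.
\item Your route uses the stronger edge-level hypothesis. That hypothesis is legitimately available here, since zero \RtwoTB loss contains every $j=i+1$ term, at least on edges the training distribution visits.
\item In exchange, you derive $F(s_0)=Z^\star(q)$ explicitly from $\sum_x\mu(x)=1$ and name the termination and no-dead-end facts this uses. You also obtain the full occupancy $F(s)/Z^\star(q)$ of every state as a by-product.
\end{itemize}

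For part (ii) you use essentially the paper's two-path example. The paper places the merged state at the terminal itself and gets a flow ratio of $9$ versus $1$. Your extra step, showing that every full-support $\PB$ is realised by a balanced triple via backward reconstruction, is \cref{prop:compatible-triple}. The paper's proof of (ii) leaves that step implicit. Your version therefore makes explicit that the comparison is between two genuine exact-balance solutions with the same terminal law.
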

\begin{proof}
(i) Exact balance gives $\PF(\tau)=\Rt(x_\tau)\PB(\tau\mid x_\tau)/Z^\star(q)$.
Summing over trajectories reaching a fixed $x$ and using
$\sum_{\tau\to x}\PB(\tau\mid x)=1$ yields
$P_F^{\mathrm{term}}(x\mid q)=\Rt(x)/Z^\star(q)$.
(ii) Take a single terminal $x$ reached by two length-two paths through
distinct states $\hat u$ and $\hat v$. Then $F(\hat u)/F(\hat v)
=\PB(\hat u\text{-path}\mid x)/\PB(\hat v\text{-path}\mid x)$, which equals $9$
for $\PB=(0.9,0.1)$ and $1$ for $\PBz$, while the terminal marginal is
identical in both cases.
\end{proof}

\begin{remark}[Consequence for credit]
\label{rem:credit}
Terminal reward matching does not pin down internal flow: edge and state flows
read off $F$ depend on the backward policy, whereas the skill share $\Psi^0$ does
not (\cref{cor:share-estimator}). We keep training and readout separate: the flow
is trained with the learned $P_{B,\phi}$, and the reference readout is defined by
a compatible triple $(F^0,\PF^0,\PBz)$ and estimated from the trained policy's
rollouts with the residual weights of \cref{cor:share-estimator}.
\end{remark}
\chg{Central correction. An earlier draft argued that normalising $\PB$ makes
credit identifiable; it does not. A reference read-out is well-defined only
when the forward flow is reconstructed compatibly with the chosen $\PBz$.}

% ============================================================
\section{Balance Objectives and Flow Diagnostics}
\label{app:flow}
\subsection{The \RtwoTB Objective}
\label{sec:flow-objective}

We learn a forward policy $\PF$, a backward policy $\PB$, and an explicit
log-flow head $\log\Fpsi(s)$. For $0\le i<j\le T$ define the residual
\begin{equation}
  \dbres_{i:j}
  \;=\;
  \log\Fpsi(s_i)
  +\sum_{t=i+1}^{j}\log\PF(e_t\mid s_{t-1})
  -\log\Fpsi(s_j)
  -\sum_{t=i+1}^{j}\log\PB\bigl((s_{t-1},e_t)\mid s_t\bigr),
  \label{eq:residual}
\end{equation}
with the terminal anchor and source identification
\begin{equation}
  \log\Fpsi(x)\;=\;\log\Rt(x)\quad\forall x\in\mathcal{X},
  \qquad
  \Fpsi(s_0)\;=\;Z_\theta(q).
  \label{eq:anchor}
\end{equation}
The objective is
\begin{equation}
  \LTB
  \;=\;
  \sum_{0\le i<j\le T}\omega_{j-i}\,\dbres_{i:j}^{2},
  \label{eq:flow-objective}
\end{equation}
where $j=i+1$ recovers detailed balance, $(i,j)=(0,T)$ recovers trajectory
balance, and intermediate lengths interpolate between local bootstrapping and
whole-trajectory variance. \RtwoTB is this objective with the domain offset of
\cref{rem:domainbias} and the $r_t$-conditioned $\PF$ of \cref{sec:scope}.
\chg{L76--83: replaces $\Delta(\tau)$ and $\mathcal{L}_{TTB}=(\Delta/T)^2$.
Length weighting may be kept inside $\omega$, but see \cref{rem:rawdelta}: the
phase diagnostic must use the \emph{undivided} residual.}

\begin{remark}[Domain bias]
\label{rem:domainbias}
For a query of domain $d$ the implementation adds one scalar $b_d$ to
$\log Z_\theta(q)$ and to every intermediate $\log\Fpsi(s)$, and keeps the
terminal anchor at $\log\Rt(x)$. The offset cancels in every residual whose
endpoints are both nonterminal and shifts every residual that ends at a terminal
by $b_d$. Since the flow head is conditioned on $q$, $b_d$ keeps the zero-loss solutions
of \cref{prop:subtb-exact} and absorbs the shared domain offset during training.
It is updated toward a
closed-form target: after each step it moves by $0.3$ times the median, over the
domain's trajectories, of the shift
$c^\star=-\sum_{i}\omega_{T-i}\dbres_{i:T}/\sum_{i}\omega_{T-i}$ that
minimises each trajectory's loss, and it is carried to the next phase together
with $\psi$. At zero loss every $c^\star$ vanishes, so $b_d$ is stationary. The
within-query variance $V_q$ and the reference share $\Psi^0(u)$ do not depend on
$b_d$. The implementation uses
$\omega_{j-i}\propto 0.9^{\,j-i}$ normalised per trajectory, $\eta=4$, and
$\epsilon=0.1$.
\end{remark}

\begin{remark}[Which residual to report]
\label{rem:rawdelta}
$\log Z_\theta(q)$ enters $\dbres_{0:T}$ as a constant shift for a fixed $q$,
but $\log Z_\theta(q)/T$ does not when trajectory lengths vary. All diagnostics
in \cref{sec:phase} therefore use the raw residual $\dbres_{0:T}$ of \cref{eq:residual}, never a
length-normalised variant.
\end{remark}

% ============================================================
\subsection{Flow Diagnostics}
\label{sec:diag}

\begin{proposition}[What the state flow is]
\label{prop:flow}
Let $h_B(s\mid x)$ be the probability that backward sampling from terminal $x$
under $\PB$ passes through $s$. At exact balance,
\begin{equation}
  F(s)\;=\;\sum_{x\in\mathcal{X}}\Rt(x)\,h_B(s\mid x),
  \qquad F(s_0)=Z^\star(q),\qquad F(x)=\Rt(x).
  \label{eq:flowid}
\end{equation}
\end{proposition}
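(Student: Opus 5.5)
The plan is to prove the identity by working in the backward direction from the terminals, using exact balance in its detailed-balance form on every edge. Exact balance means the residual of \cref{eq:residual} vanishes for every $j=i+1$. This gives the edge identity
\[
F(s)\,\PF(e\mid s)=F(s')\,\PB\bigl((s,e)\mid s'\bigr)\qquad\text{for every edge } s\xrightarrow{e}s',
\]
with $F(x)=\Rt(x)$ at terminals by the anchor \cref{eq:anchor}. The boundary value $F(x)=\Rt(x)$ is then immediate. For a terminal $x$, the hitting probability satisfies $h_B(x\mid x')=\mathbf 1[x=x']$, because backward sampling from $x'$ starts at $x'$ and, by \cref{prop:dag}, never revisits a state of equal rank. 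This fixes the convention for the terminal case of the sum.

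For a non-terminal $s$, I would first use that $\PF(\cdot\mid s)$ is normalised over the finite nonempty legal event set (\cref{ass:finite}). Summing the edge identity over the out-edges of $s$ gives
\[
F(s)=\sum_{e:\,s\xrightarrow{e}s'}F(s')\,\PB\bigl((s,e)\mid s'\bigr).
\]
This is a backward recursion: $F$ at $s$ is determined by $F$ on successors, which have strictly larger rank. I then proceed by downward induction on $\rho$, starting from the maximal rank, where all states are terminal because the horizon is bounded. The inductive hypothesis is $F(s')=\sum_x\Rt(x)h_B(s'\mid x)$ for all successors $s'$. Substituting it into the recursion, the claim reduces to the first-step decomposition of hitting probabilities,
\[
h_B(s\mid x)=\sum_{(s,e,s')}h_B(s'\mid x)\,\PB\bigl((s,e)\mid s'\bigr),
\]
valid for $s\neq x$. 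This holds because the backward chain reaches $s$ only through one of its children. Any backward path visits each rank at most once, so the events ``pass through $s$ via child $s'$'' are disjoint across the out-edges $(e,s')$ of $s$. The strict rank monotonicity of \cref{prop:dag} is what guarantees this disjointness.

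Finally, $F(s_0)=Z^\star(q)$ follows because every backward trajectory ends at the unique source. By \cref{prop:dag} and full support, $h_B(s_0\mid x)=1$ for all $x$, so the sum reduces to $\sum_x\Rt(x)=Z^\star(q)$. This is consistent with \cref{prop:pb}(i).

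The main obstacle is bookkeeping rather than depth. First, the sum over edges must range over in-edges $(s,e)$, not parent states, since two distinct events may connect the same pair of states. Second, the backward chain must be well defined, which requires it to terminate at $s_0$: every non-source state needs a nonempty $\Inn$ and normalised $\PB$, which holds on the reachable graph. Third, reachability has to be handled. I would restrict to states reachable from $s_0$ and co-reachable to $\mathcal X$, noting that states with zero forward flow satisfy the identity trivially as $0=0$ under full support.
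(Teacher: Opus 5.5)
Your proof is correct, but it takes a different route from the paper's. The paper argues globally in two lines. At the optimum the trajectory flow is $F(\tau)=Z^\star\PF(\tau)=\Rt(x_\tau)\PB(\tau\mid x_\tau)$, and summing over trajectories through $s$ and grouping them by terminal turns $\sum_{\tau\to x,\,\tau\ni s}\PB(\tau\mid x)$ into $h_B(s\mid x)$. That route starts from trajectory balance, takes $F(s_0)=Z^\star$ from it, and relies on the state flow being the total trajectory flow through $s$ (the occupancy identity of \cref{lem:occupancy}).

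You argue locally. Detailed balance and normalisation of $\PF$ give the backward recursion $F(s)=\sum F(s')\PB((s,e)\mid s')$ over out-edges. First-step analysis of the backward chain shows that $\sum_x\Rt(x)h_B(\cdot\mid x)$ obeys the same recursion with the same terminal values, and induction on decreasing rank identifies the two.

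Your version buys three things. It makes explicit that the edge-level identities are what pin down the internal flow: by \cref{lem:res-add}, trajectory balance alone leaves internal flows free, so the paper's identification of $F(s)$ with a path sum implicitly uses detailed balance. It obtains $F(s_0)=Z^\star$ as an output, via $h_B(s_0\mid x)=1$, rather than an input. And it is the same decreasing-rank recursion as \cref{prop:compatible-triple}, so it also shows that a flow at exact balance is that reconstruction with $\PB^0=\PB$. The paper's version is shorter and exhibits $h_B$ directly as a sum of backward path probabilities, the form reused for the edge-share estimator of \cref{cor:share-estimator}.

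One justification needs repair. The claim $h_B(x\mid x')=0$ for terminals $x\neq x'$ does not follow from rank monotonicity, which does not exclude a terminal of smaller rank than $x'$. It follows from \cref{ass:finite}: terminals have no out-edges, so they are never predecessors on a backward walk.
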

\begin{proof}
Trajectory flow at the optimum is $F(\tau)=Z^\star\PF(\tau)=\Rt(x_\tau)\PB(\tau\mid
x_\tau)$. Summing over trajectories through $s$ and grouping by terminal state
gives \cref{eq:flowid}.
\end{proof}

\begin{remark}[The state flow as downstream reward mass]
\label{rem:notvalue}
The flow through $s$ that terminates at $x$ equals both $F(s)\PF(x\mid s)$ and
$\Rt(x)h_B(s\mid x)$, so $\PF(x\mid s)=\Rt(x)h_B(s\mid x)/F(s)$ and
\begin{equation}
  \E_{\PF}\bigl[\Rt\mid s\bigr]
  \;=\;\frac{1}{F(s)}\sum_{x}\Rt(x)^{2}\,h_B(s\mid x).
  \label{eq:notvalue}
\end{equation}
The second moment appears, so $F(s)\not\propto\E[\Rt\mid s]$ in general. $F$ is a
partition function over the downstream reward mass of $s$ and $\log F$ a free energy.
\end{remark}
\chg{L96--101: equation (10), $F(H_t)=\E[\exp(\sum\log I)\mid H_t]$, is deleted.
With full histories as nodes the conditioning fixed every term inside the
expectation, so the operator was vacuous. \cref{eq:flowid} is the correct
object. The sentence about a ``distributional estimator rather than a conserved
scalar field'' is also deleted: after the repair the flow \emph{is} conserved,
and the disclaimer worked against the method.}

\paragraph{Importance: normalised flow share.}
Define the normalized expected invocation mass routed through an edge and
through a skill under the reward-matching reference law,
\begin{equation}
  \varphi^0(e)\;=\;\frac{F^0(s)\,\PF^0(e\mid s)}{Z^\star(q)},
  \qquad
  \Psi^0(u)\;=\;\frac{\sum_{e\,\text{invoking}\,u}\varphi^0(e)}
                   {\sum_{e\in\mathcal E_{\mathrm{skill}}}\varphi^0(e)}.
  \label{eq:share}
\end{equation}
\chg{L103--107: replaces $\hat F(s)=\E_\tau[\sum_t F(H_t)]$, which summed a
strictly positive quantity and therefore confounded ``used often'' with
``matters per use'', and could never express harm.}

\begin{proposition}[Conservation]
\label{prop:cut}
If $\mathcal{C}$ is a cut, i.e.\ every complete trajectory visits exactly one
state of $\mathcal{C}$, then $\sum_{s\in\mathcal{C}}F^0(s)=Z^\star(q)$. For every nonterminal
state $s$, $\sum_{e\in\Out(s)}\varphi^0(e)=F^0(s)/Z^\star(q)$. If
$\mathcal{E}_{\mathrm{skill}}$ is the set of counted edges and every such edge
invokes exactly one skill, then
$\sum_{u}\Psi^0(u)=1$ when the denominator of $\Psi^0$ is restricted to
$\mathcal{E}_{\mathrm{skill}}$.
\end{proposition}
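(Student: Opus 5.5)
The plan is to prove the three claims of \cref{prop:cut} in order, working throughout with the reference flow. I take the compatible triple $(F^0,\PF^0,\PBz)$ to be an exact-balance solution. Then, by the argument in the proof of \cref{prop:flow}, the trajectory flow $F^0(\tau)=Z^\star(q)\,\PF^0(\tau)$ is a probability law on complete trajectories, scaled by $Z^\star(q)$. I also use the identity $F^0(s)=\sum_{\tau\ni s}F^0(\tau)$. It follows from \cref{eq:flowid}: sum $\Rt(x)\PBz(\tau\mid x)$ over trajectories through $s$, grouped by terminal, and note that $\sum_{\tau\ni s,\,\tau\to x}\PBz(\tau\mid x)=h_B(s\mid x)$. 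Equivalently, $F^0(s)/Z^\star(q)=\Prob_{\PF^0}(\tau\ni s)$, the probability that a forward rollout visits $s$. \Cref{prop:dag} guarantees the trajectory set is finite, so all sums are finite.

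For the cut statement, I write
\[
\sum_{s\in\mathcal C}F^0(s)=\sum_{s\in\mathcal C}\sum_{\tau\ni s}F^0(\tau)=\sum_\tau F^0(\tau)\,\bigl|\{s\in\mathcal C:s\in\tau\}\bigr|,
\]
exchanging the finite sums. Since every trajectory meets $\mathcal C$ exactly once, the cardinality equals $1$. The right-hand side is then $\sum_\tau F^0(\tau)=\sum_x\Rt(x)\sum_{\tau\to x}\PBz(\tau\mid x)=\sum_x\Rt(x)=Z^\star(q)$, where the normalisation $\sum_{\tau\to x}\PBz(\tau\mid x)=1$ holds because $\PBz$ has full support over in-edges and every backward walk reaches $s_0$ (\cref{prop:dag}). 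A sanity check: the cuts $\{s_0\}$ and $\mathcal X$ both reproduce $F(s_0)=Z^\star$ and $\sum_x\Rt(x)=Z^\star$.

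The per-state identity is immediate from the definition \cref{eq:share}. Since $s$ is nonterminal, $\PF^0(\cdot\mid s)$ is a normalised distribution over $\Out(s)$, which is finite and nonempty by \cref{ass:finite}. Hence
\[
\sum_{e\in\Out(s)}\varphi^0(e)=\frac{F^0(s)}{Z^\star(q)}\sum_{e}\PF^0(e\mid s)=\frac{F^0(s)}{Z^\star(q)}.
\]

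For the skill-share normalisation, the map $e\mapsto u(e)$ is a well-defined function on $\mathcal E_{\mathrm{skill}}$, since each counted edge invokes exactly one skill. The sets $\{e\in\mathcal E_{\mathrm{skill}}: e\text{ invokes }u\}$ therefore partition $\mathcal E_{\mathrm{skill}}$. Summing the numerators of $\Psi^0(u)$ over $u$ then gives exactly the denominator, so $\sum_u\Psi^0(u)=1$, provided that denominator is positive. Positivity follows whenever some reward-positive trajectory uses a skill edge: $\Rt>0$ because $\eps>0$, and full support of the forward policy on legal events makes every $\varphi^0(e)$ on such an edge positive. I would state this as a standing nondegeneracy condition.

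I expect the main obstacle to be rigour in the first step, specifically the identity $F^0(s)=\sum_{\tau\ni s}F^0(\tau)$ for the reference triple. It requires that $F^0$ genuinely is a balanced flow with backward policy $\PBz$ and terminal anchor $\Rt$, so that the path-flow decomposition applies. If compatibility is instead defined by constructing $F^0$ through backward induction from $\PBz$, namely $F^0(s)=\sum_{s'}F^0(s')\PBz(s\mid s')$ with the terminal anchor, then I would first derive the path decomposition by induction on decreasing rank $\rho$. That decomposition then feeds into the cut argument above.
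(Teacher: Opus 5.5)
Your proof is correct and is essentially the paper's argument. Both sum the trajectory masses $F^0(s_0)\PF^0(\tau)$ to $Z^\star(q)$ and partition trajectories by their unique cut state. Both then use the normalisation of $\PF^0(\cdot\mid s)$ for the per-state identity, and the partition of $\mathcal E_{\mathrm{skill}}$ by skill labels for $\sum_u\Psi^0(u)=1$.

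The path decomposition $F^0(s)=\sum_{\tau\ni s}F^0(\tau)$ that you isolate is exactly what the paper uses implicitly when it partitions by cut state. The proof of \cref{prop:flow} already presupposes that decomposition, so citing it is circular. Your fallback induction on rank, or the forward induction of \cref{lem:occupancy}, is the justification that avoids this.
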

\begin{proof}
For each terminal-directed trajectory $\tau$, let its reference trajectory mass be
$m^0(\tau)=F^0(s_0)\PF^0(\tau)$. Exact balance and the terminal anchor give
$\sum_{\tau}m^0(\tau)=Z^\star(q)$. Since the cut is visited exactly once,
partitioning the trajectories by their unique cut state yields
$\sum_{s\in\mathcal C}F^0(s)=\sum_{\tau}m^0(\tau)=Z^\star(q)$. For a fixed state,
$F^0(s)=\sum_{e\in\Out(s)}F^0(s)\PF^0(e\mid s)$, so dividing by $Z^\star(q)$
gives
$\sum_{e\in\Out(s)}\varphi^0(e)=F^0(s)/Z^\star(q)$. Finally, if each counted
skill edge has one and only one skill label, the skill sums form a partition of
$\sum_{e\in\mathcal E_{\mathrm{skill}}}\varphi^0(e)$; after dividing by this
same positive denominator, $\sum_u\Psi^0(u)=1$, the normalisation of the skill share in \cref{eq:share}.
\end{proof}

\paragraph{Utility: a separate, signed quantity.}
For utility we compare an event against a
\emph{fixed} reference distribution $\nu$, uniform over the legal events at $s$,
\begin{equation}
  A^{\mathrm{util}}_e
  \;=\;
  \widehat{\E}\bigl[\Rt\mid s,e\bigr]
  -\E_{e'\sim\nu(\cdot\mid s)}\bigl[\widehat{\E}[\Rt\mid s,e']\bigr],
  \label{eq:util}
\end{equation}
and discount it by the edge-level balance violation $\dbres_e=\dbres_{t-1:t}$,
\begin{equation}
  \widetilde{A}_e \;=\; A^{\mathrm{util}}_e\,
  \exp\!\bigl(-|\dbres_e|/\tau_c\bigr).
  \label{eq:discount}
\end{equation}
Here $\widehat{\E}[\Rt\mid s,e]$ estimates
$Q^{\pi_{\mathrm{eval}}}(s,e)$ under a fixed continuation policy
$\pi_{\mathrm{eval}}$.
\chg{New. A reference distribution that is the \emph{current} policy drifts as
the policy collapses; $\nu$ is frozen. The discount reduces the influence of an
unconverged flow model on proposal ordering; the verifier and validation gate
still determine commitment.}

% ============================================================
\section{Evidence, Gating, and Skill Evolution}
\label{app:evolve}
\subsection{Verifier-Gated Bayesian Calibration}
\label{sec:bayes}

Each invocation carries a context feature vector
\begin{equation}
  z=\bigl[\text{context class},\ \text{failure mode},\
          \text{token bucket},\ \text{horizon bucket}\bigr].
  \label{eq:z}
\end{equation}
Context features use only information available before the invocation; a
failure mode observed afterwards enters later invocations as a lagged feature.

\paragraph{Evidence.}
Labels come from independent verifiers (unit tests, schema validation,
assertions, execution status) with confidence $c^{\mathrm{ver}}_e\in[0,1]$.
For each skill/context cell define the verifier-record index set, which collects the
verifier records of skill $u$ observed in context $z$,
\begin{equation}
  \mathcal E^{\mathrm{ver}}_{u,z}
  =\{e\in\mathcal E^{\mathrm{ver}}:\ u(e)=u,\ z(e)=z\}.
\end{equation}
Only these verifier records enter the gate posterior:
\begin{align}
  \alpha^{\mathrm{ver}}_{u,z} &= \alpha_{0,u}
    + \sum_{e\in\mathcal{E}^{\mathrm{ver}}_{u,z}} c^{\mathrm{ver}}_e y_e,
  \label{eq:alpha}\\
  \beta^{\mathrm{ver}}_{u,z} &= \beta_{0,u}
    + \sum_{e\in\mathcal{E}^{\mathrm{ver}}_{u,z}} c^{\mathrm{ver}}_e (1-y_e).
  \label{eq:beta}
\end{align}
A soft label $\widetilde y_e\in[0,1]$ derived from $\widetilde A_e$ enters a
separate proposal score $q^{\mathrm{soft}}_{u,z}$ that orders candidates and
stays outside the gate posterior $(\alpha^{\mathrm{ver}},\beta^{\mathrm{ver}})$ of \cref{eq:alpha,eq:beta}.
\chg{L116--122: the original weighted the conjugate update by the flow
magnitude $w_e\propto F(H^e_t)$. Since $F$ was an exponential of a cumulative
sum it was unbounded, so $\alpha+\beta$ carried no sample-size meaning and a
single trajectory could dominate the posterior. Flow now plays a different
role, below.}

\begin{remark}[Flow allocates the verification budget]
\label{rem:acq}
$\varphi^0(e)$ decides \emph{which} invocations are worth verifying; verifier
confidence decides \emph{how much} an
observation counts as evidence. Keeping the two roles separate is what makes the
posterior interpretable.
\end{remark}

\begin{remark}[Effective sample size]
\label{rem:ess}
With fractional weights, the quantity that behaves like a sample count is Kish's effective sample size \citep{kish1965survey},
\begin{equation}
  N_{\mathrm{eff}}=\frac{\bigl(\sum_e w_e\bigr)^{2}}{\sum_e w_e^{2}}
  \quad(\text{zero without records}),
  \label{eq:ess}
\end{equation}
not $\sum_e w_e$: one weight of size $N$ with the rest near zero has
$\sum_e w_e=N$ but $N_{\mathrm{eff}}\approx1$.
\end{remark}

\paragraph{Hierarchical prior and credible bounds.}
Contexts are pooled towards the skill-level reliability $\mu_u$ with strength
$\kappa_u$, using the same prior parameters as the verifier gate,
$\alpha_{0,u}=\kappa_u\mu_u$ and
$\beta_{0,u}=\kappa_u(1-\mu_u)$:
\begin{equation}
  p_{u,z}\ \sim\ \mathrm{Beta}\bigl(\kappa_u\mu_u,\ \kappa_u(1-\mu_u)\bigr),
  \label{eq:hier}
\end{equation}
and decisions use Beta quantiles rather than a normal approximation:
\begin{equation}
  \mathrm{LCB}_{u,z}=Q_{a}(\alpha^{\mathrm{ver}}_{u,z},\beta^{\mathrm{ver}}_{u,z}),
  \qquad
  \mathrm{UCB}_{u,z}=Q_{1-a}(\alpha^{\mathrm{ver}}_{u,z},\beta^{\mathrm{ver}}_{u,z}).
  \label{eq:bounds}
\end{equation}
With fractional confidences these bounds are credible quantiles of a power
posterior (\cref{prop:power-posterior}). The $-$~Credible bounds ablation of
\Cref{tab:ablation} replaces them by the posterior mean
$\alpha^{\mathrm{ver}}_{u,z}/(\alpha^{\mathrm{ver}}_{u,z}+\beta^{\mathrm{ver}}_{u,z})$.

% ============================================================
\subsection{Phase Transition and Skill Evolution}
\label{sec:phase}

\paragraph{A partition-function-free stagnation statistic.}
For a fixed query $q$, fixed graph, policy, and trajectory law,
the source log-flow $\log Z_\theta(q)$ shifts every complete-trajectory residual $\dbres_{0:T}(\tau)$ by the same constant, hence the residual variance
\begin{equation}
  V_q \;=\; \Var_{\tau\mid q}\bigl[\dbres_{0:T}(\tau)\bigr]
  \label{eq:vardiag}
\end{equation}
is invariant to that source-log-flow shift. It is estimated from independent
rollouts of each query, and queries are aggregated with a random-effects model.
\chg{L139--143: the original trigger used the relative decrease of
$\overline{\Delta^2}$. That statistic is contaminated by how well $Z_\theta$ has
been fitted and is not comparable across library versions.}

\paragraph{Trigger.}
Let $\bar V_w$ be $V_q$ aggregated over the held-out validation query set (\Cref{app:full-results}) at training step
$w$, $b$ the slope of a linear fit to $\bar V_w$ over a window $W$, and $\mathrm{CI}(b)$ its confidence interval at a fixed level. A phase boundary is declared iff all three hold:
\begin{equation}
  \mathrm{CI}(b)\subseteq[-\eps_b,\,\eps_b],
  \qquad
  \frac{\bar V_{w-W}-\bar V_{w}}{\max\{\bar V_{w-W},\,v_{\min}\}}\in[0,\gamma_{\mathrm{var}}),
  \qquad
  \Delta H_{\mathrm{norm}} < -h_0 ,
  \label{eq:trigger}
\end{equation}
where $\eps_b>0$ is the slope tolerance, $v_{\min}>0$ a numerical floor,
$\gamma_{\mathrm{var}}>0$ the plateau tolerance, $h_0>0$ a margin, and
$\Delta H_{\mathrm{norm}}$ the change over the window of the normalised skill
entropy $H_{\mathrm{norm}}=H(\mathcal{S}\mid q,z)/\log|\mathcal{S}_{\mathrm{active}}|$,
with $H(\mathcal{S}\mid q,z)=-\sum_u p_{u\mid q,z}\log p_{u\mid q,z}$ and
$p_{u\mid q,z}\propto n_{u,q,z}^{\mathrm{call}}$ computed from the held-out
rollouts in that query/context cell. The first condition is an equivalence test
that places the whole interval inside a tolerance band, the lower bound of zero
in the second excludes divergence, and the normalised entropy keeps changes in
library size from masquerading as saturation.
\chg{L139--142: ``the slope is not significantly negative'' is absence of
evidence and is satisfied by any under-powered test. \cref{eq:trigger} uses an
equivalence criterion (the whole interval inside a tolerance band), a lower
bound of $0$ on the relative change to exclude divergence, and a normalised
entropy so that library size changes do not masquerade as saturation.}

\paragraph{Evolution operator.}
The library is refined by
\begin{equation}
  \Slib{k+1} \;=\; \Phi\bigl(\Slib{k};\ \Psi^0(u),\ n_u^{\mathrm{call}},\ \mathrm{LCB}_{u,z},\
  \mathrm{UCB}_{u,z},\ \widetilde{A}_e\bigr),
  \label{eq:phi}
\end{equation}
whose actions live at three levels: single skills, skill pairs and context
clusters.

\emph{Single-skill decisions}, evaluated in this order for each skill $u$:
\begin{enumerate}[label=(\arabic*),leftmargin=*]
  \item \textbf{Defer / Explore}: $N_{\mathrm{eff},u}<n_{\min}$. No structural
        change; allocate verification and exploration budget instead, since the effective sample size of $u$ is below $n_{\min}$.
  \item \textbf{Split}: with sufficient support in each context,
        $\Prob\bigl(\max_z p_{u,z}-\min_z p_{u,z}>\theta_h\mid\mathcal{D}\bigr)
        >1-a$.
  \item \textbf{Refine}: globally sound but locally confirmed weak,
        $\mathrm{LCB}^{\mathrm{glob}}_{u}\ge\theta_{\mathrm{mid}}$ and
        $\exists z:\ \mathrm{UCB}_{u,z}<\theta_{\mathrm{low}}$.
  \item \textbf{Retain}: confirmed reliable at the skill level, $\mathrm{LCB}^{\mathrm{glob}}_{u}\ge\theta_{\mathrm{high}}$.
  \item \textbf{Prune}: $\max_z \mathrm{UCB}_{u,z}<\theta_{\mathrm{low}}$ and
        $\overline{\widetilde{A}}(u)\le 0$ and $u$ provides no unique coverage.
\end{enumerate}
\emph{Pairwise}: \textbf{Consolidate} is a redundancy test on a skill pair: near
equivalence of invocation contexts, outputs and verifier outcomes, with no
coverage loss on removal; downstream reward and cost are checked by the paired validation below.
\emph{Cluster-level}: \textbf{Generate} requires a context cluster with
sufficient support, concentrated failure mass and no existing skill whose
$\mathrm{UCB}$ is adequate there, and the new skill must be checkable by the
verifiers.
Here $N_{\mathrm{eff},u}$ is the Kish effective sample size \eqref{eq:ess} over
the verifier records of skill $u$, and sufficient support means an effective
sample size of at least $n_{\min}$ for the skill, context or cluster concerned;
$\mathrm{LCB}^{\mathrm{glob}}_u$ is the lower credible bound of the skill-level
posterior pooled over contexts; $u$ provides unique coverage when some context it
serves is served by no other skill; and $\overline{\widetilde A}(u)$ aggregates
$\widetilde A_e$ over the invocations of $u$ and serves for proposal ordering
and as the utility veto of Prune. The thresholds satisfy
$0\le\theta_{\mathrm{low}}<\theta_{\mathrm{mid}}<\theta_{\mathrm{high}}\le1$.
\chg{L150--157: the five original bullets mixed three granularities and
overlapped (high flow with low LCB matched both Refine and Split). Note that
Refine now keys on a \emph{low UCB} locally: a low LCB alone only means the
evidence is thin.}

\paragraph{Versioning, rollback, hysteresis.}
Each application of $\Phi$ yields $\Gph{k+1}$ and is accepted only after paired
rollouts on the held-out validation query set with a non-inferiority margin on success,
tempered terminal reward, token cost and latency, assessed by a one-sided test per metric against that margin
\citep{schuirmann1987comparison}; otherwise it is rolled back. A cooldown prevents
a skill from being re-edited within the cooldown window through
Split--Refine--Prune.
\chg{L158: ``the partition function $Z_\theta(q)$ is re-initialised'' is
replaced by a rule: if the terminal support and reward are unchanged, $Z_\theta$
is warm-started; only a change of reachable outcomes calls for re-calibration,
and even then the old value initialises it.}

\begin{definition}[Verifier-only acceptance gate]
\label{def:accept}
Let $\mathcal C_k$ be the candidate edits proposed or ranked from flow
diagnostics, signed utility, and retained records at phase $k$. Define
$\operatorname{Acc}_k(d)=1$ only if (i) every structural predicate for $d$ is
computed from the verifier posterior
$(\alpha^{\mathrm{ver}},\beta^{\mathrm{ver}})$ and its support threshold
$n_{\min}$, (ii) the candidate passes the paired held-out
non-inferiority check, and (iii) the version, cooldown and coverage predicates
hold. The soft score $q^{\mathrm{soft}}$ and $\widetilde A_e$ rank $\mathcal C_k$, and Prune
additionally requires the utility veto $\overline{\widetilde A}(u)\le0$ for the pruned skill $u$.
\end{definition}

\begin{proposition}[Gate and rollback invariants]
\label{prop:gate}
If $\operatorname{Acc}_k$ is applied after candidate generation and before a
library version is committed, then every committed edit
has verifier-only structural eligibility and paired validation, while a failed
validation or failed gate leaves the active library equal to $\Slib{k}$.
\end{proposition}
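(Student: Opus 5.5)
The plan is to treat the claim as a safety invariant of the commit procedure and prove it by case analysis on the outcome of the gate for a single phase, followed by induction over phases. The first step is to fix the order of operations the proposition assumes. Candidate generation produces $\mathcal C_k$, which may be ranked by $\Psi^0$, $\widetilde A_e$ and $q^{\mathrm{soft}}$. Each candidate $d\in\mathcal C_k$ is then evaluated by $\operatorname{Acc}_k$ of \cref{def:accept}. A library version $\Slib{k+1}$ is written only from the candidates with $\operatorname{Acc}_k(d)=1$, and otherwise the active pointer stays on the versioned copy $\Slib{k}$. Committing is the only operation that changes the active library, so versioning makes this well defined.

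For the first conclusion, take any committed edit $d$. Committing requires $\operatorname{Acc}_k(d)=1$, and by clause (i) of \cref{def:accept} every structural predicate used for $d$ is a function of $(\alpha^{\mathrm{ver}},\beta^{\mathrm{ver}})$ and $n_{\min}$ only. These predicates are Defer, Split, Refine, Retain and Prune via LCB/UCB/$N_{\mathrm{eff}}$, together with the Consolidate and Generate support tests. By \cref{eq:alpha,eq:beta}, these posteriors are built solely from verifier records in $\mathcal E^{\mathrm{ver}}_{u,z}$. The soft label $\widetilde y_e$ and $\widetilde A_e$ enter only $q^{\mathrm{soft}}$ and the ordering, or, for Prune, an extra veto that can only remove eligibility. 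So eligibility is verifier-only. Paired validation follows immediately from clause (ii).

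For the second conclusion, I split into two cases. (a) The gate fails for every candidate, or for the edit under consideration. Then nothing is committed, and by the versioning rule the active library is unchanged, so it equals $\Slib{k}$. (b) The gate passes but the paired non-inferiority test then fails. By the rollback rule of the versioning paragraph the tentative $\Slib{k+1}$ is discarded, and the active library reverts to the stored $\Slib{k}$.

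The main obstacle is that edits are applied in the fixed order Defer, Split, $\ldots$, Generate, so a partially applied batch could leave an intermediate library. I would handle this by defining the tentative library as a copy and commit as atomic. The invariant then needs only that rollback restores the stored version, not that each sub-edit is individually undone.

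Finally, induction on $k$ extends the invariant to every committed version.
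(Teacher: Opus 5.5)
Your proposal is correct and follows essentially the same route as the paper. Both arguments rest on three facts: a commit requires $\operatorname{Acc}_k(d)=1$, whose clauses (i)--(ii) supply verifier-only eligibility and paired validation; $q^{\mathrm{soft}}$ and $\widetilde A_e$ can only veto an edit (through Prune), never make it eligible; and the defer/rollback branch is defined to return $\Slib{k}$. Your atomic-copy commit and the induction over $k$ are harmless extras the paper does without, and since paired validation is clause (ii) of $\operatorname{Acc}_k$, your case (b) is really a sub-case of a failed gate.
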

\begin{proof}
By definition, a candidate can enter the commit branch only when
$\operatorname{Acc}_k(d)=1$, which includes the verifier-only predicate and the
paired non-inferiority check. If either check fails, the transition takes the
defer or rollback branch, whose returned library is defined to be the previous
version $\Slib{k}$. Since $q^{\mathrm{soft}}$ and $\widetilde A_e$ cannot supply the verifier-only
eligibility condition, changing them while holding the verifier posterior and
validation outcomes fixed cannot turn an ineligible candidate into an eligible
one. Reward-derived signals can therefore block an edit through the utility
veto of Prune but never make one eligible for the commit branch.
\end{proof}

\begin{proposition}[Phase-state recursion]
\label{prop:recursion}
Let $X_k$ contain the active library and graph version, trainable policy and flow
parameters, retained verifier statistics, and all persistent bookkeeping read by
the next transition. Under a fixed outer protocol $\lambda$ and phase randomness
$\zeta_k$, the completed phase transition has the form
\begin{equation}
  X_{k+1}=\mathcal U_\lambda(X_k;\zeta_k),\qquad
  r_k=\mathcal L_\lambda(X_k,X_{k+1};\zeta_k),\qquad
  \mathcal H^{\mathrm{imp}}_{k+1}
    =\mathcal H^{\mathrm{imp}}_k\frown[r_k].
  \label{eq:state-transition}
\end{equation}
Consequently $\Xi_k=(X_k,\mathcal H^{\mathrm{imp}}_k)$, which pairs $X_k$ with the log of edit decisions and validation outcomes, obeys the well-defined
recurrence $\Xi_{k+1}=\mathfrak R_\lambda(\Xi_k;\zeta_k)$.
\end{proposition}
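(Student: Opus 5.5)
The plan is to prove \cref{prop:recursion} by construction. I will write the completed phase as a finite composition of measurable maps, each of which reads only $X_k$, the fixed protocol $\lambda$ and one component of $\zeta_k$. I first split the phase randomness as
\[
\zeta_k=(\zeta_k^{\mathrm{roll}},\zeta_k^{\mathrm{opt}},\zeta_k^{\mathrm{ver}},\zeta_k^{\mathrm{val}}),
\]
whose components are the rollout seeds, the optimiser noise, the choice of verifier calls and the paired validation draws. Following the loop of Section~4.3, I then list the stages in order: (a) \RtwoTB training until the trigger \cref{eq:trigger} fires; (b) the diagnostics $\Psi^0$, $\varphi^0$, $\widetilde A_e$ and $V_q$, read through \cref{cor:share-estimator} and \cref{eq:util,eq:discount}; (c) verification allocated by $\varphi^0$ (\cref{rem:acq}), followed by the posterior update \cref{eq:alpha,eq:beta}; (d) candidate generation, the operator $\Phi$ of \cref{eq:phi}, and the gate $\operatorname{Acc}_k$ of \cref{def:accept}; (e) commit or rollback, which versions the library and graph $\Gph{k+1}$ and warm-starts $Z_\theta$ and $b_d$.

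For each stage I will check that its output depends only on $(X_k,\lambda,\zeta_k)$ together with the outputs of earlier stages, so that the composite $\mathcal U_\lambda$ is well defined.

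Stage (a) needs a stopping argument. The trigger is evaluated on held-out rollouts that are determined by the current parameters and $\zeta_k^{\mathrm{roll}}$. The protocol $\lambda$ caps the phase at a maximum step count, and the compute budget in Section~4.3 is finite, so the stopping time is finite and measurable and the stopped parameters are a function of $(X_k,\zeta_k)$.

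Stages (b)–(c) are finite sums over finitely many rollouts on the finite DAG of \cref{prop:dag}. This relies on the finite horizon and finite branching of \cref{ass:finite}, together with the Markov assumptions \cref{ass:quotient,ass:markov}, which guarantee that no discarded raw history is read.

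Stages (d)–(e) are handled by \cref{prop:gate}: the result is either a committed library, or $\Slib{k}$ on the defer or rollback branch. I fix the evaluation order of $\Phi$ (single-skill rules first, then pairwise, then cluster-level) and a tie-breaking rule inside $\lambda$. With these choices $\Slib{k+1}$ is single-valued even when several candidates tie in ranking.

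I define $r_k=\mathcal L_\lambda(X_k,X_{k+1};\zeta_k)$ as the record of the edit decisions, gate outcomes and validation statistics. These were all computed during stages (c)–(e), so $r_k$ is itself a deterministic function of the same arguments. Appending it to the log gives $\mathcal H^{\mathrm{imp}}_{k+1}$, and pairing the two components defines
\[
\mathfrak R_\lambda(\Xi_k;\zeta_k)=\bigl(\mathcal U_\lambda(X_k;\zeta_k),\ \mathcal H^{\mathrm{imp}}_k\frown[\mathcal L_\lambda(X_k,\mathcal U_\lambda(X_k;\zeta_k);\zeta_k)]\bigr).
\]
This is the claimed recurrence. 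If $\zeta_k$ is drawn independently of the past given $X_k$, the recurrence also makes $(\Xi_k)$ a time-homogeneous Markov chain.

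I expect the main obstacle to be closure: showing that $X_k$ really contains everything the next transition reads. The risky items are the items that persist across phases, namely the cooldown counters, $b_d$, the warm-started $Z_\theta$, the lagged failure-mode features in $z$, and the random-effects aggregate behind $\bar V_w$. Any of these could silently depend on earlier phases. I will handle this by definition, placing all of them in the persistent bookkeeping of $X_k$ as the statement permits. I will then verify item by item that each is updated only from the current phase's quantities, so that no dependence on $\mathcal H^{\mathrm{imp}}_k$ beyond what $X_k$ stores is needed.
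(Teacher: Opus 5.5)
Your proposal is correct and matches the paper's proof, which is just your final pairing step plus closure by definition: the fixed protocol maps a completed phase state and its randomness to the next state, $\mathcal L_\lambda$ is total on completed transitions so concatenation defines $\mathcal H^{\mathrm{imp}}_{k+1}$, pairing gives $\mathfrak R_\lambda$, and $X_k$ by definition holds every field the next transition reads; your stage-by-stage decomposition and the explicit tie-breaking that makes $\Phi$ single-valued spell out what the paper leaves implicit. Drop the stopping argument for stage (a): the paper never gives $\lambda$ a per-phase step cap (the trigger of \cref{eq:trigger} declares a boundary iff its three conditions hold, and exhausting the compute budget ends the run without completing a phase), and no cap is needed because the statement covers completed transitions only; your appeal to the Markov assumptions in stages (b)--(c) is likewise unnecessary for the recursion.
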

\begin{proof}
The fixed protocol maps a completed phase state and its random input to the next
state. The record map $\mathcal L_\lambda$ is total on completed transitions, so
sequence concatenation defines the next audit log. Pairing the two updates gives
$\mathfrak R_\lambda$. Because $X_k$ contains every field the next transition
reads, this map is a function of the phase state $\Xi_k$ for fixed phase randomness $\zeta_k$.
\end{proof}

% ============================================================
\subsection{Approximate Balance and Abstraction Error}
\label{sec:scope}

\paragraph{Reasoning-conditioned events.}
The implemented residual scores $\log\PF(e_t\mid s_{t-1},r_t)$ with the sampled
reasoning text as context; free-text arguments of a skill call are treated the
same way, so $\PF(\mathrm{args}\mid u,s)$ scores only the structured arguments. Sampling and
scoring then use the same conditional policy, so the balance identities apply to
it (\cref{rem:lengthnorm}). When $|\dbres_{0:T}|\le\varepsilon$
holds for every sampled reasoning text, the terminal-distribution bound of \cref{prop:eps} applies to the marginal
policy as well, because the marginal trajectory probability averages the
conditional trajectory probabilities, each of which satisfies the residual bound.

\begin{proposition}[Residual to terminal distribution]
\label{prop:eps}
Let $\widehat Z=F_\psi(s_0)=Z_\theta(q)$ be the learned source flow and
$Z^\star=\sum_{x\in\mathcal X}\Rt(x)$ the true reward partition function. On
the finite reachable DAG, assume that the forward and backward policies have
positive support on every legal edge and that
$|\dbres_{0:T}(\tau)|\le\varepsilon$ for every complete legal trajectory
$\tau$ (hence for every terminal-support fibre). Then for every terminal
$x$,
\begin{equation}
  e^{-\varepsilon}\;\le\;\frac{\widehat Z\,P_F^{\mathrm{term}}(x\mid q)}{\Rt(x)}\;\le\;e^{\varepsilon},
  \qquad
  e^{-\varepsilon}\;\le\;\frac{\widehat Z}{Z^\star}\;\le\;e^{\varepsilon},
  \qquad
  e^{-2\varepsilon}\;\le\;\frac{Z^\star P_F^{\mathrm{term}}(x\mid q)}{\Rt(x)}\;\le\;e^{2\varepsilon}.
  \label{eq:epsbound}
\end{equation}
\end{proposition}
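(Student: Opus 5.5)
The plan is to exponentiate the full-trajectory residual and then sum over backward paths. This mirrors the proof of \cref{prop:pb}(i), with equalities replaced by two-sided bounds.

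\emph{Step 1: rewrite the residual.} For a complete trajectory $\tau=(s_0\to\cdots\to s_T=x)$, the anchors of \cref{eq:anchor} give
\begin{equation*}
  \dbres_{0:T}(\tau)=\log\widehat Z+\log\PF(\tau)-\log\Rt(x)-\log\PB(\tau\mid x),
\end{equation*}
where $\PF(\tau)=\prod_t\PF(e_t\mid s_{t-1})$ and $\PB(\tau\mid x)=\prod_t\PB((s_{t-1},e_t)\mid s_t)$. The domain bias $b_d$ of \cref{rem:domainbias} shifts $\log\widehat Z$ and every intermediate $\log\Fpsi$ by the same constant. It therefore telescopes away from intermediate terms, and we absorb it into $\widehat Z$. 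The hypothesis $|\dbres_{0:T}|\le\varepsilon$ then becomes
\begin{equation*}
  e^{-\varepsilon}\Rt(x)\PB(\tau\mid x)\le\widehat Z\,\PF(\tau)\le e^{\varepsilon}\Rt(x)\PB(\tau\mid x).
\end{equation*}

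\emph{Step 2: marginalise over the paths into $x$.} We sum over all complete legal trajectories ending at $x$. The left side sums to $\widehat Z\,P_F^{\mathrm{term}}(x\mid q)$. The key fact needed is that $\sum_{\tau\to x}\PB(\tau\mid x)=1$, which gives the first bound in \cref{eq:epsbound}. This step is the main obstacle, and it uses three ingredients.
\begin{itemize}
  \item \Cref{prop:dag} shows the graph is a finite DAG with single source $s_0$.
  \item Every state reachable backward from $x$ is itself forward-reachable from $s_0$, so every backward path from $x$ terminates at $s_0$. Each backward step strictly decreases $\rho$, so such paths end at a state with no in-edges, and within the reachable DAG that state is $s_0$.
  \item $\PB(\cdot\mid s')$ is normalised over $\Inn(s')$, as in \cref{eq:pb}.
\end{itemize}
Under these conditions, backward sampling from $x$ is a proper Markov chain whose path probabilities sum to one over the finitely many paths $s_0\leadsto x$. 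Positive support guarantees that each such path is a legal forward trajectory. Hence the hypothesis covers every path, and no mass escapes on either side.

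\emph{Step 3: compare the partition functions.} Summing the first bound over $x\in\mathcal X$ and using $\sum_x P_F^{\mathrm{term}}(x\mid q)=1$ gives
\begin{equation*}
  e^{-\varepsilon}Z^\star\le\widehat Z\le e^{\varepsilon}Z^\star,
\end{equation*}
which is the second bound. The forward policy is normalised and, by \cref{ass:finite}, every forward path terminates within the horizon, so the terminal marginal is a probability distribution. Finally, write
\begin{equation*}
  \frac{Z^\star P_F^{\mathrm{term}}(x\mid q)}{\Rt(x)}=\frac{Z^\star}{\widehat Z}\cdot\frac{\widehat Z\,P_F^{\mathrm{term}}(x\mid q)}{\Rt(x)}.
\end{equation*}
Multiplying the first two bounds then yields the third with $e^{\pm2\varepsilon}$.
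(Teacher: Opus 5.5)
Your proposal is correct and follows the paper's proof essentially step for step. You exponentiate $\dbres_{0:T}$, sum over the trajectories into $x$ using $\sum_{\tau\to x}\PB(\tau\mid x)=1$, sum over terminals to bound $\widehat Z/Z^\star$, and multiply the two bounds to get $e^{\pm2\varepsilon}$. Your extra justifications (backward walks ending at $s_0$ by rank monotonicity, the terminal marginal summing to one, and absorbing $b_d$ into the source flow) spell out what the paper's short proof leaves implicit.
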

\begin{proof}
$\dbres_{0:T}(\tau)=\log\bigl[\widehat Z\PF(\tau)/(\Rt(x_\tau)\PB(\tau\mid
x_\tau))\bigr]$. Exponentiating the residual bound and summing over trajectories
ending at $x$, while using $\sum_{\tau\to x}\PB(\tau\mid x)=1$, gives the first
inequality. Summing the first inequality after multiplying by
$\Rt(x)/\widehat Z$ over all terminal states gives
$e^{-\varepsilon}\le\widehat Z/Z^\star\le e^{\varepsilon}$. Multiplying the
first inequality by $Z^\star/\widehat Z$, which the second inequality bounds, yields the third inequality, with the exponent $2\varepsilon$ in place of $\varepsilon$.
\end{proof}

\begin{proposition}[Abstraction error propagation]
\label{prop:absterr}
Let $\sigma$ be the abstraction map and $\pi$ a common abstract policy. For a
concrete history $H$ with $\sigma(H)=s$, let
$K_{H,t}^{\sigma,\pi}(\cdot\mid s)$ be the history-level one-step
terminal-or-successor kernel, and let $\overline K_t(\cdot\mid s)$ be the
adopted quotient kernel. Assume a common initial abstract distribution, terminal
space, policy and termination convention, and
\begin{equation}
  \sup_{H:\,\sigma(H)=s}
  \operatorname{TV}\!\left(K_{H,t}^{\sigma,\pi}(\cdot\mid s),
  \overline K_t(\cdot\mid s)\right)\le \varepsilon_t
  \quad\text{for every reachable }s\text{ at decision step }t.
  \label{eq:abst-kernel}
\end{equation}
If terminated paths are padded by a common absorbing transition and all paths
have at most $T$ decision steps, then the terminal marginals of the history-level
and quotient processes differ by at most
$\min\{1,\sum_{t=0}^{T-1}\varepsilon_t\}$ in total variation. With
$\varepsilon_t=0$ at every decision step, the kernel bound of \cref{eq:abst-kernel} is the model-irrelevance condition of
\citet{li2006towards}, and the terminal marginals of the history-level and quotient processes then coincide.
\end{proposition}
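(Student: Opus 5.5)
The plan is a standard coupling (telescoping) argument over the $T$ decision steps, carried out on the level of path laws rather than terminal marginals. Since the terminal marginal is a pushforward of the path law, total-variation bounds on path laws transfer to terminals by the data-processing inequality. Write $\mu$ for the law of the padded concrete process projected to abstract paths $(s_0,\dots,s_T)$ (with the absorbing padding, every path has exactly $T$ transitions), and $\bar\mu$ for the law of the quotient Markov chain driven by $\overline K_t$. Both start from the common initial abstract distribution, use the common policy $\pi$ and share the termination convention. Hence the only discrepancy at step $t$ is between $K^{\sigma,\pi}_{H,t}(\cdot\mid s)$ and $\overline K_t(\cdot\mid s)$, which \cref{eq:abst-kernel} bounds uniformly over the fibre $\{H:\sigma(H)=s\}$.

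The key step is a hybrid sequence. For $0\le t\le T$, define the process $P^{(t)}$ that runs the concrete history-level dynamics for the first $t$ decision steps and then switches to the quotient kernel $\overline K$ for the remaining steps. Because $\overline K$ reads only the abstract state, this switch is well defined. Then $P^{(T)}$ projects to $\mu$ and $P^{(0)}$ gives $\bar\mu$. By the triangle inequality,
\begin{equation*}
\operatorname{TV}(\mu,\bar\mu)\le\sum_{t=0}^{T-1}\operatorname{TV}\bigl(P^{(t+1)},P^{(t)}\bigr).
\end{equation*}
The two processes $P^{(t+1)}$ and $P^{(t)}$ agree up to step $t$, so they share the law of the concrete history $H_t$. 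At step $t$ they differ only in using $K^{\sigma,\pi}_{H_t,t}$ versus $\overline K_t$, and afterwards both apply the same Markov kernels $\overline K$. Conditioning on $H_t$ and using that a common downstream kernel is a contraction in total variation, the $t$-th term is at most $\E\bigl[\operatorname{TV}(K^{\sigma,\pi}_{H_t,t}(\cdot\mid\sigma(H_t)),\overline K_t(\cdot\mid\sigma(H_t)))\bigr]\le\varepsilon_t$. A subtle point must be checked here: the abstract path law under $\mu$ need not be Markov. The hybrid argument avoids needing it, because we condition on the full history $H_t$, and \cref{eq:abst-kernel} is a supremum over the whole fibre, so the bound holds whatever the conditional law of $H_t$ given $s_t$ is. Padding terminated paths with the same absorbing transition in both processes contributes zero to every term after termination.

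Summing gives $\operatorname{TV}(\mu,\bar\mu)\le\sum_t\varepsilon_t$, and the total variation distance is trivially at most $1$, which yields $\min\{1,\sum_t\varepsilon_t\}$. Since the terminal space is common, the terminal marginal is a measurable function of the padded path, for example its final absorbed state. The data-processing inequality then transfers the bound to the terminal marginals. For the final clause, setting every $\varepsilon_t=0$ makes each concrete kernel equal to $\overline K_t$ on every fibre, which is exactly the model-irrelevance condition of \citet{li2006towards}, and the bound forces the terminal marginals to coincide.

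I expect the main obstacle to be the bookkeeping in the hybrid step. One must make precise that ``switching to $\overline K$ after step $t$'' yields a valid process on abstract states even though the concrete process before the switch lives on histories. One must also show that the kernel $K^{\sigma,\pi}_{H,t}$ already includes the projection through $\sigma$, so that the comparison at step $t$ is between two distributions on the same terminal-or-successor space. I would handle this by defining all processes on the extended space of (concrete prefix up to the switch, abstract suffix) and by verifying the contraction property explicitly for the common suffix kernel.
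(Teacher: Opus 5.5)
Your proposal is correct and follows essentially the same route as the paper. The paper projects the history-level process to abstract states and bounds the resulting mixture kernel against $\overline K_t$ by $\varepsilon_t$, using convexity of total variation and the fibre-wise kernel bound; it then sums the per-step errors by a stepwise hybrid argument (or induction with contraction of Markov kernels) and caps the total at $1$. Your conditioning on the full concrete prefix $H_t$ is a more explicit form of that convexity step, and it correctly handles the fact that the projected process need not be Markov.
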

\begin{proof}
Project the history-level process to the abstract state. Conditional on a
projected state, its one-step kernel is the mixture of the history-level kernels
over the conditional distribution of concrete histories. Convexity of total
variation and \cref{eq:abst-kernel} bound the distance between this mixture and
$\overline K_t$ by $\varepsilon_t$. A stepwise hybrid argument, or induction
using contraction of Markov kernels, then gives the sum of the per-step bounds;
the outer minimum uses $\operatorname{TV}\le1$.
\end{proof}

\begin{remark}[Reward measurability of merged terminals]
\Cref{ass:quotient}(c) makes $\Rt(x)$ well defined on every merged terminal, so
the terminal anchor of \cref{eq:anchor} applies there; the abstraction map $\sigma$ meets this terminal-reward consistency by keeping in $s$ every feature
of the history that the reward reads, which makes the reward $\sigma$-measurable.
\end{remark}

% ============================================================
\section{Proof Details}
\label{app:proof-details}

This section proves the flow identities used by the method text. The exact
results characterise the zero-residual optimum, and \cref{prop:eps} bounds the
deviation when residuals are small. The first subsection derives the path
factorisation and the balance identities, the second builds the compatible
reference flow and the rollout estimator of the flow share, and the third treats
the verifier posteriors behind the credible bounds and the acceptance gate of
\cref{def:accept}.

\subsection{Path factorisation and balance identities}

\begin{lemma}[Path factorisation]
\label{lem:path-factor}
Under \cref{ass:quotient,ass:markov,ass:finite,ass:det}, let
$\tau=(s_0,e_1,s_1,\ldots,e_T,s_T)$ be a legal trajectory ending at $x=s_T$.
For any normalized forward policy $\PF$ and backward policy $\PB$ with positive
support on the edges of $\tau$,
\begin{equation}
  \PF(\tau)=\prod_{t=1}^{T}\PF(e_t\mid s_{t-1}),
  \qquad
  \PB(\tau\mid x)=\prod_{t=1}^{T}\PB((s_{t-1},e_t)\mid s_t).
  \label{eq:path-factor}
\end{equation}
\end{lemma}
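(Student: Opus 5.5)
The plan is to prove the two factorisations separately, each by the chain rule for a Markov chain on the quotient DAG of \cref{prop:dag}. In both cases the key fact is that a trajectory is determined by its sequence of edges. By \cref{ass:det}, the successor $s_t$ is a deterministic function of $(s_{t-1},e_t)$. So specifying $\tau$ is the same as specifying the event sequence $(e_1,\dots,e_T)$ from $s_0$. Equivalently, reading backwards, it is the same as specifying the sequence of in-edges $(s_{t-1},e_t)\in\Inn(s_t)$ from $x$.

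\emph{Forward factorisation.} I will define the forward process as a chain that starts at $s_0$, draws $e_t\sim\PF(\cdot\mid s_{t-1})$, and moves to the deterministic successor. By \cref{ass:markov}, $\PF$ reads only the abstract state, and by \cref{ass:quotient}(a,b) this is well defined regardless of which concrete history represents $s_{t-1}$. The chain rule then gives
\[\PF(\tau)=\prod_t \PF(e_t\mid s_{0:t-1},e_{1:t-1})=\prod_t\PF(e_t\mid s_{t-1}),\]
where the environment factor equals $1$ by \cref{ass:det}. If $e_t$ carries reasoning text, the factor is read as $\PF(e_t\mid s_{t-1},r_t)$, as in \cref{sec:scope}. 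Finiteness (\cref{ass:finite}) ensures the product has at most $T$ factors and that every trajectory terminates, so the products define a probability distribution over complete trajectories.

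\emph{Backward factorisation.} I will define the backward chain from $x$ that, at state $s'$, draws an in-edge $(s,e)\sim\PB(\cdot\mid s')$ and moves to $s$. By \cref{prop:dag}, rank strictly decreases along this chain. Every state other than $s_0$ has a nonempty $\Inn$, and $s_0$ is the unique source. Hence the chain reaches $s_0$ within $\rho(x)\le T$ steps, and the sequence of in-edges traced by the chain is exactly $\tau$ reversed. The chain rule together with the Markov property of $\PB$ (\cref{ass:markov}) gives the stated product.

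The main obstacle is the backward step: showing that $\PB(\cdot\mid x)$ is a normalised distribution over complete trajectories ending at $x$, that is, $\sum_{\tau\to x}\PB(\tau\mid x)=1$, which the earlier propositions rely on. I would prove this by induction on rank, summing over the last in-edge and using normalisation of \cref{eq:pb} over $\Inn(s')$. Positive support along the edges of $\tau$ ensures that both products are nonzero, so the residual in \cref{eq:residual} is finite.
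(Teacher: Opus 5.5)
Your proposal is correct and follows the paper's own argument. The chain rule, together with the Markov assumption and deterministic canonicalised successors (\cref{ass:markov,ass:det}), gives the forward product, and running the normalised in-edge policy backward from $x$ gives the backward product. Your extra checks are not in the paper's two-line proof: that the backward walk must end at the unique source because $\rho$ strictly decreases, and that $\sum_{\tau\to x}\PB(\tau\mid x)=1$ by induction on rank. They are, however, exactly what \cref{prop:pb,prop:eps} later rely on, so they are a worthwhile addition rather than a different route.
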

\begin{proof}
The Markov assumption makes the conditional law of the next event depend only on
$(s_{t-1},q,k)$, and \cref{ass:det} makes each successor a function of the
current state and event, so the chain rule reduces to the first product. Running
the normalized in-edge policy backward from the terminal $x=s_T$ gives the second product of \cref{eq:path-factor}.
\end{proof}

\begin{lemma}[Residual additivity]
\label{lem:res-add}
For $0\le i<\ell<j\le T$, the residual in \cref{eq:residual} satisfies
\begin{equation}
  \dbres_{i:j}=\dbres_{i:\ell}+\dbres_{\ell:j}.
  \label{eq:res-add}
\end{equation}
Consequently, zero residual on every sub-trajectory implies both detailed balance
on every edge and trajectory balance on every complete trajectory of the DAG.
\end{lemma}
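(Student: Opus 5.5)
The additivity is a telescoping identity, so I would prove it by writing out the three residuals from \cref{eq:residual} and comparing terms. Fix $0\le i<\ell<j\le T$ and split each sum over $t\in\{i+1,\dots,j\}$ at $\ell$ into $\{i+1,\dots,\ell\}$ and $\{\ell+1,\dots,j\}$. The forward sum $\sum_{t=i+1}^{j}\log\PF(e_t\mid s_{t-1})$ and the backward sum $\sum_{t=i+1}^{j}\log\PB((s_{t-1},e_t)\mid s_t)$ then each become the corresponding term of $\dbres_{i:\ell}$ plus the corresponding term of $\dbres_{\ell:j}$. Adding $\dbres_{i:\ell}$ and $\dbres_{\ell:j}$, the state-flow terms $-\log\Fpsi(s_\ell)$ and $+\log\Fpsi(s_\ell)$ cancel, leaving $\log\Fpsi(s_i)-\log\Fpsi(s_j)$ plus the full sums, which is exactly $\dbres_{i:j}$.

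One point needs care: the endpoint convention. If an endpoint is terminal, $\log\Fpsi$ there is replaced by the anchor $\log\Rt(x)$ of \cref{eq:anchor}. If the domain offset $b_d$ of \cref{rem:domainbias} is present, every nonterminal state carries the same $\ell(s)=\log\Fpsi(s)+b_d$. In both cases the same function $\ell(\cdot)$ is evaluated at $s_\ell$ in both pieces, so the cancellation still holds. The intermediate state $s_\ell$ is never terminal because $\ell<j\le T$, though the argument does not need this. The $r_t$-conditioning of $\PF$ also does not affect the argument, since each edge term appears in exactly one of the two pieces.

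For the consequence, I would take each claim in turn.

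\begin{itemize}
\item \emph{Detailed balance.} Zero residual on every sub-trajectory includes all $j=i+1$ cases. These are the single-edge residuals, so each vanishing gives $\Fpsi(s)\PF(e\mid s)=\Fpsi(s')\PB((s,e)\mid s')$ on every edge traversed by some complete trajectory.
\item \emph{Trajectory balance.} Iterating \cref{eq:res-add} gives $\dbres_{0:T}=\sum_{t}\dbres_{t-1:t}$, so detailed balance on all edges already forces $\dbres_{0:T}=0$. With the anchor and $\Fpsi(s_0)=Z_\theta(q)$, this reads $Z_\theta(q)\PF(\tau)=\Rt(x)\PB(\tau\mid x)$, using \cref{lem:path-factor} for the products.
\item \emph{Every edge of the DAG.} To cover every edge rather than only sampled ones, I would use \cref{prop:dag} and \cref{ass:finite}: every reachable edge lies on some complete legal trajectory, because from the source one can reach it and then extend to a terminal by finite branching and the horizon.
\end{itemize}

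There is no real obstacle here; the only delicate step is keeping the endpoint convention consistent at terminal states, and I would state it explicitly.
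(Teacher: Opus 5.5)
Your proposal is correct and follows the paper's proof: expand the residuals, cancel the two $\ell(s_\ell)$ terms, then read off detailed balance from $j=i+1$ and trajectory balance from the complete trajectory (the paper takes $(i,j)=(0,T)$ directly from the hypothesis, whereas you sum the edge residuals). Your remarks on the terminal-anchor and $b_d$ convention, and your argument that every reachable edge lies on some complete trajectory, are sound and make explicit details the paper leaves implicit.
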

\begin{proof}
Expand the two terms on the right of \cref{eq:res-add}. The intermediate terms
$\log\Fpsi(s_\ell)$ and $-\log\Fpsi(s_\ell)$ cancel, as do no other terms,
leaving exactly the sums from $i+1$ through $j$. Choosing $j=i+1$ gives detailed
balance on every edge, and choosing $(i,j)=(0,T)$ gives trajectory balance on every complete trajectory. The converse does
not hold from trajectory balance alone: internal flows can change while
preserving the endpoint product.
\end{proof}

\begin{proposition}[Sub-trajectory objective and exact balance]
\label{prop:subtb-exact}
Let $\mu$ be the distribution over complete legal trajectories used for the
population objective and write
$\mathcal J_\mu=\mathbb E_{\tau\sim\mu}[\LTB(\tau)]$. Assume
$\omega_{j-i}>0$ for every included pair and
$\Pr_{\mu}(e\in\tau)>0$ for every constrained edge. Then
$\mathcal J_\mu=0$ if and only if every included residual is zero on the
support of $\mu$. If all contiguous pairs are included
and the support covers every legal edge, this is equivalent to the collection of
detailed-balance identities on the DAG together with the terminal anchor of \cref{eq:anchor}.
\end{proposition}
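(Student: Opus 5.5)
The first claim reduces to an elementary fact about a finite sum of nonnegative terms. The population objective is
\[
\mathcal J_\mu=\sum_{\tau}\mu(\tau)\sum_{0\le i<j\le T}\omega_{j-i}\,\dbres_{i:j}(\tau)^2 .
\]
The sum is finite because the reachable DAG is finite (\cref{prop:dag}, \cref{ass:finite}). Every weight $\omega_{j-i}$ is strictly positive for the included pairs, and every squared residual is nonnegative. Hence $\mathcal J_\mu=0$ holds exactly when each summand with $\mu(\tau)>0$ vanishes, that is, when every included residual is zero on every trajectory in the support of $\mu$. The converse direction is immediate.

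Before using this, I need residuals to be properties of sub-paths rather than of trajectories. By \cref{ass:markov} and \cref{ass:det}, the value $\dbres_{i:j}(\tau)$ depends only on the segment $s_i\to\cdots\to s_j$. So "zero on the support of $\mu$" is well defined per segment. When a terminal endpoint is involved, the anchor of \cref{eq:anchor} replaces $\log\Fpsi(x)$ by $\log\Rt(x)$ inside the residual, so the anchor is automatically part of the identities.

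For the equivalence, I treat the two directions separately.

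For ($\Rightarrow$), all contiguous pairs are included, so in particular $j=i+1$ is included. The assumption $\Pr_\mu(e\in\tau)>0$ says each legal edge lies on some supported trajectory, so its single-edge residual is forced to zero. That is detailed balance,
\[
\Fpsi(s)\,\PF(e\mid s)=\Fpsi(s')\,\PB((s,e)\mid s'),
\]
on every edge of the DAG. On edges into a terminal $x$, the same identity holds with $\Fpsi(x)=\Rt(x)$, which is the terminal anchor.

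For ($\Leftarrow$), I use residual additivity (\cref{lem:res-add}). Repeated splitting writes any $\dbres_{i:j}$ as the telescoping sum $\sum_{t=i}^{j-1}\dbres_{t:t+1}$. If every edge residual vanishes, every included residual vanishes, so $\mathcal J_\mu=0$ by the first part.

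The main obstacle is bookkeeping rather than analysis. First, I must confirm that the anchored terminal term and the domain offset $b_d$ behave consistently with additivity. The offset cancels on interior endpoints and sits in the anchor only on residuals ending at a terminal, as described in \cref{rem:domainbias}. So I would state the identities with $\ell(\cdot)$ in place of $\log\Fpsi$. Second, I must check that the edge-coverage hypothesis genuinely reaches every legal edge of the reachable DAG, not merely edges on $\mu$-likely paths.
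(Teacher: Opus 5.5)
Your proposal is correct and follows essentially the same route as the paper. Positive weights and nonnegative squared residuals give the first equivalence, the included single-edge pairs $j=i+1$ together with edge coverage force detailed balance (with the terminal anchor built into $\ell$), and residual additivity (\cref{lem:res-add}) telescopes any $\delta_{i:j}$ into edge residuals for the converse, with your segment-dependence and $b_d$ bookkeeping making explicit what the paper's short proof leaves implicit.
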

\begin{proof}
The integrand is a finite sum of nonnegative terms. Its expectation is zero
only if every term is zero $\mu$-almost surely. The positive edge-visitation
condition makes each constrained edge occur with positive probability, so each
included residual is zero on that edge's support. The first implication of
\cref{lem:res-add} gives detailed balance and trajectory identities. Conversely,
those identities make every included residual $\dbres_{i:j}$ in the objective of \cref{eq:flow-objective} zero.
\end{proof}

\begin{remark}[From exact to approximate balance] A finite-sample optimizer leaves small nonzero residuals; when every complete-trajectory residual is at most $\varepsilon$ in magnitude, \cref{prop:eps} bounds the resulting deviation of the terminal law in \cref{eq:epsbound}.
\end{remark}

\subsection{Compatible reference flows}

\begin{proposition}[Backward reconstruction of a compatible triple]
\label{prop:compatible-triple}
On a finite DAG in which every reachable node lies on a source-to-terminal path,
with positive terminal rewards and a normalized full-support backward policy
$\PB^0$ on every nonempty in-edge set, define the reference state and edge flows recursively
\begin{equation}
\begin{aligned}
  F^0(x)&=\Rt(x) && (x\in\mathcal X),\qquad
  f^0(s,e,s')=F^0(s')\PB^0((s,e)\mid s'),\\
  F^0(s)&=\sum_{(s,e,s')\in\Out(s)}f^0(s,e,s')
  && (s\notin\mathcal X).
\end{aligned}
  \label{eq:reference-recurrence}
\end{equation}
Then $Z^\star=F^0(s_0)=\sum_x\Rt(x)$ and
$\PF^0(e\mid s)=f^0(s,e,s')/F^0(s)$ form a compatible flow triple
$(F^0,\PF^0,\PB^0)$. It is unique for the chosen $\PB^0$ and terminal anchor.
\end{proposition}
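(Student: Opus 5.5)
The recurrence \cref{eq:reference-recurrence} is a backward induction on the finite DAG of \cref{prop:dag}, so the plan is to process states in decreasing rank $\rho$. First, well-definedness: every out-edge $(s,e,s')$ has $\rho(s')>\rho(s)$. Hence $F^0(s')$, and with it $f^0(s,e,s')$, is already defined when $F^0(s)$ is computed. Terminals are the base case. By induction, every reachable node lying on a source-to-terminal path has $F^0(s)>0$: terminal rewards are positive, $\PB^0$ has full support, and each nonterminal reachable node has at least one out-edge to a node on a terminal path. So $\PF^0(e\mid s)=f^0(s,e,s')/F^0(s)$ is well defined. It is normalised over $\Out(s)$ by the definition of $F^0(s)$.

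Next, compatibility. By construction the triple satisfies detailed balance edgewise:
\[
F^0(s)\PF^0(e\mid s)=f^0(s,e,s')=F^0(s')\PB^0((s,e)\mid s').
\]
So every sub-trajectory residual with $\Fpsi=F^0$ is zero, with the terminal anchor holding by the base case. Telescoping along a trajectory, or invoking \cref{lem:res-add}, gives trajectory balance
\[
F^0(s_0)\PF^0(\tau)=\Rt(x_\tau)\PB^0(\tau\mid x_\tau).
\]
To identify $F^0(s_0)=\sum_x\Rt(x)$, sum this identity over all complete trajectories. On the left, $\sum_\tau\PF^0(\tau)=1$ because $\PF^0$ is normalised on a finite DAG where every path terminates. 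On the right, group the trajectories by terminal and use $\sum_{\tau\to x}\PB^0(\tau\mid x)=1$. This last fact holds because backward sampling from $x$ under a normalised full-support $\PB^0$ must reach the unique source $s_0$ (rank strictly decreases, and every non-source reachable node has a nonempty in-edge set). This mirrors the proof of \cref{prop:pb}(i).

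For uniqueness, suppose $(F,\PF,\PB^0)$ is any compatible triple with the same $\PB^0$ and terminal anchor. Detailed balance forces the edge flow $F(s)\PF(e\mid s)=F(s')\PB^0((s,e)\mid s')$. Normalisation of $\PF$ then forces $F(s)=\sum_{\Out(s)}F(s')\PB^0(\cdot\mid s')$, which is exactly the recurrence. Induction downward in rank from the anchored terminals gives $F=F^0$, and then $\PF=\PF^0$.

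The main obstacle is the step $\sum_{\tau\to x}\PB^0(\tau\mid x)=1$, together with positivity of $F^0$. This is where the hypothesis that every reachable node lies on a source-to-terminal path is used. Without it, dead-end nodes could receive zero flow, or backward walks could stall at a non-source state with an empty in-edge set. I would handle it by checking explicitly that the only node with empty $\Inn$ is $s_0$ (\cref{prop:dag}: single source) and that reachable nonterminals have nonempty $\Out$.
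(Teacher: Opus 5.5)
Your proof is correct. Its skeleton matches the paper's: you evaluate the recurrence in decreasing rank, get positivity of $F^0$ by induction from the positive anchor and the full-support $\PB^0$, normalise $\PF^0$ through the definition of $F^0(s)$, obtain compatibility as edgewise detailed balance by construction, and prove uniqueness by the same downward induction.

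The one step you do differently is the identification $F^0(s_0)=\sum_x\Rt(x)$. The paper argues locally, as a discrete conservation law.
\begin{itemize}
\item Normalisation of $\PB^0$ gives $\sum_{(s,e)\in\Inn(s')}f^0(s,e,s')=F^0(s')$ at every non-root node.
\item The recurrence gives the outgoing identity $\sum_{\Out(s)}f^0=F^0(s)$ at every nonterminal node.
\item Summing the first over non-root nodes and the second over nonterminal nodes gives the same total edge mass twice. The interior-node flows cancel, leaving $F^0(s_0)=\sum_x F^0(x)$.
\end{itemize}
You instead telescope detailed balance into trajectory balance and sum over complete trajectories. That requires both path-measure normalisations, $\sum_\tau\PF^0(\tau)=1$ and $\sum_{\tau\to x}\PB^0(\tau\mid x)=1$.

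For this step the paper's cancellation is more economical. It never builds path measures and does not need forward walks to terminate. It needs only that every non-root node has a nonempty in-edge set, which is the condition you single out as the main obstacle.

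Your route costs the extra forward-termination check. In exchange it yields the trajectory-balance identity $F^0(s_0)\PF^0(\tau)=\Rt(x_\tau)\PB^0(\tau\mid x_\tau)$ as a by-product. The terminal law of \cref{cor:terminal-law} then follows in one more line, and the paper's proof of that corollary starts from exactly this identity. So your argument effectively proves the proposition and the corollary together.
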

\begin{proof}
The recurrence is evaluated in decreasing rank. Terminal values are fixed by the
terminal anchor. Since every reachable node lies on a terminal path and all terminal
rewards and backward probabilities are positive, induction gives $F^0(s)>0$ at
every reachable node. Once all children of $s$ have been assigned flow values, every
outgoing edge mass is fixed by $\PB^0$, so their sum fixes $F^0(s)$ and division by it
fixes the normalized forward probabilities. For each non-root node $s'$,
normalization of the backward policy $\PB^0$ gives
$\sum_{(s,e)\in\Inn(s')}f^0(s,e,s')=F^0(s')$. Summing this incoming-flow
identity over all non-root states and the outgoing-flow definition over all
nonterminal states cancels the mass of every internal edge, leaving
$F^0(s_0)=\sum_{x\in\mathcal X}F^0(x)=\sum_x\Rt(x)$. The same induction over decreasing rank shows
uniqueness. The construction also gives
$F^0(s)\PF^0(e\mid s)=F^0(s')\PB^0((s,e)\mid s')$, which is the required
compatibility identity.
\end{proof}

\begin{corollary}[Terminal law under an exact compatible triple]
\label{cor:terminal-law}
If a forward policy is generated by \cref{prop:compatible-triple}, then its
terminal marginal is
\begin{equation}
  P_{F}^{0,\mathrm{term}}(x\mid q)=\frac{\Rt(x)}{Z^\star}.
  \label{eq:terminal-law}
\end{equation}
The terminal law is invariant to the choice of full-support $\PB^0$, while the
internal state and edge flows generally are not, as in the two-path example of \cref{prop:pb}.
\end{corollary}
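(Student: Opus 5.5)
The plan is to reduce the terminal law to the compatibility identity from \cref{prop:compatible-triple} and then telescope it along trajectories. By construction, every edge $s\xrightarrow{e}s'$ satisfies
\[
F^0(s)\,\PF^0(e\mid s)=F^0(s')\,\PBz((s,e)\mid s').
\]
Take a complete trajectory $\tau=(s_0,e_1,s_1,\ldots,e_T,s_T=x)$. Multiplying this identity over its edges, the intermediate factors $F^0(s_t)$ for $0<t<T$ cancel. Applying \cref{lem:path-factor} to both products gives the trajectory-balance form
\[
F^0(s_0)\,\PF^0(\tau)=F^0(x)\,\PBz(\tau\mid x)=\Rt(x)\,\PBz(\tau\mid x).
\]
Here we use the terminal anchor $F^0(x)=\Rt(x)$.

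Next, sum over all legal trajectories ending at $x$. Since the terminal states have no outgoing events (\cref{ass:finite}), the forward process reaches $x$ exactly along these trajectories, so $P_F^{0,\mathrm{term}}(x\mid q)=\sum_{\tau\to x}\PF^0(\tau)$. On the backward side, the normalised full-support in-edge policy generates a probability law on backward paths from $x$. Each such path terminates at $s_0$, because rank strictly decreases (\cref{prop:dag}) and every reachable node lies on a source path, so $s_0$ is the unique node with an empty in-edge set. Hence $\sum_{\tau\to x}\PBz(\tau\mid x)=1$. Dividing by $F^0(s_0)=Z^\star$, which \cref{prop:compatible-triple} supplies, yields \cref{eq:terminal-law}.

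For the invariance claim, note that the argument used nothing about $\PBz$ beyond normalisation and full support. Repeating it with any other full-support $\PB^0$, with the triple reconstructed by \cref{prop:compatible-triple}, gives the same right-hand side $\Rt(x)/Z^\star$, which does not involve the backward policy. This is the same mechanism as \cref{prop:pb}(i). To show that the internal flows are generally not invariant, a single counterexample suffices, and the two-path example in the proof of \cref{prop:pb}(ii) already works. Build the diamond DAG $s_0\to\hat u\to x$, $s_0\to\hat v\to x$ and apply \cref{eq:reference-recurrence} with $\PB^0=(0.9,0.1)$ versus the uniform choice. This gives $F^0(\hat u)/F^0(\hat v)=9$ versus $1$, while the terminal law is unchanged.

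The only delicate step is the claim that backward path probabilities sum to one. It needs every backward walk to reach $s_0$ rather than stall at some other node without in-edges. I will secure this from the hypothesis of \cref{prop:compatible-triple} that every reachable node lies on a source-to-terminal path, restricting the DAG to reachable nodes. Together with finiteness and strict rank decrease, this rules out infinite walks and dead ends other than $s_0$.
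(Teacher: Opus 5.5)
Your proposal is correct and follows the paper's own proof. You telescope the edge-level compatibility identity into $\PF^0(\tau)=\Rt(x)\PB^0(\tau\mid x)/Z^\star$, sum over trajectories ending at $x$ using normalisation of the backward path law, and cite the two-path example of \cref{prop:pb} for non-invariance of internal flows. You spell out what the paper leaves implicit: cancelling the intermediate $F^0(s_t)$ is legitimate because $F^0>0$ on reachable nodes, and every backward walk from $x$ ends at $s_0$.
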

\begin{proof}
For every trajectory ending at $x$, compatibility gives
$\PF^0(\tau)=\Rt(x)\PB^0(\tau\mid x)/Z^\star$. Summing over trajectories and
using normalization of the backward trajectory distribution gives
\cref{eq:terminal-law}. Internal flows contain the factors
$\PB^0(\tau\mid x)$ before they are summed, so two backward policies can induce
different internal allocations while preserving the same terminal marginal.
\end{proof}

\begin{remark}[Reference readout versus learned training triple]
Equation~\eqref{eq:share} is a flow diagnostic only when all of its factors come
from one compatible triple; pairing a learned $P_{F,\theta}$ with a state flow
built from another $\PB^0$ need not conserve flow. \RtwoFlow therefore estimates
the reference readout from the trained policy's rollouts with the residual
weights of \cref{cor:share-estimator}, which needs no reconstruction.
\end{remark}

\begin{lemma}[Occupancy identity]
\label{lem:occupancy}
For a compatible positive flow on the fixed rooted, terminal-complete DAG,
\begin{equation}
  \Pr_{\PF^0}(s\in\tau)=\frac{F^0(s)}{Z^\star},
  \qquad
  \Pr_{\PF^0}((s,e,s')\in\tau)=
  \frac{F^0(s)\PF^0(e\mid s)}{Z^\star}.
  \label{eq:occupancy}
\end{equation}
\end{lemma}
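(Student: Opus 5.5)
I would prove the edge identity first and deduce the state identity from it. Under the compatible triple of \cref{prop:compatible-triple}, \cref{lem:path-factor} gives the path law $\PF^0(\tau)=\prod_t\PF^0(e_t\mid s_{t-1})$. Compatibility, $F^0(s)\PF^0(e\mid s)=F^0(s')\PBz((s,e)\mid s')$, lets the product telescope. For any complete trajectory $\tau$ from $s_0$ to $x$ and any index $t$ on it, this yields the split form
\begin{equation*}
  \PF^0(\tau)
  =\frac{F^0(s_t)}{F^0(s_0)}
   \prod_{i\le t}\PBz\bigl((s_{i-1},e_i)\mid s_i\bigr)
   \cdot
   \prod_{i>t}\PF^0(e_i\mid s_{i-1}).
\end{equation*}
Here $F^0(s_0)=Z^\star$. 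The second factor is the backward probability of the prefix from $s_t$ down to $s_0$, and the third is the forward probability of the suffix from $s_t$ to a terminal.

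\textbf{Step 1 (state occupancy).} By \cref{prop:dag} every edge strictly increases $\rho$, so a trajectory visits each state at most once. Hence $\Pr(s\in\tau)=\sum_{\tau\ni s}\PF^0(\tau)$ with no double counting. Each trajectory through $s$ splits uniquely into a prefix from $s_0$ to $s$ and a suffix from $s$ to a terminal, and conversely any such pair concatenates to a legal trajectory. The sum therefore factorises into $\frac{F^0(s)}{Z^\star}$ times two sums.
\begin{itemize}
  \item The sum of the backward prefix probabilities over all prefixes is $1$. Running $\PBz$ from $s$ must terminate at the unique source $s_0$, since every non-root state has a nonempty in-edge set, $\PBz$ is normalised on it, and rank strictly decreases.
  \item The sum of the forward suffix probabilities is $1$. $\PF^0$ is normalised at every nonterminal state, and by \cref{ass:finite} every path reaches a terminal within $T$ steps.
\end{itemize}
This gives $\Pr(s\in\tau)=F^0(s)/Z^\star$.

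\textbf{Step 2 (edge occupancy).} A trajectory contains the edge $(s,e,s')$ exactly when it passes through $s$ and then takes $e$. Because $\PF^0$ is Markov on the abstract state (\cref{ass:markov}, \cref{ass:det}), conditioning on the path up to $s$ gives
\[
\Pr\bigl((s,e,s')\in\tau\bigr)=\Pr(s\in\tau)\,\PF^0(e\mid s)=\frac{F^0(s)\PF^0(e\mid s)}{Z^\star}.
\]
Alternatively, rerun the factorisation with the cut placed at $s'$ and use compatibility directly.

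\textbf{Main obstacle.} The delicate step is justifying that both prefix and suffix probabilities sum to one. This needs the DAG to be rooted, with every reachable node lying on a source-to-terminal path, so that the backward walk cannot get stuck at a non-source node and the forward walk cannot get stuck at a nonterminal node. These are exactly the standing hypotheses of \cref{prop:compatible-triple}. I would state them explicitly and prove both normalisations by induction on $\rho$:
\begin{itemize}
  \item the backward case by increasing rank, using that $s_0$ is the only rank-$0$ state;
  \item the forward case by decreasing rank, starting from the terminals.
\end{itemize}
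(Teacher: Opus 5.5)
Your proof is correct, but it reaches the identity by a different route from the paper's.

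The paper argues locally, by forward induction on rank. The visitation probability satisfies $v(s_0)=1$. Since a trajectory enters a state through exactly one in-edge, $v(s')=\sum_{(s,e)\in\Inn(s')}v(s)\,\PF^0(e\mid s)$. The induction hypothesis and incoming-flow conservation, $\sum_{(s,e)\in\Inn(s')}F^0(s)\PF^0(e\mid s)=F^0(s')$ (compatibility summed over in-edges, with $\PBz$ normalised), then give $v(s')=F^0(s')/Z^\star$. The edge identity follows by multiplying by $\PF^0(e\mid s)$, and terminal completeness identifies these probabilities with sums over complete trajectories.

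You instead telescope compatibility along an entire path and split the trajectory sum into a backward-prefix sum and a forward-suffix sum. The two proofs use the same hypotheses:
\begin{itemize}
\item Your backward normalisation is the paper's one-step conservation iterated down to the source. Like the paper's induction, it relies on $s_0$ being the only source, so that every in-neighbour is reachable and no backward walk can stall elsewhere.
\item Your forward normalisation is the paper's terminal-completeness step.
\end{itemize}

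The paper's version is shorter and never enumerates paths. Yours proves more. It shows that the probability of reaching $s$ along a given prefix $\pi$ is $(F^0(s)/Z^\star)\,\PBz(\pi\mid s)$, so conditional on visiting $s$, the history is distributed as a backward walk under $\PBz$ from $s$. Your split form also interpolates between the forward chain rule at $t=0$ and, at $t=T$, the trajectory identity $\PF^0(\tau)=\Rt(x)\,\PBz(\tau\mid x)/Z^\star$, on which \cref{prop:flow} and the edge-share part of \cref{cor:share-estimator} rest.
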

\begin{proof}
The source is visited with probability one. In increasing rank, incoming-edge
events to a fixed state are mutually exclusive, so its visitation probability
satisfies
\[
 v(s')=\sum_{(s,e)\in\Inn(s')}v(s)\PF^0(e\mid s)
      =\frac{\sum_{(s,e)\in\Inn(s')}F^0(s)\PF^0(e\mid s)}{Z^\star}
      =\frac{F^0(s')}{Z^\star},
\]
where the last equality is flow conservation. Multiplying $v(s)$ by the
conditional edge probability gives the edge identity. Terminal completeness
makes these visitation probabilities equal to sums over the complete trajectories containing the state or edge.
\end{proof}

\begin{proposition}[Flow-share normalization]
\label{prop:share-normalization}
Fix a finite compatible reference-flow triple
$(F^0,\PF^0,\PB^0)$ with source flow $F^0(s_0)=Z^\star>0$. Let
$\mathcal E_u$ be the set of counted edges labelled by skill $u$, assume that
the sets $\{\mathcal E_u\}_u$ form a disjoint partition of
$\mathcal E_{\mathrm{skill}}$, and define
\begin{equation}
  \varphi^0(e)=\frac{F^0(s)\PF^0(e\mid s)}{Z^\star}
  \quad\text{for }e=(s\!\to\!s').
  \label{eq:edge-visit-mass-detailed}
\end{equation}
For a complete trajectory $\tau$, let
\begin{equation}
  I_e(\tau)=\mathbf 1\{e\in\tau\},
  \qquad
  N_u(\tau)=\sum_{e\in\mathcal E_u}I_e(\tau),
  \qquad
  T_{\mathrm{skill}}(\tau)
    =\sum_{e\in\mathcal E_{\mathrm{skill}}}I_e(\tau).
  \label{eq:skill-counts}
\end{equation}
Then
\begin{equation}
  \sum_{e\in\mathcal E_u}\varphi^0(e)
     =\mathbb E_{\tau\sim\PF^0}[N_u(\tau)],
  \qquad
  M_{\mathrm{skill}}
     =\sum_{e\in\mathcal E_{\mathrm{skill}}}\varphi^0(e)
     =\mathbb E_{\tau\sim\PF^0}[T_{\mathrm{skill}}(\tau)].
  \label{eq:share-count-identities}
\end{equation}
If $M_{\mathrm{skill}}>0$, the normalized skill share is therefore
\begin{equation}
  \Psi^0(u)
  =\frac{\mathbb E_{\PF^0}[N_u(\tau)]}
         {\mathbb E_{\PF^0}[T_{\mathrm{skill}}(\tau)]}
  =\frac{\sum_{e\in\mathcal E_u}\varphi^0(e)}{M_{\mathrm{skill}}},
  \qquad
  \sum_u\Psi^0(u)=1.
  \label{eq:share-normalization}
\end{equation}
In general this is a ratio of expectations, not
$\mathbb E[N_u(\tau)/T_{\mathrm{skill}}(\tau)]$.
\end{proposition}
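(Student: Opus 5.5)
The plan rests on the occupancy identity of \cref{lem:occupancy}, which already says that $\varphi^0(e)$ is the probability that a trajectory drawn from $\PF^0$ traverses edge $e$. Fix the compatible triple $(F^0,\PF^0,\PB^0)$. For an edge $e=(s\to s')$, \cref{lem:occupancy} gives
\[
\varphi^0(e)=\frac{F^0(s)\PF^0(e\mid s)}{Z^\star}=\Pr_{\PF^0}(e\in\tau)=\mathbb E_{\tau\sim\PF^0}[I_e(\tau)].
\]
This identification is the only place where compatibility and the rooted, terminal-complete structure enter. It uses that $F^0(s_0)=Z^\star$ is the source flow, so the source is visited with probability one under the normalisation.

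Next I sum over $e\in\mathcal E_u$ and use linearity of expectation. Because the DAG is finite, the sum is finite and no convergence issue arises. This gives
\[
\sum_{e\in\mathcal E_u}\varphi^0(e)=\mathbb E\Bigl[\sum_{e\in\mathcal E_u}I_e(\tau)\Bigr]=\mathbb E[N_u(\tau)].
\]
Summing over $\mathcal E_{\mathrm{skill}}$ in the same way yields $M_{\mathrm{skill}}=\mathbb E[T_{\mathrm{skill}}(\tau)]$, which establishes \cref{eq:share-count-identities}.

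One subtlety needs care. $I_e$ is an indicator, not a count, so I must check that a trajectory traverses any fixed edge at most once. This follows from \cref{prop:dag}: rank strictly increases along paths, so no edge repeats. Hence $N_u(\tau)$ really counts skill-$u$ invocations along $\tau$.

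Finally, assuming $M_{\mathrm{skill}}>0$, I divide the first identity by the second. By the definition in \cref{eq:share}, this is exactly $\Psi^0(u)$. Disjointness of the partition $\{\mathcal E_u\}$ gives $\sum_u N_u=T_{\mathrm{skill}}$ pointwise, so $\sum_u\Psi^0(u)=1$; this is the same argument as in \cref{prop:cut}.

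For the closing remark, I show that the ratio of expectations generally differs from $\mathbb E[N_u/T_{\mathrm{skill}}]$ with a two-trajectory example. Take one trajectory with $(N_u,T)=(1,1)$ and another with $(0,3)$, each with probability $1/2$. The ratio of expectations is $\tfrac{1/2}{2}=1/4$, while the expected ratio is $1/2$.

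The main obstacle is only the edge-repetition point and the fact that \cref{lem:occupancy} requires a compatible positive flow. If the triple is merely assumed compatible, positivity on visited states must be inherited, for instance from the construction in \cref{prop:compatible-triple}. Zero-flow states contribute nothing to either side, so they can be discarded.
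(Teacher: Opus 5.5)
Your proposal is correct and follows the paper's proof essentially step for step: it identifies $\varphi^0(e)$ with $\mathbb E_{\tau\sim\PF^0}[I_e(\tau)]$ via \cref{lem:occupancy}, uses acyclicity so that each edge is traversed at most once, sums by linearity over $\mathcal E_u$ and $\mathcal E_{\mathrm{skill}}$, and uses the disjoint partition to get $T_{\mathrm{skill}}=\sum_u N_u$ pointwise. Your two-trajectory example for the closing ratio-of-expectations remark, and your note that zero-flow states contribute nothing to either side, are small additions that the paper leaves implicit.
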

\begin{proof}
Because the graph is acyclic, a complete trajectory traverses any fixed edge at
most once. Compatible flow balance gives
\begin{equation}
  F^0(s)\PF^0(e\mid s)
  =\sum_{\tau:\,e\in\tau}F^0(s_0)\PF^0(\tau).
  \label{eq:edge-flow-path-sum}
\end{equation}
By \cref{lem:occupancy}, dividing by $F^0(s_0)=Z^\star$ yields
\begin{equation}
  \varphi^0(e)
  =\sum_{\tau}\PF^0(\tau)I_e(\tau)
  =\mathbb E_{\tau\sim\PF^0}[I_e(\tau)].
  \label{eq:edge-visit-probability}
\end{equation}
All sums are finite, so linearity of expectation permits exchanging the edge
sum and the trajectory expectation. This proves both identities in
\cref{eq:share-count-identities}. Since the skill-labelled edge sets form a
partition, $T_{\mathrm{skill}}(\tau)=\sum_uN_u(\tau)$ pointwise. Taking
expectations and dividing by the common positive denominator $M_{\mathrm{skill}}$ of \cref{eq:share-count-identities} proves
\cref{eq:share-normalization}.
\end{proof}

\begin{corollary}[Backward-policy independence under path-invariant counts]
\label{cor:share-pb-independence}
Suppose that for every terminal node $x$ there are deterministic functions
$n_u(x)$ and $t_{\mathrm{skill}}(x)$ such that every complete trajectory ending
at $x$ satisfies
\begin{equation}
  N_u(\tau)=n_u(x),
  \qquad
  T_{\mathrm{skill}}(\tau)=t_{\mathrm{skill}}(x).
  \label{eq:path-invariant-counts}
\end{equation}
Then every full-support compatible backward policy $\PB^0$ induces the same
normalized skill share:
\begin{equation}
  \Psi^0(u)
  =\frac{\sum_x \Rt(x)n_u(x)}
       {\sum_x \Rt(x)t_{\mathrm{skill}}(x)},
  \label{eq:share-pb-independent}
\end{equation}
provided the denominator of \cref{eq:share-pb-independent} is positive.
\end{corollary}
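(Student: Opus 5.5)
The plan is to start from the count identities of \cref{prop:share-normalization} and rewrite the expectations there as sums over terminals, using the terminal law of \cref{cor:terminal-law}.

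\textbf{Step 1: condition on the terminal.} Fix any full-support $\PB^0$ and let $(F^0,\PF^0,\PB^0)$ be the compatible triple of \cref{prop:compatible-triple}. By \cref{eq:share-count-identities}, $\sum_{e\in\mathcal E_u}\varphi^0(e)=\mathbb E_{\tau\sim\PF^0}[N_u(\tau)]$. Since the DAG is finite and terminal-complete, every sampled trajectory ends at exactly one terminal. I would therefore split this expectation as $\sum_{x}\sum_{\tau\to x}\PF^0(\tau)N_u(\tau)$.

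\textbf{Step 2: use path invariance.} By \cref{eq:path-invariant-counts}, $N_u(\tau)=n_u(x)$ is constant on the fibre of trajectories ending at $x$. It can be pulled out of the inner sum, leaving $\sum_{\tau\to x}\PF^0(\tau)=P_F^{0,\mathrm{term}}(x\mid q)$. By \cref{cor:terminal-law}, this equals $\Rt(x)/Z^\star$ for every full-support $\PB^0$. Hence
\[
\sum_{e\in\mathcal E_u}\varphi^0(e)=\frac{1}{Z^\star}\sum_x\Rt(x)\,n_u(x).
\]
The same computation with $t_{\mathrm{skill}}$ gives
\[
M_{\mathrm{skill}}=\frac{1}{Z^\star}\sum_x\Rt(x)\,t_{\mathrm{skill}}(x).
\]

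\textbf{Step 3: take the ratio.} The positivity assumption on the denominator makes $M_{\mathrm{skill}}>0$, so \cref{eq:share-normalization} applies. In the ratio the common factor $1/Z^\star$ cancels, which yields \cref{eq:share-pb-independent}. The right-hand side involves only $\Rt$, $n_u$ and $t_{\mathrm{skill}}$. None of these depends on $\PB^0$: $Z^\star=\sum_x\Rt(x)$ is fixed by the terminal anchor. This gives the claimed independence.

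\textbf{Main obstacle.} The only delicate point is Step 2. We need that the forward policy generated by \emph{any} full-support $\PB^0$ has exactly the terminal marginal $\Rt/Z^\star$, so that the fibre mass is policy-independent even though individual path probabilities $\PF^0(\tau)=\Rt(x)\PB^0(\tau\mid x)/Z^\star$ are not. Path invariance is precisely what lets us avoid these path-level probabilities. I would state this explicitly by writing the fibre sum with the factor $\PB^0(\tau\mid x)$ and using $\sum_{\tau\to x}\PB^0(\tau\mid x)=1$. I would also note that $n_u$ and $t_{\mathrm{skill}}$ are determined by the terminal's committed event set, which is consistent with the claim in the main text that every path to a terminal commits the same events.
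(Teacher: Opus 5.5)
Your proposal is correct and follows essentially the same route as the paper. You rewrite the two expectations of \cref{prop:share-normalization} as sums over terminals, use path invariance to pull out $n_u(x)$ and $t_{\mathrm{skill}}(x)$, invoke the compatible terminal law $\Rt(x)/Z^\star$ (which does not depend on $\PB^0$), and cancel $Z^\star$ in the ratio; your explicit fibre sum with $\sum_{\tau\to x}\PB^0(\tau\mid x)=1$ just spells out what the paper cites from \cref{cor:terminal-law}.
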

\begin{proof}
Compatibility gives the terminal law
$\PF^0(x)=\Rt(x)/Z^\star$, independently of the chosen full-support backward
policy. Under \cref{eq:path-invariant-counts}, conditioning on $x$ removes all
remaining path dependence:
\begin{equation}
  \mathbb E_{\PF^0}[N_u(\tau)]
  =\sum_x\frac{\Rt(x)}{Z^\star}n_u(x),
  \qquad
  \mathbb E_{\PF^0}[T_{\mathrm{skill}}(\tau)]
  =\sum_x\frac{\Rt(x)}{Z^\star}t_{\mathrm{skill}}(x).
\end{equation}
Substitution into \cref{eq:share-normalization} cancels $Z^\star$ and yields
\cref{eq:share-pb-independent}.
\end{proof}

\begin{corollary}[Skill share on the shared-state graph and its estimator]
\label{cor:share-estimator}
Let every edge commit one atomic event labelled by one skill $u\in\Slib{k}$ (\cref{def:edges}).
\begin{enumerate}[label=(\roman*),leftmargin=*]
  \item Every complete trajectory ending at $x$ satisfies
        \cref{eq:path-invariant-counts}, with $n_u(x)$ the number of events of
        $E_x$ that invoke $u$ and $t_{\mathrm{skill}}(x)=\sum_u n_u(x)$, so
        $\Psi^0(u)$ is given by \cref{eq:share-pb-independent} for every
        full-support compatible backward policy $\PB^0$, as in \cref{cor:share-pb-independence}.
  \item Let $\tau_1,\dots,\tau_N$ be rollouts of query $q$ drawn independently
        from a forward policy with positive support on every legal edge, let
        $\PB$ be any normalized full-support backward policy, and let
        $w_i=\exp(-\dbres_{0:T}(\tau_i))$ be computed from them. Then
        \begin{equation}
          \widehat\Psi^0_N(u)
          =\frac{\sum_{i}w_i\,n_u(x_{\tau_i})}{\sum_{i}w_i\,t_{\mathrm{skill}}(x_{\tau_i})}
          \;\longrightarrow\;\Psi^0(u)\quad\text{almost surely as }N\to\infty,
          \label{eq:share-estimator}
        \end{equation}
        whatever the values of the source flow $\Fpsi(s_0)$ and the domain bias $b_d$.
  \item With unit weights and $|\dbres_{0:T}|\le\varepsilon$ on every complete
        legal trajectory, the limit of $\widehat\Psi^0_N(u)$ in \cref{eq:share-estimator} lies in
        $[e^{-4\varepsilon},e^{4\varepsilon}]\,\Psi^0(u)$.
\end{enumerate}
The same weights estimate the edge share,
$\widehat\varphi^0_N(e)=\sum_i w_i\,h_{B^0}(e\mid x_{\tau_i})/\sum_i w_i$, where
$h_{B^0}(e\mid x)$ is the probability that a backward walk under $\PB^0$ from $x$
traverses $e$.
\end{corollary}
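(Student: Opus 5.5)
The plan is to reduce every part to the terminal law of a compatible triple, and to check that all trajectory dependence passes through the terminal node $x_\tau$ and the backward path probability $\PB(\tau\mid x_\tau)$.

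\emph{Part (i).} By \cref{def:edges} each edge commits exactly one atomic event. A complete trajectory ending at $x$ therefore commits precisely the event set $E_x$ recorded in the shared state $x=\sigma(H)$, whatever linearisation it follows. Each event carries a single skill label, so $N_u(\tau)$ equals the number of events in $E_x$ invoking $u$, which is $n_u(x)$. Likewise $T_{\mathrm{skill}}(\tau)=\sum_u n_u(x)$. This is exactly the hypothesis \cref{eq:path-invariant-counts}, so \cref{cor:share-pb-independence} gives \cref{eq:share-pb-independent} for every full-support compatible $\PBz$.

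\emph{Part (ii).} I would first rewrite the weight. With the domain bias of \cref{rem:domainbias}, the source term of $\dbres_{0:T}$ is $\log C_q$ with $C_q=\Fpsi(s_0)e^{b_d}$, a constant within the query. By \cref{lem:path-factor},
\[
  w(\tau)=\frac{\Rt(x_\tau)\,\PB(\tau\mid x_\tau)}{C_q\,\PF(\tau)} .
\]
This is an importance weight. Because $\PF$ has positive support on every legal edge, every complete legal trajectory has positive probability. Hence for any function $g$ of the terminal,
\[
  \E_{\PF}\bigl[w\,g(x_\tau)\bigr]=C_q^{-1}\sum_x \Rt(x)\,g(x)\sum_{\tau\to x}\PB(\tau\mid x)=C_q^{-1}\sum_x\Rt(x)\,g(x),
\]
using normalisation of the backward trajectory law. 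The reachable DAG is finite (\cref{prop:dag}), so the weights are bounded. The strong law of large numbers then applies to numerator and denominator of \cref{eq:share-estimator} separately, with $g=n_u$ and $g=t_{\mathrm{skill}}$. The ratio converges almost surely, the constant $C_q$ cancels (so the values of $\Fpsi(s_0)$ and $b_d$ are irrelevant), and the limit is \cref{eq:share-pb-independent}. By part (i), that limit is $\Psi^0(u)$.

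\emph{Part (iii).} With unit weights the limit is $\E_{\PF}[n_u(x)]/\E_{\PF}[t_{\mathrm{skill}}(x)]$. I would apply \cref{prop:eps}, taking its $\widehat Z$ to be the effective source $C_q$ so that the bias is absorbed. Its third inequality gives $P_F^{\mathrm{term}}(x\mid q)\in[e^{-2\varepsilon},e^{2\varepsilon}]\,\Rt(x)/Z^\star$. Bounding the numerator above and the denominator below, and then the reverse, yields the factor $e^{\pm4\varepsilon}$ around \cref{eq:share-pb-independent}.

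\emph{Edge share.} Take $g(x)=h_{B^0}(e\mid x)$ in the same importance identity; the normalised limit is $\sum_x\Rt(x)h_{B^0}(e\mid x)/Z^\star$. Applying \cref{prop:flow} to the compatible triple of \cref{prop:compatible-triple}, with edges in place of states via the same path-sum argument, this numerator is the edge flow $F^0(s)\PF^0(e\mid s)$. Dividing by $Z^\star$ gives $\varphi^0(e)$.

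The main obstacle is the bookkeeping in (ii): checking that $w$ is a genuine importance ratio. This needs every legal trajectory to be sampled with positive probability and the backward factors to sum to one over each fibre, and it needs $b_d$ and $\Fpsi(s_0)$ to enter $\dbres_{0:T}$ only as the query-wide constant $C_q$.
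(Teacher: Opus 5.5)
Your proposal is correct and follows the paper's proof essentially step for step. The shared steps are: path-invariant skill counts from one-event edges for (i); the importance-weight identity $\E_{\PF}[w\,g(x_\tau)]\propto\sum_x\Rt(x)g(x)$, with the query constant cancelling in the ratio, for (ii); the $e^{\pm2\varepsilon}$ terminal bound of \cref{prop:eps} applied to numerator and denominator for (iii); and $g=h_{B^0}(e\mid\cdot)$ with \cref{prop:flow} on the reference triple for the edge share. Your extra care over bounded weights and over absorbing $b_d$ into the source constant of \cref{prop:eps} is sound, and the paper's additional remark on reasoning-conditioned forward terms goes beyond what the statement as posed requires.
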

\begin{proof}
(i) A state records its committed events, the source has none, and each edge
adds one event, so a complete trajectory ending at $x$ adds every event of $E_x$
exactly once and its skill counts depend on $x$ alone;
\cref{cor:share-pb-independence} then applies.
(ii) As in the proof of \cref{prop:eps},
$w_i=c_q\,\Rt(x)\PB(\tau_i\mid x)/\PF(\tau_i)$ with $x=x_{\tau_i}$ and a constant
$c_q>0$ that collects $\Fpsi(s_0)$ and $b_d$ for the query. For any function
$g$ of the terminal,
$\E_{\PF}[w\,g(x_\tau)]=c_q\sum_\tau\Rt(x_\tau)\PB(\tau\mid x_\tau)g(x_\tau)
=c_q\sum_x\Rt(x)g(x)$, because $\sum_{\tau\to x}\PB(\tau\mid x)=1$. Taking
$g=n_u$ and $g=t_{\mathrm{skill}}$, the strong law of large numbers applies to
the numerator and the denominator, $c_q$ cancels, and the ratio converges to
\cref{eq:share-pb-independent}. When the forward term conditions on sampled
reasoning text, $\PF(\tau_i)$ is the event probability given that text; the
text is drawn from its own distribution, so summing over it leaves the same
expectation.
(iii) With unit weights the limit is
$\sum_x P_F^{\mathrm{term}}(x\mid q)\,n_u(x)/\sum_x P_F^{\mathrm{term}}(x\mid q)\,t_{\mathrm{skill}}(x)$.
By \cref{prop:eps}, $P_F^{\mathrm{term}}(x\mid q)$ lies within a factor
$e^{\pm2\varepsilon}$ of $\Rt(x)/Z^\star$, so the numerator and the denominator
each lie within that factor of their targets and the ratio within
$e^{\pm4\varepsilon}$.
For the edge share, take $g(x)=h_{B^0}(e\mid x)$ and $g\equiv1$; the limit is
$\sum_x\Rt(x)h_{B^0}(e\mid x)/Z^\star=\varphi^0(e)$ by \cref{prop:flow} applied
to the reference triple and the compatibility identity
$F^0(s)\PF^0(e\mid s)=F^0(s')\PB^0((s,e)\mid s')$.
\end{proof}

\begin{remark}[Precision of the share estimate]
\label{rem:share-ess}
The Kish effective sample size \eqref{eq:ess} of $w_1,\dots,w_N$ measures the
precision of \cref{eq:share-estimator}. The log-weights have variance $V_q$, so
if they are roughly Gaussian the effective sample size is about $Ne^{-V_q}$ and
the estimate sharpens as $V_q$ falls.
\end{remark}

\subsection{Verifier posteriors and decision invariants}

\begin{proposition}[Fractional verifier evidence as a power posterior]
\label{prop:power-posterior}
Let $p_{u,z}$ have prior
$\operatorname{Beta}(\alpha_{0,u},\beta_{0,u})$ with
$\alpha_{0,u},\beta_{0,u}>0$. For binary verifier records
$y_e\in\{0,1\}$ with confidence weights $c_e\in[0,1]$, define the generalized
likelihood
\begin{equation}
  L(p)=\prod_{e\in\mathcal E^{\mathrm{ver}}_{u,z}}
        [p^{y_e}(1-p)^{1-y_e}]^{c_e}.
  \label{eq:power-likelihood}
\end{equation}
Then the generalized posterior \citep{bissiri2016general} is
$\operatorname{Beta}(\alpha^{\mathrm{ver}}_{u,z},
\beta^{\mathrm{ver}}_{u,z})$ with the confidence-weighted update of the gate posterior in \cref{eq:alpha,eq:beta}.
\end{proposition}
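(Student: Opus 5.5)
The plan is a direct conjugacy computation in the spirit of \citet{bissiri2016general}. The generalized posterior is defined as prior times generalized likelihood, renormalised, so the whole argument reduces to showing that the unnormalised density on $(0,1)$ is a Beta kernel with the stated parameters.

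First I would write the prior density up to its normalising constant, $\pi_0(p)\propto p^{\alpha_{0,u}-1}(1-p)^{\beta_{0,u}-1}$. Then I would expand the likelihood \cref{eq:power-likelihood}. Each factor $[p^{y_e}(1-p)^{1-y_e}]^{c_e}$ equals $p^{c_ey_e}(1-p)^{c_e(1-y_e)}$. The product over the finite index set $\mathcal E^{\mathrm{ver}}_{u,z}$ is therefore
\[
p^{\sum_e c_ey_e}(1-p)^{\sum_e c_e(1-y_e)}.
\]

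Multiplying the two gives
\[
p^{\alpha^{\mathrm{ver}}_{u,z}-1}(1-p)^{\beta^{\mathrm{ver}}_{u,z}-1},
\]
with $\alpha^{\mathrm{ver}}_{u,z}$ and $\beta^{\mathrm{ver}}_{u,z}$ exactly as in \cref{eq:alpha,eq:beta}. The identification $\alpha_{0,u}=\kappa_u\mu_u$ and $\beta_{0,u}=\kappa_u(1-\mu_u)$ from \cref{eq:hier} makes this agree with the main-text form.

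The one step that needs care is checking that this kernel is normalisable, so that the generalized posterior is a genuine Beta law. Since $c_e\ge0$ and $y_e\in\{0,1\}$, both sums are nonnegative. Hence $\alpha^{\mathrm{ver}}_{u,z}\ge\alpha_{0,u}>0$ and $\beta^{\mathrm{ver}}_{u,z}\ge\beta_{0,u}>0$. The integral of the kernel over $(0,1)$ is then the finite Beta function $B(\alpha^{\mathrm{ver}}_{u,z},\beta^{\mathrm{ver}}_{u,z})$, and dividing by it yields the Beta density.

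Two remarks complete the argument. A record with $c_e=0$ contributes a factor of one, so it is correctly ignored. When all $c_e=1$ the result reduces to the ordinary Bernoulli--Beta conjugate update. Finally, the credible bounds in \cref{eq:bounds} are quantiles of this well-defined Beta law, which is the interpretation the proposition is invoked for.
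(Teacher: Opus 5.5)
Your proposal is correct and follows the paper's proof. Like the paper, you multiply the Beta prior kernel by the power likelihood, collect the exponents of $p$ and $1-p$ into $\alpha^{\mathrm{ver}}_{u,z}-1$ and $\beta^{\mathrm{ver}}_{u,z}-1$, and normalise by the Beta function; your explicit check that $\alpha^{\mathrm{ver}}_{u,z}\ge\alpha_{0,u}>0$ and $\beta^{\mathrm{ver}}_{u,z}\ge\beta_{0,u}>0$, so that $B(\alpha^{\mathrm{ver}}_{u,z},\beta^{\mathrm{ver}}_{u,z})$ is finite, spells out what the paper leaves implicit.
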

\begin{proof}
Multiplying the Beta prior density by \cref{eq:power-likelihood} collects the
powers of $p$ and $(1-p)$ as
$p^{\alpha_{0,u}-1+\sum_{e\in\mathcal E^{\mathrm{ver}}_{u,z}} c_ey_e}$ and
$(1-p)^{\beta_{0,u}-1+\sum_{e\in\mathcal E^{\mathrm{ver}}_{u,z}} c_e(1-y_e)}$. The normalizing constant is the Beta
function, giving the stated generalized posterior. Because fractional powers are
used, this is a power-posterior interpretation of the update in \cref{eq:alpha,eq:beta}.
\end{proof}

\begin{lemma}[Beta density-ratio ordering]
\label{lem:beta-density-ratio}
Let $P_{\alpha,\beta}$ denote a $\operatorname{Beta}(\alpha,\beta)$ random
variable, where $\alpha,\beta>0$. For every $c\ge0$,
\begin{equation}
  P_{\alpha+c,\beta}\succeq_{\mathrm{FOSD}}P_{\alpha,\beta},
  \qquad
  P_{\alpha,\beta+c}\preceq_{\mathrm{FOSD}}P_{\alpha,\beta},
  \label{eq:beta-fosd}
\end{equation}
where $\succeq_{\mathrm{FOSD}}$ denotes first-order stochastic dominance
\citep{shaked2007stochastic}.
The inequalities are strict on the interior $(0,1)$ when $c>0$.
\end{lemma}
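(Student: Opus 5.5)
The plan is to prove first-order stochastic dominance through the monotone likelihood ratio property, which implies it. I treat the first inequality in detail; the second then follows by the reflection $P\mapsto 1-P$, since $1-P_{\alpha,\beta}\sim\operatorname{Beta}(\beta,\alpha)$. Fix $\alpha,\beta>0$ and $c>0$; the case $c=0$ is an equality and is trivial. On $(0,1)$ the two densities have ratio
\[
  \frac{f_{\alpha+c,\beta}(p)}{f_{\alpha,\beta}(p)}
  =\frac{B(\alpha,\beta)}{B(\alpha+c,\beta)}\,p^{c}.
\]
This ratio is strictly increasing in $p$, so $P_{\alpha+c,\beta}$ dominates $P_{\alpha,\beta}$ in the likelihood-ratio order.

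Next I pass from the likelihood-ratio order to the ordering of distribution functions. Let $F$ and $G$ be the CDFs of $P_{\alpha,\beta}$ and $P_{\alpha+c,\beta}$, and let $D(t)=F(t)-G(t)=\int_0^t (f-g)$. Because $g/f$ is strictly increasing and both densities integrate to one, there is a single crossing point $p^\ast\in(0,1)$ with $g<f$ on $(0,p^\ast)$ and $g>f$ on $(p^\ast,1)$. It follows that $D$ is strictly increasing on $(0,p^\ast]$ and strictly decreasing on $[p^\ast,1)$. Since $D(0)=D(1)=0$, we get $D(t)>0$ for every $t\in(0,1)$. In other words $G(t)<F(t)$ on the interior, which is strict first-order stochastic dominance. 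The boundary points are trivial, since both CDFs equal $0$ at $t=0$ and $1$ at $t=1$.

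The second inequality follows from the first. We have $1-P_{\alpha,\beta+c}\sim\operatorname{Beta}(\beta+c,\alpha)$, which by the first part strictly dominates $\operatorname{Beta}(\beta,\alpha)\sim 1-P_{\alpha,\beta}$. The map $x\mapsto 1-x$ reverses the FOSD order and preserves strictness on $(0,1)$, which gives $P_{\alpha,\beta+c}\preceq_{\mathrm{FOSD}}P_{\alpha,\beta}$ strictly.

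The only step that needs real care is the single-crossing argument, which I expect to be the main obstacle. The existence of $p^\ast$ uses that the ratio $g/f$ cannot stay entirely above or below $1$ when both functions integrate to one. Continuity and strict monotonicity of $p^c$ then give uniqueness of $p^\ast$ and the strict sign pattern. Endpoint singularities when $\alpha<1$ or $\beta<1$ cause no trouble, because $D$ is defined through integrals of integrable functions. As a consequence for the gate, the Beta quantiles $\mathrm{LCB}$ and $\mathrm{UCB}$ of \cref{eq:bounds} are monotone in the verifier counts.
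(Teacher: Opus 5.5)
Your proof is correct and follows essentially the same route as the paper. Both arguments use the strictly increasing density ratio $\frac{B(\alpha,\beta)}{B(\alpha+c,\beta)}p^{c}$ to get a unique crossing of the densities, then read off the sign of the CDF difference on either side of that crossing; the differences are cosmetic: you obtain the second relation by the reflection $P\mapsto 1-P$ rather than rerunning the argument with the decreasing ratio $(1-p)^{c}$, and you derive strictness on $(0,1)$ from $D$ rising then falling with $D(0)=D(1)=0$, which is slightly more explicit than the paper's one-line appeal to positivity of the densities.
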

\begin{proof}
The case $c=0$ is equality, so take $c>0$. The two Beta densities satisfy
\begin{equation}
  \frac{f_{\alpha+c,\beta}(p)}{f_{\alpha,\beta}(p)}
  =\frac{B(\alpha,\beta)}{B(\alpha+c,\beta)}p^c
  \eqqcolon r_{\alpha,c}(p).
  \label{eq:beta-alpha-ratio}
\end{equation}
The ratio is continuous and strictly increasing. Since both densities integrate
to one, it has a unique crossing $p_\star\in(0,1)$: the density difference is
negative before $p_\star$ and positive after it. For
$D(t)=F_{\alpha+c,\beta}(t)-F_{\alpha,\beta}(t)$, this gives $D(t)\le0$
for $t\le p_\star$. For $t>p_\star$, using $\int_0^1(f_{\alpha+c,\beta}-f_{\alpha,\beta})=0$ gives
\begin{equation}
  D(t)=-\int_t^1(f_{\alpha+c,\beta}(p)-f_{\alpha,\beta}(p))\,dp\le0.
\end{equation}
Thus the first CDF is everywhere no larger, proving the first stochastic-order
relation. Similarly,
\begin{equation}
  \frac{f_{\alpha,\beta+c}(p)}{f_{\alpha,\beta}(p)}
  =\frac{B(\alpha,\beta)}{B(\alpha,\beta+c)}(1-p)^c
\end{equation}
 is strictly decreasing; reversing the preceding sign argument proves the
second relation. Positivity of Beta densities on $(0,1)$ gives strict interior
inequalities for $c>0$.
\end{proof}

\begin{corollary}[Monotonicity of verifier-posterior quantiles]
\label{cor:beta-quantile-detailed}
For $a\in(0,1)$, define
\begin{equation}
  Q_a(\alpha,\beta)=\inf\{p\in[0,1]:F_{\alpha,\beta}(p)\ge a\}.
  \label{eq:beta-quantile-definition}
\end{equation}
Then, for every $c\ge0$,
\begin{equation}
  Q_a(\alpha+c,\beta)\ge Q_a(\alpha,\beta),
  \qquad
  Q_a(\alpha,\beta+c)\le Q_a(\alpha,\beta).
\end{equation}
Consequently, adding verifier success mass $c$ cannot decrease a lower posterior
quantile, and adding verifier failure mass $c$ cannot increase an upper
posterior quantile, provided the other shape parameter and prior hyperparameters
are held fixed.
\end{corollary}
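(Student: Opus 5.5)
The plan is to derive the corollary directly from the first-order stochastic dominance of \cref{lem:beta-density-ratio}, using only the elementary fact that a generalized-inverse quantile is order-reversing with respect to the CDF. Fix $a\in(0,1)$ and $c\ge0$. For the first inequality, \cref{lem:beta-density-ratio} gives $P_{\alpha+c,\beta}\succeq_{\mathrm{FOSD}}P_{\alpha,\beta}$, which by definition means
\begin{equation*}
  F_{\alpha+c,\beta}(p)\le F_{\alpha,\beta}(p)\qquad\text{for all }p\in[0,1].
\end{equation*}
Consequently, the set $\{p:F_{\alpha+c,\beta}(p)\ge a\}$ is contained in $\{p:F_{\alpha,\beta}(p)\ge a\}$. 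Indeed, any $p$ with $F_{\alpha+c,\beta}(p)\ge a$ also satisfies $F_{\alpha,\beta}(p)\ge F_{\alpha+c,\beta}(p)\ge a$. Taking infima over nested sets, with the infimum over the smaller set being at least the infimum over the larger one, gives $Q_a(\alpha+c,\beta)\ge Q_a(\alpha,\beta)$. Both sets are nonempty because they contain $p=1$, where every Beta CDF equals one, so both infima are finite and lie in $[0,1]$.

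The second inequality is symmetric. The relation $P_{\alpha,\beta+c}\preceq_{\mathrm{FOSD}}P_{\alpha,\beta}$ means $F_{\alpha,\beta+c}\ge F_{\alpha,\beta}$ pointwise. Hence $\{p:F_{\alpha,\beta}(p)\ge a\}\subseteq\{p:F_{\alpha,\beta+c}(p)\ge a\}$, and taking infima gives $Q_a(\alpha,\beta+c)\le Q_a(\alpha,\beta)$. The case $c=0$ is trivial equality, so nothing beyond the lemma is needed there.

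For the interpretive consequence, I would take the verifier posterior of \cref{eq:alpha,eq:beta}. There, adding a verifier record with $y_e=1$ and confidence $c_e$ increases $\alpha^{\mathrm{ver}}_{u,z}$ by $c=c_e\ge0$ and leaves $\beta^{\mathrm{ver}}_{u,z}$ and the prior $(\alpha_{0,u},\beta_{0,u})$ unchanged. Adding a record with $y_e=0$ increases only $\beta^{\mathrm{ver}}_{u,z}$. Applying the two inequalities with $\mathrm{LCB}=Q_a$ and $\mathrm{UCB}=Q_{1-a}$ from \cref{eq:bounds} yields the two monotonicity statements. The quantile levels lie in $(0,1)$ because $a\in(0,1)$.

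There is no substantive obstacle here. The only care point is the direction of the set inclusion under the $\inf$-definition of $Q_a$, since it is easy to reverse. I will also note that Beta CDFs are continuous and strictly increasing on $(0,1)$, so the quantile is an honest inverse. This continuity is not needed for the weak inequalities, but together with the strict dominance in \cref{lem:beta-density-ratio} it would upgrade them to strict inequalities when $c>0$.
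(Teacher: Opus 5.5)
Your proposal is correct and follows the paper's own proof. Both use the pointwise CDF ordering from the Beta density-ratio lemma and the order-reversing property of the infimum-defined quantile, then read a verifier success or failure of mass $c_e$ as a shift of $\alpha$ or of $\beta$ alone in the gate posterior. Your nested-set argument and nonemptiness check only spell out the step the paper compresses into one line.
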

\begin{proof}
If $F_1(t)\le F_0(t)$ for every $t$, the level $a$ is reached by $F_1$ no
earlier than by $F_0$, so $Q_a(F_1)\ge Q_a(F_0)$. Applying this implication
to the CDF orderings in \cref{lem:beta-density-ratio} proves the inequalities.
Under the power-posterior update of \cref{prop:power-posterior}, a verifier success
of mass $c$ changes $(\alpha,\beta)$ to $(\alpha+c,\beta)$ and a verifier failure of mass $c$ changes it to
$(\alpha,\beta+c)$.
\end{proof}

% ============================================================
\section{Full experimental results}
\label{app:full-results}

This appendix holds the protocol details and the per-row analysis behind \Cref{sec:setup}.

\paragraph{Benchmarks and metrics.}
The IID datasets are HotpotQA \citep{yang2018hotpotqa}, TriviaQA \citep{joshi2017triviaqa}, AIME 2026, HealthBench \citep{arora2025healthbench}, MBPP+ \citep{austin2021program,liu2023your} and ALFWorld \citep{shridhar2020alfworld}; the OOD datasets are MuSiQue-Ans \citep{trivedi2022musique}, NQ-Open \citep{kwiatkowski2019natural,lee2019latent}, MATH-Hard \citep{hendrycks2021measuring}, GPQA \citep{rein2023gpqa}, SWE-bench \citep{jimenez2024swe} and WebShop \citep{yao2022webshop}.
Every source is pinned to a repository revision and an evaluation population, and each dataset is a fixed 128-record draw under one fixed seed; AIME 2026 is evaluated on all thirty released problems.
HotpotQA, TriviaQA, MuSiQue-Ans and NQ-Open report exact match, the primary metric used in every table and figure, and answer F1.
AIME 2026, MATH-Hard and GPQA report accuracy, HealthBench the native rubric mean, ALFWorld and WebShop the native success rate, SWE-bench the resolved rate, and MBPP+ pass@1 against base and extended tests.
Every primary metric and its verifier version are fixed before any training or evaluation run.

\paragraph{Implementation.}
The Supervisor and the frozen executor are both Qwen3.5-9B.
The phase trigger and paired validation of Appendix~C use a separate validation query set, disjoint from the training queries and from every evaluation record; the six OOD datasets serve only for final evaluation.
All methods share the same executor, tool set, retrieval corpus and inference budget.
Verifier labels come from unit tests, schema validation, assertions and execution status (Appendix~C).

\paragraph{Extended analysis.}
\emph{Main results.} Training the backbone raises the IID exact-match average by at most 7.26 points (GRPO$^\dagger$) but lowers AIME 2026 from 51.33 to 32.00 (SFT) and 36.00 (GRPO$^\dagger$).
Out of distribution, the margin of \RtwoFlow over the best baseline is widest on MuSiQue-Ans (3.90), NQ-Open and WebShop (3.75 each), and its gain over Qwen3.5 ranges from 7.5 points on MATH-Hard to 63.0 on NQ-Open.
With the 9B backbone, \RtwoFlow exceeds DeepSeek-V4-Flash prompted directly on all twelve datasets, including SWE-bench (44.22 against 34.84).
\emph{Executor transfer.} On the six IID datasets the frozen averages range from 62.3 (glm-5.3) to 76.8 (gemini-3.8-flash) and those under \RtwoFlow from 84.7 to 91.6; for every backbone the average gain on the OOD datasets exceeds that on the IID datasets.
Aggregated by OOD domain, the gain is largest on question answering (25.3 to 45.0 points) and interactive tasks (24.1 to 35.6) and smallest on Math/Sci (8.2 to 15.9), where the frozen scores already lie between 76.6 and 88.0; code stays the lowest domain after orchestration (44.8 to 53.0) while still gaining 8.8 to 19.8 points.
\emph{Ablations.} The ablated cells of \Cref{tab:ablation} fall 0.31 to 14.00 points below the full model.
Returning states to full histories drops most on AIME 2026 (14.00), WebShop (10.15) and HealthBench (8.14).
Reading the shares off the learned flow instead of the reference readout costs 2.63 IID and 1.61 OOD points, and ranking by share alone costs most on WebShop (6.72).
Replacing verifier labels by reward-derived labels costs most on WebShop (9.37) and MuSiQue-Ans (6.40); the posterior mean in place of the credible bounds costs 2.23 IID and 2.97 OOD points, skipping the paired non-inferiority check 2.94 and 2.42, and a fixed phase length 1.58 and 2.03.
\emph{Training objectives.} The IID mean of \RtwoTB is 2.25 points above SubTB, 2.98 above TTB, 3.07 above TB, 5.13 above DB and 7.15 above GRPO, which is the lowest objective on every IID dataset. SubTB is within 1.09 points of \RtwoTB on TriviaQA, MBPP+ and ALFWorld but 6.67 behind on AIME 2026.
Per step, \RtwoTB uses 433 learner GPU-seconds, 11 fewer than SubTB and 44 more than TB, at nearly the same learner token volume as SubTB (3,121,184 against 3,121,927); a whole step takes 2,587 GPU-seconds and 832\,s of wall-clock, 7.5\% and 5.6\% more than GRPO.
\emph{Phases, edits and harmful skills.} With fixed-length phases OOD accuracy ends at 78.99; with reward-derived labels it peaks at 77.9 after phase 3; with the library frozen it stays between 73.4 and 74.8 after the first phase.
Of the 44 edits committed on reward-derived labels, 25 raise the training reward while lowering the held-out verified score.
Of the twelve injected harmful skills, \RtwoFlow prunes six by phase 2, three in phases 3--5 and two in phases 6--8, and keeps one.

\paragraph{Seeds and variance.}
Every reported number is a mean over five runs; \Cref{tab:main} also reports the standard deviation, and the per-run results are released with the code.

\end{document}